\newcommand{\totalPulls}{T}
\newcommand{\setofArms}{\mathcal{K}}
\newcommand{\competitiveArms}{\mathcal{C}}
\newcommand{\numArms}{K}
\newcommand{\arm}{k}
\newcommand{\slot}{t}
\newcommand{\pulls}{n}
\newcommand{\gap}{\Delta} 
\newcommand{\reward}{R}
\newcommand{\regret}{Reg}
\newcommand{\bestarm}{{k^*}}
\newcommand{\maxArm}{{k^{\text{max}}}}
\newcommand{\Index}{I}
\newcommand{\estimateMean}{\hat{\mu}}
\newcommand{\indicator}{\mathbbm{1}}
\newcommand{\empCompetitive}{$\hat{\Theta}_\slot$-Competitive\xspace}
\newcommand{\empNonCompetitive}{$\hat{\Theta}_\slot$-Non-competitive\xspace}
\def \OO {\mathrm{O}}
\def \oo {\mathrm{o}}
\newtheorem{lem}{Lemma}
\newtheorem{thm}{Theorem}
\newtheorem{defn}{Definition}
\newtheorem{prop}{Proposition}
\newtheorem{rem}{Remark}
\newtheorem{fact}{Fact}
\newcommand{\E}[1]{\mathbb{E}\left[{#1}\right]}
\DeclareMathOperator*{\argmax}{arg\,max}
\crefname{equation}{}{}
\Crefname{equation}{}{}
\crefname{thm}{theorem}{theorems}
\Crefname{thm}{Theorem}{Theorems}
\crefname{clm}{claim}{claims}
\Crefname{clm}{Claim}{Claims}
\Crefname{coro}{Corollary}{Corollaries}
\Crefname{lem}{Lemma}{Lemmas}
\Crefname{sec}{Section}{Sections}
\crefname{app}{appendix}{appendices}
\Crefname{app}{Appendix}{Appendices}
\crefname{prop}{proposition}{propositions}
\Crefname{prop}{Proposition}{Propositions}
\Crefname{propty}{Property}{Properties}
\crefname{figure}{figure}{figures}
\Crefname{figure}{Figure}{Figures}
\crefname{fig}{figure}{figures}
\Crefname{fig}{Figure}{Figures}
\crefname{defn}{definition}{definitions}
\Crefname{defn}{Definition}{Definitions}
\crefname{fact}{fact}{facts}
\Crefname{fact}{Fact}{Facts}
\crefname{appendix}{appendix}{appendices}
\Crefname{appendix}{Appendix}{Appendices}
\crefname{algo}{algorithm}{algorithms}
\Crefname{algo}{Algorithm}{Algorithms}
\crefname{algorithm}{algorithm}{algorithms}
\Crefname{algorithm}{Algorithm}{Algorithms}
\crefname{conj}{conjecture}{conjectures}
\Crefname{conj}{Conjecture}{Conjectures}
\crefname{obs}{observation}{observations}
\Crefname{obs}{Observation}{Observations}
\begin{document}

\title{A Unified Approach to Translate Classical Bandit Algorithms to the Structured Bandit Setting}

\author{\name Samarth Gupta \email samarthg@andrew.cmu.edu \\
 \addr Carnegie Mellon University\\
 Pittsburgh, PA 15213 
 \AND
 \name Shreyas Chaudhari \email schaudh2@andrew.cmu.edu \\
 \addr Carnegie Mellon University\\
 Pittsburgh, PA 15213 
 \AND
 \name Subhojyoti Mukherjee \email smukherjee27@wisc.edu \\
 \addr University of Wisconsin-Madison\\
 Madison, WI 53706
 \AND
 \name Gauri Joshi \email gaurij@andrew.cmu.edu \\
 \addr Carnegie Mellon University\\
 Pittsburgh, PA 15213 
 \AND
 \name Osman Ya\u{g}an \email oyagan@andrew.cmu.edu\\
 \addr Carnegie Mellon University\\
 Pittsburgh, PA 15213}

\editor{No editors}
\maketitle

\begin{abstract}

We consider a finite-armed structured bandit problem in which mean rewards of different arms are known functions of a common hidden parameter $\theta^*$. Since we do not place any restrictions on these functions, the problem setting subsumes several previously studied frameworks that assume linear or invertible reward functions. We propose a novel approach to gradually estimate the hidden $\theta^*$ and use the estimate together with the mean reward functions to substantially reduce exploration of sub-optimal arms. This approach enables us to fundamentally generalize any classical bandit algorithm including UCB and Thompson Sampling to the structured bandit setting. We prove via regret analysis that our proposed UCB-C algorithm (structured bandit versions of UCB) pulls only a {\em subset} of the sub-optimal arms $\OO(\log T)$ times while the other sub-optimal arms (referred to as \emph{non-competitive} arms) are pulled  $\OO(1)$ times. As a result, in cases where all sub-optimal arms are non-competitive, which can happen in many practical scenarios, the proposed algorithm achieves bounded regret. We also conduct simulations on the {\sc Movielens} recommendations dataset to demonstrate the improvement of the proposed algorithms over existing structured bandit algorithms.

\end{abstract}


\section{Introduction}
\label{sec:AltIntro}

\subsection{Overview}
The Multi-armed bandit problem \citep{lai1985asymptotically} (MAB) falls under the umbrella of sequential decision-making problems. It has numerous applications such as clinical trials \citep{villar2015multi}, system testing \citep{tekin2017multi}, scheduling in computing systems \citep{mora2009stochastic}, and web optimization \citep{white2012bandit, sen2017identifying}, to name a few. In the classical $K$-armed bandit formulation, a player is presented with $K$ arms. At each time step $t=1,2, \ldots$, she decides to pull an arm $\arm \in \setofArms$ and receives a random {\em reward} $R_{k}$ with unknown mean $\mu_\arm$. The goal of the player is to maximize their expected cumulative reward (or equivalently, minimize expected cumulative {\em regret}) over $T$ time steps. In order to do so, the player must strike a balance between estimating the unknown rewards {\em accurately} by pulling all the arms (exploration) and always pulling the current best arm (exploitation). The seminal work of Lai and Robins (1985) proposed the  Upper Confidence Bound (UCB) algorithm that balances the exploration-exploitation tradeoff in the MAB problem. Subsequently, several algorithms such as UCB1 \citep{auer2002finite}, Thompson Sampling (TS) \citep{agrawal2012analysis} and KL-UCB \citep{garivier2011kl} were proposed and analyzed for the classical MAB setting.

In this paper, we study a fundamental variant of classical multi-armed bandits called
the {\em structured multi-armed bandit problem},
where mean rewards of the arms are functions of a {\em hidden} parameter $\theta$. That is, the expected reward $\E{R_{k}|\theta} = \mu_k(\theta)$ of arm $k$ is a {\em known} function of the parameter $\theta$ that lies in a ({\em known}) set $\Theta$. However, the true value of $\theta$, denoted as $\theta^*$, is unknown. The dependence of mean rewards on the common parameter introduces a {\em structure} in the MAB problem. For example, rewards observed from an arm may provide partial information  about the mean rewards of other arms, 
making it possible to significantly lower the resulting cumulative regret as compared to the classical MAB setting.

Structured bandit models arise in many applications and have been studied by several authors with motivating applications including dynamic pricing (described in \cite{ata2015global}), cellular coverage optimization (by \cite{shen2018generalized}), drug dosage optimization (discussed in \cite{wang2018regional}) and system diagnosis; see Section \ref{subsec:illustrative} for an illustrative application of the structured MAB framework. In this paper,  we consider a  {\em general} version of the structured MAB framework that subsumes and generalizes several previously considered settings. More importantly, we propose a novel and unified approach that would allow extending any current or future MAB algorithm (UCB, TS, KL-UCB, etc.) to the structured setting; see Sections \ref{subsec:intro_contributions} and \ref{subsec:relatedwork} for our main contributions and a comparison of our work with related literature. 

\subsection{An Illustrative Example}
\label{subsec:illustrative}

\begin{figure}[t]
    \centering
    \includegraphics[width = 0.6\textwidth]{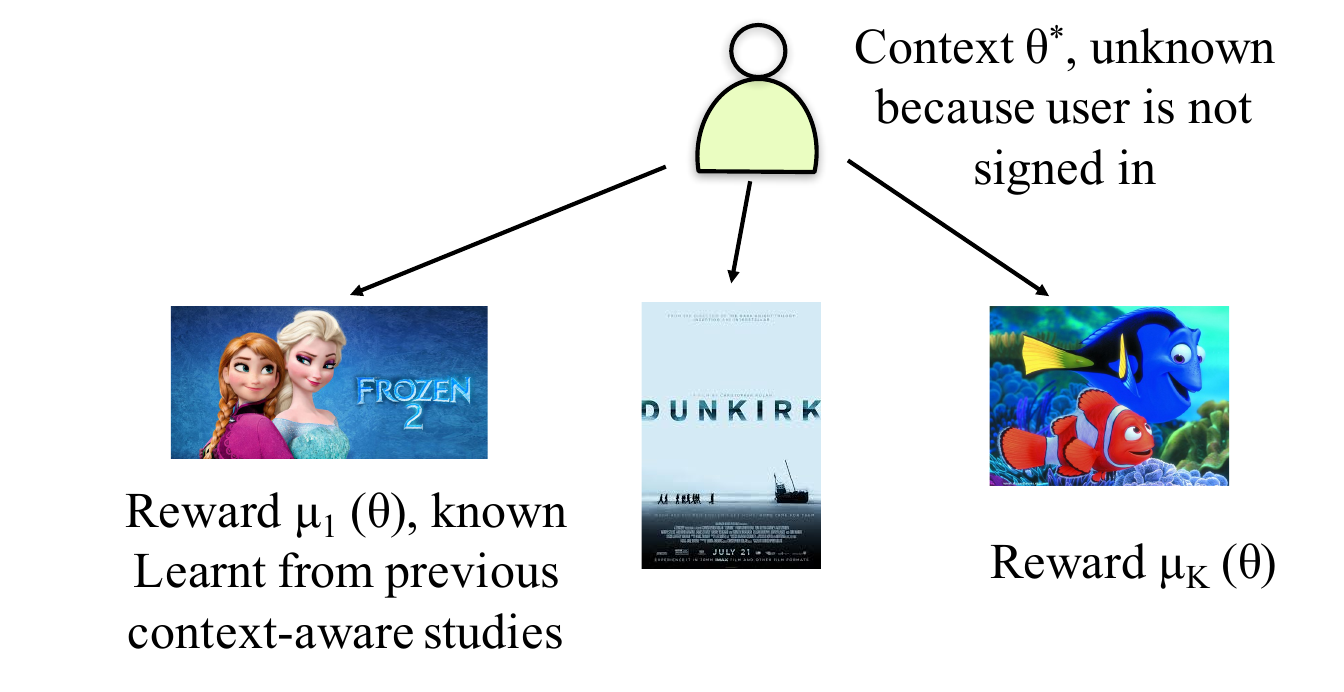}
    \caption{\sl Movie recommendation application of the structured bandit framework studied in this paper. The context $\theta$ (for example, the age of the user) is unknown because the user is not signed in. But if a user gives a high rating the first movie (Frozen) one could infer that the age $\theta$ is small, which in turn implies that the user will give a high rating to the third movie (Finding Nemo).}
    \label{fig:movie_reco_eg}
\end{figure}

For illustration purposes, consider the example of movie recommendation, where a company would like to decide which movie(s) to recommend to each user with the
goal of maximizing user engagement (e.g., in terms of click probability and time spent watching, etc.). In order to achieve this, the company needs to identify the most appealing movie for the user in an online manner and this is where multi-armed bandit algorithms can be helpful. However, classical MAB algorithms  are typically based on the (implicit) assumption that rewards from different arms (i.e., different movies in this context) are independent of each other. This assumption is unlikely to hold in reality since the user choices corresponding to different movies are likely to be related to each other; e.g., the engagement  corresponding to different movies may depend on the age/occupation/income/taste of the user.

To address this, {\em contextual} bandits \citep{li2010contextual, sen2018contextual} have been proposed and studied widely for personalized recommendations. There, it is assumed that before making a choice (of which movie to recommend), a
 {\em context} feature 
of the user is observed; the context can include personal information of the user including age, occupation, income, or previous browsing information. 
Contextual bandit algorithms aim to learn the mapping from context information to the most appealing arm, and 
can prove useful in applications involving personalized recommendations (or, advertising). 
However in several use cases, observing contextual features leads to privacy concerns. In addition, the contextual features may not be visible for {\em new} users or users who are signed in anonymously to protect their identity.

The structured bandit setting considered in this paper (and by many others \cite{ata2015global,shen2018generalized,wang2018regional}) 
can be viewed as 
the same problem setting with contextual bandits with the following difference. Unlike contextual bandits, the context of the users are
{\em hidden} in the structured setting, but in return it is assumed that the mappings from the contexts to {\em mean} rewards of arms are known a priori. 
It is anticipated that the mean reward mappings can be learned from paid surveys in which users participate with their consent.  
The proposed structured bandit framework's goal is to use this information to provide the best recommendation to an anonymous user whose context vector $\theta$ is unknown;
e.g., see \Cref{fig:movie_reco_eg}.
\emph{Thus, our problem formulation is complementary to contextual bandits; in contextual bandits the context $\theta$ is known while the reward mappings $\mu_k(\theta)$ are unknown, whereas in our setting $\theta$ is unknown and the mean rewards $\mu_k(\theta)$ are known.} A detailed problem formulation discussing the assumptions and extensions of this set-up is given in \Cref{sec:model}.

\subsection{Main Contributions and Organization}
\label{subsec:intro_contributions}
We summarize the key contributions of the paper below. The upcoming sections will develop each of these in detail.

\begin{enumerate}[wide, labelwidth=!, labelindent=0pt]
\item \textbf{General Setting Subsuming Previous Works}: Structured bandits have been studied in the past \citep{ata2015global,wang2018regional,mersereau2009structured,filippi2010parametric,lattimore2016end, graves1997asymptotically}
but with certain restrictions (e.g., being linear, invertible, etc.)  on the mean reward mappings $\mu_k(\theta)$.
We consider a general setting that puts no restrictions on the mean reward mappings. In fact, our setting subsumes recently studied models such as Global Bandits \citep{ata2015global}, Regional Bandits \citep{wang2018regional} and structured bandits with linear rewards \citep{mersereau2009structured}; see \Cref{sec:priorWork} for a detailed comparison with previous works. 

There are a couple of recent works \citep{lattimore2014bounded,combes2017minimal} that do consider a general structured bandit setting similar to our work  -- see \Cref{sec:priorWork} for details. Our approach differs from these in its flexibility to extend any classical bandit algorithm (UCB, Thompson sampling, etc.) to the structured bandit setting. In particular, using Thompson sampling \citep{thompson1933likelihood, agrawal2013thompson} as the underlying bandit algorithm yields a robust and empirically superior way (see \Cref{subsec:sim}) to minimize superfluous exploration. The  UCB-S algorithm proposed in \citep{lattimore2014bounded} extends the UCB algorithm to structured setting. However, the approach presented in  
\citep{lattimore2014bounded}
can not be extended to Thompson sampling or other classical bandit algorithms; in fact, this point was highlighted in \citep{lattimore2014bounded} as an open question for future work. In \citep{combes2017minimal}, there are several assumptions in the model that are not imposed here, including the assumption that the conditional reward distributions are known and reward mappings are continuous. In addition, the main focus of \citep{combes2017minimal} is the parameter regime where regret scales logarithmically with time $T$, while our approach demonstrates the possibility of achieving {\em bounded} regret.

\item \textbf{Extending any classical bandit algorithm to the structured bandit setting}: We propose a novel and unified approach that would allow extending any classical or future MAB algorithm (that is developed for  the non-structured setting) to the structured bandit framework
given in \Cref{fig:problem_setup}.
Put differently, we propose a {\em class} of structured bandit algorithms referred to as \textsc{Algorithm}-C, where ``\textsc{Algorithm}" can be any classical bandit algorithm including UCB, TS, KL-UCB, etc. 
A detailed description of the
resulting algorithms, e.g., UCB-C, TS-C, etc.,
are given in \Cref{sec:algorithmC} with their steps  illustrated in \Cref{fig:stepbystep}. 

\begin{figure}[t]
    \centering
    \includegraphics[width = 0.6\textwidth]{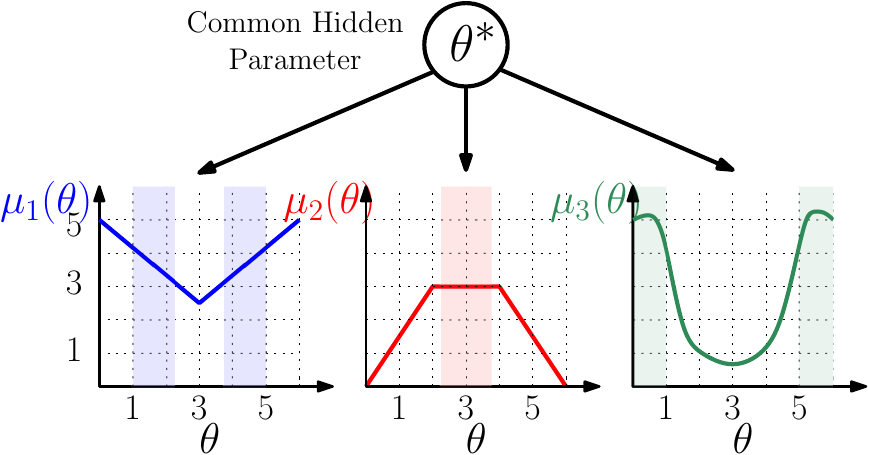}
    \caption{\sl Structured bandit setup: mean rewards of different arms share a common hidden parameter. This example illustrates a 3-armed bandit problem with shaded regions indicating the values of $\theta$ for which the particular arm is optimal.} 
    \label{fig:problem_setup}
\end{figure}

\item \textbf{Unified regret analysis}: 
A key benefit of our algorithms is that they pull a subset of the arms (referred to as the {\em non-competitive} arms) only $\OO(1)$ times.
Intuitively, an arm is non-competitive if it can be identified as sub-optimal with high probability using only the samples from the optimal arm $k^*$.
This is in contrast to classical MAB algorithms where all sub-optimal arms are pulled $\OO(\log T)$ times, where $T$ is the total number of rounds. This is shown by analyzing the expected regret $\E{Reg(\totalPulls)}$, which is the difference between the expected cumulative reward obtained by using the proposed algorithm and the expected cumulative reward of a genie policy that always pulls the optimal arm $k^*$. 
In particular, we provide rigorous regret analysis for UCB-C as summarized in the theorem below, and describe how our proof technique can be extended to other classical MAB algorithms. 
\begin{thm}[Expected Regret Scaling]
The expected regret of the UCB-C algorithm has the following scaling with respect to the number of rounds $T$:
\begin{align}
\E{Reg(\totalPulls)} &\leq (C(\theta^*)-1) \cdot \OO(\log \totalPulls) + (\numArms - C(\theta^*)) \OO(1) \label{eqn:regret_bnd}
\end{align}
where $C(\theta^*)$ is the number of competitive arms (including the optimal arm $k^*$) and $\theta^*$ is the true value of the hidden parameter. The remaining $K-C(\theta^*)$ arms are called non-competitive. An arm is said to be non-competitive if there exists an $\epsilon > 0$ such that $\mu_k(\theta) < \mu_{k^*}(\theta)$ for all $\theta \in \Theta^{*(\epsilon)}$, where $\Theta^{*(\epsilon)} = \{ \theta \in \Theta: |\mu_{k^*}(\theta) - \mu_{k^*}(\theta^*)| \leq \epsilon\}$ (more details in \Cref{sec:regret}). 
\label{thm:regret_bnd}
\end{thm}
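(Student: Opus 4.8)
The plan is to start from the standard regret decomposition $\E{Reg(\totalPulls)} = \sum_{\arm \neq \bestarm} \gap_\arm \, \E{\pulls_\arm(\totalPulls)}$, where $\gap_\arm = \mu_{\bestarm}(\theta^*) - \mu_\arm(\theta^*)$ is the sub-optimality gap and $\pulls_\arm(\totalPulls)$ is the number of times arm $\arm$ is pulled in $\totalPulls$ rounds. Since the gaps are bounded, it suffices to bound $\E{\pulls_\arm(\totalPulls)}$ separately for competitive and non-competitive sub-optimal arms, proving an $\OO(\log \totalPulls)$ bound for the former and an $\OO(1)$ bound for the latter. Summing the $C(\theta^*)-1$ competitive contributions and the $\numArms - C(\theta^*)$ non-competitive contributions then yields \eqref{eqn:regret_bnd}.

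For a competitive sub-optimal arm, my plan is to show that the ``competitive test'' layer of \textsc{Algorithm}-C can only remove pulls relative to the underlying classical algorithm, never add them. Whenever \textsc{Algorithm}-C pulls arm $\arm$, arm $\arm$ must both survive the test and carry the largest index among the surviving arms; since $\bestarm$ is always competitive, this in particular forces $\Index_\arm \geq \Index_{\bestarm}$. Hence the pull count of a competitive arm is dominated by the number of rounds its base UCB/TS index beats that of $\bestarm$, which is the familiar $\OO(\log \totalPulls)$. The only role of competitiveness here is to certify that the structural filter does not prematurely rule these arms out, so no improvement beyond $\OO(\log \totalPulls)$ is claimed for them.

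The heart of the argument is the $\OO(1)$ bound for a non-competitive arm $\arm$. I would write $\E{\pulls_\arm(\totalPulls)} = \sum_{\slot=1}^{\totalPulls} \Pr[\text{arm } \arm \text{ is pulled at round } \slot]$ and condition on whether the empirical mean $\estimateMean_{\bestarm}(\slot)$ of the optimal arm is within $\epsilon$ of $\mu_{\bestarm}(\theta^*)$, i.e.\ whether the induced point estimate $\hat{\theta}_\slot$ lies in $\Theta^{*(\epsilon)}$. On the accurate event $\hat{\theta}_\slot \in \Theta^{*(\epsilon)}$, the defining property of non-competitiveness gives $\mu_\arm(\hat{\theta}_\slot) < \mu_{\bestarm}(\hat{\theta}_\slot)$, so arm $\arm$ is declared \empNonCompetitive and is skipped; thus a pull of $\arm$ can occur only on the ``bad estimate'' event $\{\hat{\theta}_\slot \notin \Theta^{*(\epsilon)}\}$. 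Because this event forces $|\estimateMean_{\bestarm}(\slot) - \mu_{\bestarm}(\theta^*)| > \epsilon$, a Hoeffding-type bound makes its probability decay exponentially in the number of pulls $\pulls_{\bestarm}(\slot)$ of the optimal arm, and I would argue the resulting series over $\slot$ is summable, giving $\E{\pulls_\arm(\totalPulls)} = \OO(1)$.

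The main obstacle is the circular dependency hiding in this last step: the tail bound on $\{\hat{\theta}_\slot \notin \Theta^{*(\epsilon)}\}$ is only useful once the optimal arm has been pulled $\Omega(\slot)$ times by round $\slot$, yet a linear lower bound on $\pulls_{\bestarm}(\slot)$ is precisely the kind of statement that presupposes the sub-optimal arms are rarely pulled. I would break this loop by defining a \emph{favorable} event on which every empirical mean is simultaneously close to its true value; on this event the empirically best arm equals $\bestarm$ and all non-competitive arms are correctly filtered, so $\bestarm$ accumulates a linear number of pulls, while the probability of the unfavorable complement is controlled by a union of concentration bounds and shown to contribute only $\OO(1)$ pulls. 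Making this bookkeeping rigorous — and adapting it to the two distinct ways UCB-C and TS-C form $\hat{\theta}_\slot$ and their indices — is the technically delicate part that I expect the authors' proof to handle with care.
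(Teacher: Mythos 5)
Your skeleton matches the paper's: the same regret decomposition, the same split into competitive arms (bounded by the classical $\OO(\log T)$ analysis of $\Index_\arm > \Index_{\bestarm}$, after controlling the event that $\bestarm$ itself gets filtered) and non-competitive arms (bounded by $\OO(1)$ because they are filtered once the optimal arm's empirical mean concentrates). You also correctly identify the central difficulty — the circular dependency between filtering non-competitive arms and having pulled $\bestarm$ linearly often. However, your proposed resolution of that circularity does not work as stated, and the missing piece is precisely the paper's key technical lemma. Your ``favorable event'' of all empirical means being simultaneously accurate does not imply that non-competitive arms are filtered: filtering arm $\arm$ requires $\hat{\Theta}_\slot \subseteq \Theta^{*(\epsilon_\arm)}$, which needs the confidence interval contributed by $\bestarm$ to have width at most $\epsilon_\arm/2$, i.e.\ $\pulls_{\bestarm}(\slot) \gtrsim \log \slot / \epsilon_\arm^2$ — a statement about pull counts, not about the accuracy of the empirical means. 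So conditioning on accuracy alone just restates the circularity. Nor does the obvious patch (each sub-optimal arm is pulled $\OO(\log t)$ times in expectation, hence $\pulls_{\bestarm}(t) \geq t - \OO(K\log t)$ in expectation, then Markov) suffice: Markov gives $\Pr(\pulls_{\bestarm}(\slot) < \slot/\numArms) = \OO(\log \slot/\slot)$, whose sum over $\slot$ is $\OO(\log^2 T)$, not $\OO(1)$.

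The paper breaks the loop differently: by pigeonhole, at every round some arm has been pulled at least $\slot/\numArms$ times, so one splits on whether that arm is $\bestarm$ or a sub-optimal arm. If it is $\bestarm$, the non-competitive arm is filtered with probability $1 - \OO(\slot^{1-\alpha})$ (\Cref{lem:maxIsBest}, essentially your Hoeffding step). If it is a sub-optimal arm $\arm'$, the paper shows $\Pr(\pulls_{\arm'}(\slot) \geq \slot/\numArms) \leq 6\numArms^2(\slot/\numArms)^{2-\alpha}$ (\Cref{lem:cantBeMax3}), proven via the intermediate fact that an arm already pulled $\geq \slot/(2\numArms)$ times has an index concentrated near its sub-optimal mean and hence loses to $\Index_{\bestarm}$ except with probability $\OO(\slot^{1-\alpha})$ (\Cref{lem:EnoughPullsAlready3}), followed by a telescoping sum. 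Both tails are summable for $\alpha > 3$, giving the $\OO(1)$ bound. Without this max-pull-count dichotomy and the polynomial tail bound on a sub-optimal arm being the most-pulled arm, your argument has no way to certify that $\pulls_{\bestarm}(\slot) = \Omega(\slot)$ with summable failure probability. A secondary, easily repaired slip: $\bestarm$ is not \emph{always} $\hat{\Theta}_\slot$-competitive — it can be filtered when $\theta^* \notin \hat{\Theta}_\slot$ — so the competitive-arm bound also needs the event $E_1(\slot)$ with $\Pr(E_1(\slot)) \leq 2\numArms\slot^{1-\alpha}$ (\Cref{lem:elimination3}) added to the union.
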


The exact regret upper bound with all the constants follows from \Cref{thm:nonComp} and \Cref{thm:CompUCB} in \Cref{sec:regret}. Recall that for the standard MAB setting \citep{lai1985asymptotically}, the regret upper bound is $(K-1)\OO(\log T)$, where $K$ is the total number of arms. \Cref{thm:regret_bnd} reveals that with our algorithms only $C(\theta^*) - 1$ out of the $K-1$ sub-optimal arms are pulled $\OO(\log T)$ times. The other arms are pulled only $\OO(1)$ times. 

\item \textbf{Reduction in the effective number of arms.}
{For 
any given set of reward functions $\mu_k(\theta)$,
the number $C(\theta^*)$ of competitive arms depends on the {\em unknown} parameter $\theta^*$;  see \Cref{fig:C_theta_illustration} in \Cref{sec:regret} for an illustration of this fact. We show that $C(\theta^*)$ can be much smaller than $K$ in many practical cases. This is because, the reward functions (particularly that corresponding to the {\em optimal} arm) can provide enough information about the hidden $\theta^*$, which in turn can help infer the sub-optimality of several other arms. More specifically, this happens when the reward functions are not flat around $\theta^*$, that is, the pre-image set of \{$\theta \in \Theta: \mu_k(\theta) =\mu_k(\theta^*) \}$ is small. In the special case where the optimal arm $\bestarm$ is {\em invertible} or has a unique maximum at $\mu_{k^*}(\theta^*)$, $C(\theta^*) = 1$ and our algorithms can achieve $\OO(1)$ regret.} 

\item \textbf{Evaluation on real-world datasets}: In \Cref{sec:regret}, we present extensive simulations comparing the regret of the proposed algorithm with previous methods such as GLM-UCB \citep{filippi2010parametric} and UCB-S \citep{lattimore2014bounded}. We also present simulation results for the case where the hidden parameter $\theta$ is a {\em vector}. In \Cref{sec:experiments}, we perform experiments on the {\sc Movielens} dataset to demonstrate the applicability of the UCB-C and TS-C algorithms. Our experimental results show that both UCB-C and TS-C lead to significant improvement over the performance of existing bandit strategies. In particular, TS-C is shown to consistently outperform all other algorithms across a wide range of settings. 

\end{enumerate}

\section{Problem Formulation}
\label{sec:model}
Consider a multi-armed bandit setting with the set of arms $\mathcal{K} = \{1,2, \ldots, \numArms\}$. 
At each round $\slot$, the player pulls arm $\arm_\slot \in \mathcal{K}$ and observes a reward $\reward_{k_\slot}$. The reward $\reward_{k_\slot}$ is a random variable with mean $\mu_{\arm_\slot}(\theta) = \E{\reward_{k_\slot} | \theta, \arm_{\slot}}$, where $\theta$ is a \emph{fixed, but unknown parameter} which lies in a known set $\Theta$; see \Cref{fig:problem_setup}. 

We denote the (unknown) true value of  $\theta$ by $\theta^*$. There are no restrictions on the set $\Theta$. Although we focus on scalar $\theta$ in this paper for brevity, the proposed algorithms and regret analysis can be  generalized to the case where we have a hidden parameter {\em vector} $\vec{\theta} = [\theta_1 , \theta_2, \dots \theta_m]$. In \Cref{sec:regret}, we present simulation results for the case of a parameter vector $\mathbf{\theta}$. The mean reward functions $\mu_k(\theta) = \mathbb{E}[R_{k}|\theta]$ for $k \in \mathcal{K}$ can be arbitrary  functions of $\theta$
with no linearity or continuity constraints imposed.
While $\mu_k(\theta)$ are known to the player, the conditional distribution of rewards, i.e., $p(\reward_{k}|\theta)$ is not known.

We assume that the rewards $R_{\arm}$ are sub-Gaussian with variance proxy $\sigma^2$, i.e., \\ $\E{\exp\left(s\left(R_{k} - \E{R_{k}}\right)\right)} \leq \exp\left(\frac{\sigma^2 s^2}{2}\right)$ $\forall{s \in \mathbb{R}}$, and $\sigma$ is known to the player. Both assumptions are common in the multi-armed bandit literature \citep{bubeck2012regret, jamieson2014best, ata2015global, lattimore2014bounded}.
In particular, the sub-Gaussianity of rewards enables us to apply Hoeffding's inequality, which is essential for the analysis of regret (defined below).

The objective of the player is to select arm $\arm_\slot$ in round $t$ so as to maximize her cumulative reward $\sum_{\slot=1}^{T} \reward_{k_\slot}$ after $T$ rounds. If the player had known the hidden $\theta^*$, then she would always pull arm $\bestarm = \arg \max_{\arm \in \setofArms} \mu_\arm(\theta^*)$ that yields the highest mean reward at $\theta = \theta^*$. We refer to $\bestarm$
as the {\em optimal} arm. Maximizing the cumulative reward is equivalent to minimizing the \emph{cumulative regret}, which is defined as
\begin{align*}
    \regret(\totalPulls) \triangleq \sum_{\slot = 1}^{\totalPulls} \left(\mu_{\bestarm}(\theta^*) - \mu_{\arm_\slot}(\theta^*) \right)
    &= \sum_{\arm \neq \bestarm} \pulls_\arm(\totalPulls) \gap_{\arm},
\end{align*}
where $\pulls_\arm(\totalPulls)$ is the number of times arm $\arm$ is pulled in $\totalPulls$ slots and  $\gap_\arm \triangleq \mu_{\bestarm}(\theta^*) - \mu_{\arm}(\theta^*)$ is the {\em sub-optimality gap} of arm $\arm$. 
Minimizing the cumulative regret is in turn equivalent to minimizing $\pulls_\arm(\totalPulls)$, the number of times each sub-optimal arm $\arm \neq \arm^*$ is pulled. 

\begin{rem}[Connection to classical Multi-armed Bandits] The classical multi-armed bandit setting, which does not explicitly consider a {\em structure} among the mean rewards of different arms, is a special case of the proposed structured bandit framework. It corresponds to having a hidden parameter vector $\vec{\theta} = (\theta_1, \theta_2, \ldots, \theta_\numArms)$ and the mean reward of each arm being $\mu_\arm = \theta_\arm$. In fact, our proposed algorithm described in \Cref{sec:algorithmC} reduces to standard UCB or Thompson sampling (\cite{lai1985asymptotically, auer2002finite}) in this special case.
\end{rem}

The proposed structured bandit subsumes several previously considered models where the rewards are assumed to be linear \citep{mersereau2009structured, lattimore2016end}, invertible and H\"{o}lder continuous \citep{ata2015global, wang2018regional}, etc. See \Cref{sec:priorWork} for a detailed comparison with these works.

\section{Related Work}
\label{sec:priorWork}
Since we do not make any assumptions on the mean reward functions $\mu_1(\theta), \mu_2(\theta), \ldots, \mu_\numArms(\theta)$, our model subsumes several previously studied frameworks \citep{ata2015global,wang2018regional,mersereau2009structured}. The similarities and differences between our model and existing works are discussed below.

\label{subsec:relatedwork}
\noindent

\vspace{0.1cm} \noindent \textbf{Structured bandits with linear rewards \citep{mersereau2009structured}.} In \citep{mersereau2009structured}, the authors consider a similar model with a common hidden parameter $\theta \in \mathbb{R}$, but the mean reward functions, $\mu_k(\theta)$ are linear in $\theta$. Under this assumption, they design a greedy policy that achieves bounded regret. Our formulation  does not make linearity assumptions on the reward functions. In the special case when $\mu_k(\theta)$ are linear, our proposed algorithm also achieves bounded regret.

\noindent
\textbf{Global and regional bandits.} The papers \citep{ata2015global, wang2018regional} generalize this to invertible and H\"{o}lder-continuous reward functions. Instead of scalar $\theta$, \citep{wang2018regional} considers $M$ common unknown parameters, that is, $\theta = (\theta_1, \theta_2, \ldots, \theta_M)$. Under these assumptions, \citep{ ata2015global, wang2018regional} demonstrate that it is possible to achieve bounded regret through a greedy policy. In contrast, our work makes no invertibility or continuity assumptions on the reward functions $\mu_k(\theta)$. In the special case when $\mu_k(\theta)$ are invertible, our proposed algorithm also achieves bounded regret.

\vspace{0.1cm} \noindent
\textbf{Finite-armed generalized linear bandits.}
In the finite-armed linear bandit setting \citep{lattimore2016end}, the reward function of arm $x_k$ is $\vec{\theta}^\intercal x_k$, which is subsumed by our formulation. For the case when $\mu_\arm(\theta) = g(\vec{\theta}^\intercal x_k)$, our setting becomes the generalized linear bandit setting \citep{filippi2010parametric}, for some known function $g$. Here, $\theta$ is the shared unknown parameter.
Due to the particular form of the mean reward functions, linear bandit algorithms construct confidence ellipsoid for $\theta^*$ to make decisions. This approach cannot be easily extended to non-linear settings. Although designed for the more general non-linear setting, our algorithms demonstrate comparable regret to the GLM-UCB \citep{filippi2010parametric}, which is designed for linear bandits.  

\vspace{0.1cm} \noindent
\textbf{Minimal exploration in structured bandits \citep{combes2017minimal}}
The problem formulation in \citep{combes2017minimal} is very similar to this paper. However, \citep{combes2017minimal} assumes knowledge of the conditional reward distribution $p(R_{k}|\theta)$ in addition to knowing the mean reward functions $\mu_k(\theta)$. It also assumes that the mappings $\theta \rightarrow \mu_\arm(\theta)$ are continuous.
As noted before, none of these assumptions are imposed in this paper. Another major difference of \citep{combes2017minimal} with our work is that they focus on obtaining asymptotically optimal results for the regimes where regret scales as $\log(T)$. When all arms are \textit{non-competitive} (the case where our algorithms lead to $\OO(1)$ regret), the solution to the optimization problem described in \citep[Theorem 1]{combes2017minimal} becomes $0$, causing
the algorithm to get stuck in the exploitation phase. Put differently, the algorithm proposed in \citep{combes2017minimal} is not applicable to cases where $C(\theta^*)=1$. An important contribution of \citep{combes2017minimal} is that it provides a lower bound on the regret of structured bandit algorithms. In fact, the lower bound presented in this paper is directly based on the lower bound in \citep{combes2017minimal}.

\noindent
\textbf{Finite-armed structured bandits \citep{lattimore2014bounded}.} The work closest to ours is \citep{lattimore2014bounded}. They consider the same model that we consider and propose the UCB-S algorithm, which is a UCB-style algorithm for this setting. Our approach allows us to extend our UCB-style algorithm to other classical bandit algorithms such as Thompson sampling. In \Cref{sec:regret} and \Cref{sec:experiments}, we extensively compare our proposed algorithms (both qualitatively and empirically) with the UCB-S algorithm proposed in \citep{lattimore2014bounded}. As observed in the simulations, UCB-S is susceptible to small changes in the mean reward functions and $\theta^*$, whereas the UCB-C algorithm that we propose here is seen to be much more robust to such variations. 

\vspace{0.1cm} 
\noindent \textbf{Connection to information-directed sampling.} Works such as \citep{russo2014learning, gopalan2014thompson} consider a similar structured setting but assume that the conditional reward distributions $p(R_k | \theta)$ and the prior $p(\theta)$ are known, whereas we only consider that the {\em mean} reward functions $\mu_k(\theta) = \E{R_k| \theta}$ are known. The proposed algorithms are based on Thompson sampling from the posterior distribution of $\theta$. Firstly, this approach will require a good prior over $\theta$, and secondly, updating the posterior can be computationally expensive since it requires computing integrals over possibly high-dimensional spaces. The focus of \citep{russo2014learning} is on {\em worst-case} regret bounds (which are typically $\OO(\sqrt{T})$), where the minimum gap between two arms can scale as $O(\log T/T)$, while \citep{gopalan2014thompson} gives gap-dependent regret bounds in regimes where the regret scales as $\OO(\log{T})$. In addition to providing gap-dependent regret bounds, we also identify regimes where it is possible to achieve $\OO(1)$ regret. 

\vspace{0.1cm} \noindent \textbf{Best-arm identification.} In several applications such as as hyper-parameter optimization \citep{li2017hyperband} and crowd-sourced ranking \citep{tanczos2017kl, jamieson2015next, heckel2016active}, the objective is to maximize the probability of identifying the  arm with the highest expected reward within a given time budget of $T$ slots instead of maximizing the cumulative reward; that is, the focus is on exploration rather than exploitation. Best-arm identification started to be studied fairly recently \citep{bubeck2009pure,audibert2010best,jamieson2014lil}. A variant of the fixed-time budget setting is the fixed-confidence setting \citep{jamieson2014best, kaufmann2016complexity, mannor2004sample, garivier2016optimal, gabillon2012best}, where the aim is to minimize the number of slots required to reach a $\delta$-error in identifying the best arm. Very few best-arm identification works consider structured rewards \citep{sen2017identifying, soare2014best, huang2017structured, tao2018best}, and they mostly assume {\em linear} rewards. The algorithm design and analysis tools are quite different in the best-armed bandit identification problem as compared to regret minimization. Thus, extending our approach to best-arm identification would be a non-trivial future research direction.

\section{PROPOSED ALGORITHM: \textsc{Algorithm}-C}
\label{sec:algorithmC}
For the problem formulation described in \Cref{sec:model}, we propose the following three-step algorithm called \textsc{Algorithm}-Competitive, or, in short, \textsc{Algorithm}-C. \Cref{fig:stepbystep} illustrates the algorithm steps for the mean reward functions shown in \Cref{fig:problem_setup}. Step 3 can employ any classical multi-armed bandit algorithm such as UCB or Thompson Sampling (TS), which we denote by \textsc{Algorithm}. Thus, we give a unified approach to translate any classical bandit algorithm to the structured bandit setting. The formal description of \textsc{Algorithm}-C with UCB and TS as final steps is given in \Cref{alg:formalAlgoUCBC} and \Cref{alg:formalAlgoTSC}, respectively.

\begin{figure*}[t]
    \centering
    \includegraphics[width = 0.9\textwidth]{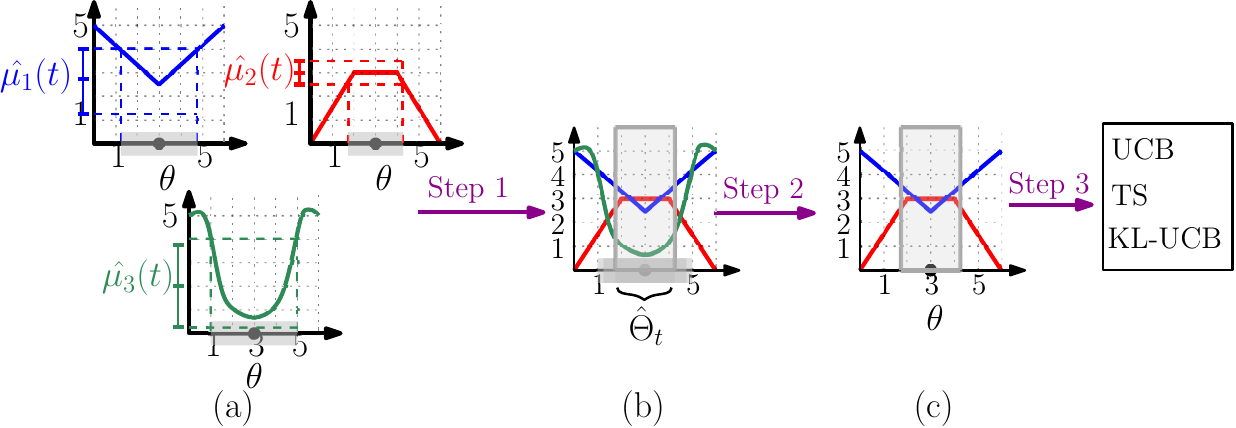}
    \caption{ Illustration of the steps of the proposed algorithm. In step 1, for each arm $\arm$ we find the set of $\theta$ such that $|\mu_\arm(\theta) - \hat{\mu}_\arm(t) | < \sqrt{2 \alpha \sigma^2 \log \slot/\pulls_\arm(\slot)}$ (shaded in gray in part (a)). The intersection of these sets gives the confidence set $\hat{\Theta}_\slot$ shown in part (b). Next, we observe that the mean reward $\mu_3(\theta)$ of Arm 3 (shown in green) cannot be optimal if the unknown parameter $\theta^*$ lies in set $\hat{\Theta}_\slot$. Thus, it is declared as $\hat{\Theta}_\slot$-non-competitive and not considered in Step 3. In step 3, we pull one of the $\hat{\Theta}_\slot$-competitive arms (shown in (c)) using a classical bandit algorithm such as UCB, Thompson Sampling, KL-UCB, etc.
    }
    \label{fig:stepbystep}
\end{figure*}

At each round $t+1$, the algorithm performs the following steps:

\vspace{0.1cm} \noindent
\textbf{Step 1: Constructing a confidence set, $\hat{\Theta}_t$.} From the samples observed till round $\slot$, we define the confidence set as follows: 
\begin{equation}
    \small{\hat{\Theta}_t = \left\{\theta: \forall{\arm \in \setofArms}, \quad | \mu_\arm(\theta) - \hat{\mu}_\arm(t) | < \sqrt{\frac{2 \alpha \sigma^2 \log \slot}{\pulls_\arm(\slot)}}\right\}.} \nonumber
\end{equation}

Here, $\hat{\mu}_k(t)$ is the empirical mean of rewards obtained from the $\pulls_k(\slot)$ pulls of arm $k$. For each arm $k$, we construct a confidence set of $\theta$ such that the true mean $\mu_k(\theta)$ is within an interval of size $\sqrt{\frac{2 \alpha \sigma^2 \log \slot}{\pulls_\arm(\slot)}}$  from $\hat{\mu}_k(t)$. This is illustrated by the error bars along the y-axis in \Cref{fig:stepbystep}(a), with the corresponding confidence sets shown in grey for each arm. 
Taking the intersection of these $K$ confidence sets gives us  $\hat{\Theta}_t$, wherein $\theta$ lies with high probability, as shown in \Cref{fig:stepbystep}(b).

\vspace{0.1cm} \noindent
\textbf{Step 2: Finding the set $\competitiveArms_t$ of \empCompetitive arms.} We let  $\competitiveArms_t$ denote the set of \empCompetitive arms at round $\slot$,
defined as follows.

\begin{defn}[\empCompetitive arm]
\label{defn:competitive}
An arm $\arm$ is said to be \empCompetitive if its mean reward is the highest among all arms for some $\theta \in \hat{\Theta}_{t}$; i.e., $\exists \theta \in \hat{\Theta}_{t}$ such that 
$\mu_\arm(\theta) = \max_{\ell \in \setofArms} \mu_\ell(\theta)$.
\end{defn}

\begin{defn}[\empNonCompetitive arm]
\label{defn:non_competitive}
An arm $\arm$ is said to be \empNonCompetitive if it is {\em not} \empCompetitive; i.e., if  $\mu_\arm(\theta) < \max_{\ell \in \setofArms} \mu_\ell(\theta)$ for all $\theta \in \hat{\Theta}_{t}$.
\end{defn}

If an arm is \empNonCompetitive, then it cannot be optimal if the true parameter lies inside the confidence set $\hat{\Theta}_{\slot}$. These \empNonCompetitive arms are not considered in Step 3 of the algorithm for round $t+1$. However,  these arms can be \empCompetitive in subsequent rounds; see also \Cref{rem:successive}. 
For example, in \Cref{fig:stepbystep}(b), the mean reward of Arm 3 (shown in green) is strictly lower than the two other arms for all $\theta \in \hat{\Theta}_t$. Hence, this arm is declared as \empNonCompetitive and only Arms 1 and 2 are included in the competitive set $\competitiveArms_\slot$. In the rare case when $\hat{\Theta}_t$ is empty, we set $\competitiveArms_\slot = \{1, \dots, \numArms \}$ and go directly to step 3 below.

\begin{algorithm}[t]
\hrule 
\vspace{0.1in}
\begin{algorithmic}[1]
\STATE \textbf{Input:} Reward Functions $\{\mu_1, \mu_2 \ldots \mu_\numArms\}$  \\
\STATE \textbf{Initialize:} $\pulls_\arm = 0$ for all $\arm \in \{1, 2, \dots \numArms\}$
\FOR{ each round $\slot+1$}
\STATE \small{\textbf{Confidence set construction:}} \small{$$\hat{\Theta}_t  = \left\{ \theta: \forall{k} \in \mathcal{K}, \mid \mu_{k}(\theta) - \estimateMean_{k}(\slot) \mid < \sqrt{\frac{2 \alpha \sigma^2 \log \slot}{\pulls_{k}(t)}}\right\}. $$}
If $\hat{\Theta}_t$ is an empty set, then define $\competitiveArms_\slot = \{1, \dots, \numArms \}$ and go to step 6
\STATE \textbf{Define competitive set $\competitiveArms_\slot$:} 
{\small{$$\competitiveArms_\slot = \left\{\arm : \mu_\arm(\theta) = \max_{\ell \in \setofArms} \mu_\ell(\theta) \quad  \text{for some } \theta \in \hat{\Theta}_\slot\right\}.$$}}%
\STATE \textbf{UCB among competitive arms} 
{\small{$$\arm_{\slot+1} = \arg \max_{\arm \in \competitiveArms_\slot} \left( \hat{\mu}_\arm(\slot) + \sqrt{\frac{2 \alpha \sigma^2 \log \slot}{\pulls_\arm(\slot)}} \right).$$}}%
\STATE Update empirical mean $\hat{\mu}_\arm(t+1)$ and $\pulls_\arm(t+1)$ for arm $\arm_{t+1}$.
\ENDFOR
\end{algorithmic}
\vspace{0.1in}
\hrule
\caption{UCB-Competitive (UCB-C)}
\label{alg:formalAlgoUCBC}
\end{algorithm}

\begin{algorithm}[t]
\hrule 
\vspace{0.1in}
\begin{algorithmic}
\STATE1: Steps 1 to 5 as in \Cref{alg:formalAlgoUCBC}
\STATE2:  \small{\textbf{Apply Thompson sampling on $\competitiveArms_\slot$:} }
    \FOR  {$\arm \in \competitiveArms_\slot$}
    \STATE  Sample \small{$S_{\arm,\slot} \sim \mathcal{N}\left(\hat{\mu}_\arm(\slot), \frac{\beta  \sigma^2}{\pulls_\arm(\slot)}\right)$. }
    \ENDFOR
\STATE $\arm_{\slot+1} = \arg \max_{\arm \in \competitiveArms_\slot} S_{\arm,\slot}$
\STATE3: Update empirical mean, $\hat{\mu}_\arm$ and $\pulls_\arm$ for arm $\arm_{t+1}$.
\end{algorithmic}
\vspace{0.1in}
\hrule
\caption{Competitive Thompson Samp. (TS-C)}
\label{alg:formalAlgoTSC}
\end{algorithm}

\vspace{0.1cm} \noindent
\textbf{Step 3: Pull an arm from the set $\competitiveArms_t$ using a classical bandit algorithm.} At round $\slot+1$, we choose one of the \empCompetitive arms using any classical bandit \textsc{Algorithm} (for e.g., UCB, Thompson sampling, KL-UCB, or any algorithm to be developed for the classical bandit framework which does not explicitly model a structure connecting the rewards of different arms). Formal descriptions for UCB-C and TS-C, i.e., the structured bandit versions on UCB \citep{lai1985asymptotically, auer2002finite} and Thompson Sampling \citep{thompson1933likelihood} algorithms, are presented in \Cref{alg:formalAlgoUCBC} and \Cref{alg:formalAlgoTSC}, respectively. The ability to employ any bandit algorithm in its last step is an important advantage of our algorithm. In particular, Thompson sampling has attracted a lot of attention \citep{thompson1933likelihood, chapelle2011empirical, agrawal2012analysis, russo2017tutorial} due to its superior empirical performance. Extending it to the structured bandit setting results in significant regret improvement over previously proposed structured bandit algorithms.

\begin{rem}[Connection to successive elimination algorithms for best-arm identification]
\label{rem:successive}
Note that the empirically competitive set is updated at every round $t$. Thus, an arm that is empirically non-competitive at some round $\tau$ can be empirically competitive in subsequent rounds. Hence, the proposed algorithm is different from successive elimination methods used for best-arm identification \citep{bubeck2009pure, audibert2010best, jamieson2014best, jamieson2014lil}. 
Unlike successive elimination methods, the proposed algorithm does not permanently eliminate empirically non-competitive arms but allows them to become competitive again in subsequent rounds.
\end{rem}

\begin{rem}[Comparison with UCB-S proposed in \citep{lattimore2014bounded}] The paper \citep{lattimore2014bounded} proposes an algorithm called UCB-S for the same structured bandit framework considered in this work. UCB-S constructs the confidence set $\hat{\Theta}_t$ in the same way as Step 1 described above. It then pulls the arm $k = \arg \max_{k \in \setofArms} \sup_{\theta \in \hat{\Theta}_t} \mu_k(\theta)$. Taking the supremum of $\mu_k(\theta)$ over $\theta$ makes UCB-S sensitive to small changes in $\mu_k(\theta)$ and to the confidence set $\hat{\Theta}_t$. Our approach of identifying competitive arms is more robust, as observed in \Cref{sec:regret} and \Cref{sec:experiments}. Moreover, the flexibility of using Thompson Sampling in Step 3 results in a significant reduction in regret  over UCB-S. As noted in \citep{lattimore2014bounded}, the approach used to design UCB-S cannot be directly generalized to Thompson Sampling and other bandit algorithms.
\end{rem}

\begin{rem}[Computational complexity of \textsc{Algorithm}-C]
The computational complexity of \textsc{Algorithm}-C depends on the construction of $\hat{\Theta}_t$ and identifying $\hat{\Theta}_t$-competitive arms. The algorithm is easy to implement in cases where the set $\Theta$ is \textit{small} or in situations where the pre-image of mean reward functions $\mu_k(\theta)$ can be easily computed. For our simulations and experiments, we discretize the set $\Theta$ wherever $\Theta$ is uncountable.
\end{rem}

\section{Regret Analysis and Insights}
\label{sec:regret}

In this section, we evaluate the performance of the UCB-C algorithm through a finite-time analysis of the  expected cumulative regret defined as
\begin{equation}
\E{\regret(\totalPulls)} = \sum_{\arm = 1}^{\numArms} \E{\pulls_\arm(\totalPulls)} \gap_\arm,
\label{eq:regretEqn}
\end{equation}
where $\gap_\arm = \mu_{\bestarm}(\theta^*) - \mu_\arm(\theta^*)$ and $\pulls_\arm(\totalPulls)$ is the number of times arm $\arm$ is pulled in a total of  $\totalPulls$ time steps. To analyze the expected regret, we need to determine $\E{\pulls_\arm(\totalPulls)}$ for each sub-optimal arm $\arm \neq \bestarm$. We derive $\E{\pulls_\arm(\totalPulls)}$ separately for competitive and non-competitive arms. 
Our proof presents a novel technique to  show that each non-competitive arm is pulled only $\OO(1)$ times; i.e., our algorithms stop pulling non-competitive arms after some finite time. 
To establish the fact that competitive arms are pulled $\OO(\log T)$ times each, we prove that the proposed algorithm effectively reduces a $K$-armed bandit problem to a $C(\theta^*)$-armed bandit problem, allowing us to extend the regret analysis of the underlying classical multi-armed bandit algorithm (UCB, Thompson Sampling, etc.)

\subsection{Competitive and Non-competitive Arms}

In \Cref{sec:algorithmC}, we defined the notion of competitiveness of arms with respect to the confidence set $\hat{\Theta}_t$ at a fixed round $t$. For our regret analysis, we need asymptotic notions of competitiveness of arms, which are given below.

\begin{figure}[t!]
    \centering
    \includegraphics[width = 0.9\textwidth]{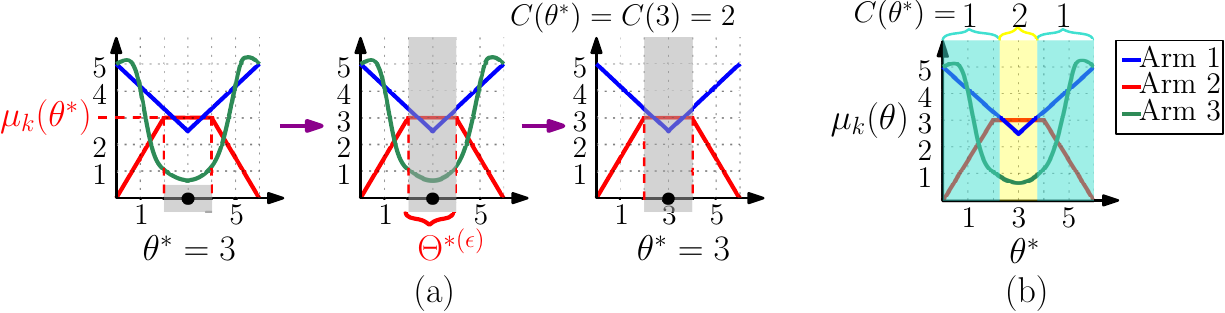}
    \caption{(a) Illustration of how the number of competitive arms $C(\theta^*)$ depends on the value of $\theta^*$ and the mean reward functions, when $\theta^* = 3$. To identify the competitive arms, we first find the set $\Theta^*(\epsilon) = \{\theta: |\mu_\bestarm(\theta^*) - \mu_\bestarm(\theta)| < \epsilon \}$ for small $\epsilon > 0$. Since Arm $3$ (shown in green) is sub-optimal for all $\theta \in \Theta^*(\epsilon)$ it is non-competitive. As a result, $C(\theta^*) = C(3) = 2$. (b) The number of competitive arms depend on the value of $\theta^*$. The grey region illustrates range of $\theta^*$ where $C(\theta^*) = 1$ and the yellow region indicates the range of values for which $C(\theta^*) = 2$.}
    \label{fig:C_theta_illustration}
\end{figure}

\begin{defn}[Non-competitive and Competitive Arms]
For any $\epsilon > 0$, let
\begin{align}
\Theta^{*(\epsilon)} = \{\theta: |\mu_\bestarm(\theta^*) - \mu_\bestarm(\theta)| < \epsilon \}. \nonumber
\end{align}
An arm $k$ is said to be non-competitive if there exists an $\epsilon > 0$ such that $k$ is not the optimal arm for any $\theta \in \Theta^{*(\epsilon)}$; i.e., 
if $\mu_\arm(\theta) < \max_{\ell \in \setofArms} \mu_\ell(\theta)$ for all $\theta \in \Theta^{*(\epsilon)}$.  
Otherwise, the arm is said to be competitive; i.e., if for all $\epsilon > 0$, $\exists \theta \in \Theta^{*(\epsilon)}$ such 
that
$\mu_\arm(\theta) = \max_{\ell \in \setofArms} \mu_\ell(\theta)$.
The number of competitive arms is denoted by $C(\theta^*)$. 
\end{defn}

Since the optimal arm $\bestarm$ is competitive by definition, we have
\[
1 \leq C(\theta^*) \leq K.
\]
We can think of $\Theta^{*(\epsilon)}$ as a confidence set for $\theta$ obtained from the samples of the best arm $\bestarm$. 
To intuitively understand the meaning of non-competitiveness, recall that the observed rewards $\hat{\mu}_k(t)$ from the arms help infer that $\theta^*$ lies in the confidence set $\hat{\Theta}_t$ with high probability. The observed reward $\hat{\mu}_{k^*}(t)$ of arm $k^*$ will dominate the construction of the confidence set $\hat{\Theta}_t$ because a good multi-armed bandit strategy pulls the optimal arm $\OO(t)$ times, while other arms are pulled at most $\OO(\log t)$ times. Thus, for any $\epsilon>0$, we expect the confidence set $\hat{\Theta}_t$ to converge to $\Theta^{*(\epsilon)}$
as the  number $n_{k^*}(t)$ of pulls for the optimal arm gets larger. As a result, if a sub-optimal arm $k$ is non-competitive as per the definition above,  i.e., $\mu_\arm(\theta) < \max_{\ell \in \setofArms} \mu_\ell(\theta)$ for all $\theta \in \Theta^{*(\epsilon)}$, then the proposed algorithm will identify  $k$ as $\hat{\Theta}_t$-non-competitive (and thus not pull it) with increasing probability at every round $t$. In fact, our regret analysis shows that the likelihood of a non-competitive arm being pulled at time $t$ decays as $t^{-1-\gamma}$ for some $\gamma>0$, leading to such arms being pulled only finitely many times.

We note that the number of competitive $C(\theta^*)$ arms is a function of the unknown parameter $\theta^*$ and the mean reward functions $\mu_k(\theta)$. \Cref{fig:C_theta_illustration}(a) illustrates how $C(\theta^*)$ is determined for the set of reward functions in \Cref{fig:problem_setup} and when $\theta^* = 3$. If $\theta^* = 3$, arm $2$ (shown in red) is optimal. The corresponding confidence set $\Theta^{*(\epsilon)}=[2-\frac{2\epsilon}{3},4+\frac{2\epsilon}{3}]$ is a slightly expanded version of the range of $\theta$ corresponding to the flat part of the reward function around $\theta^*$. Arm $3$ (shown in green) has sub-optimal mean reward $\mu_3(\theta)$ for all $\theta \in \Theta^{*(\epsilon)}$, and thus it is non-competitive. On the other hand, Arm $1$ (shown in blue) is competitive. Therefore, the number of competitive arms $C(\theta^*) = 2$ when $\theta^* =3$. \Cref{fig:C_theta_illustration}(b) shows how $C(\theta^*)$ changes with the value of $\theta^*$. When $\theta^*$ is outside of $[2,4]$, i.e., the  the flat portion of Arm 2, $\Theta^{*(\epsilon)}$ is a much smaller set and it is possible to show that both Arms $1$ and $3$ are non-competitive. Therefore, the number of competitive arms $C(\theta^*) = 1$ when $\theta^*$ is outside $[2,4]$.

\subsection{Upper Bounds on Regret}

\begin{defn}[Degree of Non-competitiveness, $\epsilon_k$]
The degree of non-competitiveness $\epsilon_k$ of a non-competitive arm $k$ is the largest $\epsilon$ for which  $\mu_\arm(\theta) < \max_{\ell \in \setofArms} \mu_\ell(\theta)$ for all $\theta \in \Theta^{*(\epsilon)}$, where $\Theta^{*(\epsilon)} = \{\theta: |\mu_\bestarm(\theta^*) - \mu_\bestarm(\theta)| < \epsilon \}$. In other words, $\epsilon_k$ is the largest $\epsilon$ for which arm $k$ is $\Theta^{*(\epsilon)}$-non-competitive.
\end{defn}

Our first result shows that the expected pulls for non-competitive arms are bounded with respect to time $T$. Arms with a larger degree of non-competitiveness $\epsilon_k$ are pulled fewer times. 
\begin{thm}[Expected pulls of each of the $K-C(\theta^*)$ non-competitive Arms]
If arm $k$ is non-competitive, then the number of times it is pulled by UCB-C is upper bounded as
\vspace{-1mm}
\begin{align}
    \E{\pulls_\arm(\totalPulls)} &\leq \numArms \slot_0 + \sum_{\slot = 1}^{\totalPulls} 2 \numArms \slot^{1-\alpha} + \numArms^3  \sum_{\slot = \numArms \slot_0}^{\totalPulls} 6 \left(\frac{\slot}{\numArms}\right)^{2-\alpha} \label{eqn:ts} \\
    &= \OO(1) \quad \text{for } \alpha > 3. \nonumber
\end{align}
\label{thm:nonCompUCB}

Here,
\vspace{-2mm}
{\small
\begin{align*}
&\slot_0  = \inf \bigg\{\tau \geq 2: \Delta_{\text{min}} ,\epsilon_k \geq 4 \sqrt{\frac{K\alpha\sigma^2\log \tau}{\tau}} \bigg\}; \quad \Delta_{\text{min}} = \min_{k \in \mathcal{K}}\Delta_k.
\end{align*}
\vspace{-3mm}
}

\label{thm:nonComp}
\end{thm}

The $\OO(1)$ constant depends on the degree of competitiveness $\epsilon_k$ through $t_0$. If $\epsilon_k$ is large, it means that $t_0$ is small and hence $\E{n_k(T)}$ is bounded above by a small constant. The second and third terms in \Cref{eqn:ts} sum up to a constant for $\alpha > 3, \beta > 1$.

The next result shows that expected pulls for any competitive arm is $\OO(\log T)$. This result holds for any sub-optimal arm, but for non-competitive arms we have a stronger upper bound (of $\OO(1)$) as given in \Cref{thm:nonComp}. Regret analysis of UCB-C is presented in Appendix E. In Appendix D, we present a unified technique to prove results for any other \textsc{Algorithm-C}, going beyond UCB-C. We present the regret analysis of TS-C (with Beta prior and $K=2$) in Appendix F.

\begin{thm}[Expected pulls for each of the $C(\theta^*)-1$ competitive sub-optimal arms]
The expected number of times a competitive sub-optimal arm is pulled by UCB-C Algorithm is upper bounded as 
\begin{align*}
    \vspace{-1mm}
     \E{n_k(T)} &\leq 8\alpha \sigma^2 \frac{\log T}{\Delta_k^2} + \frac{2\alpha}{\alpha-2} + \sum_{\slot = 1}^{\totalPulls} 2\numArms \slot^{1-\alpha} \nonumber \\
     &= \OO(\log T) \quad \text{for } \alpha > 2,
\end{align*}
\label{thm:CompUCB}

\end{thm}

Plugging the results of \Cref{thm:nonComp} and \Cref{thm:CompUCB} in \eqref{eq:regretEqn} yields the bound on the expected regret in \Cref{thm:regret_bnd}. Note that in this work we consider a finite-armed setting where the number of arms $K$ is a fixed constant that does {\em not} scale with $T$ -- we focus on understanding how the cumulative regret scales with $T$ while $K$ remains constant.

\subsection{Proof Sketch}
We now present the proof sketch for \Cref{thm:nonComp}. The detail proof is given in the Appendix. For UCB-C, the proof can be divided into three steps presented below. The analysis is unique to our paper and allows us to prove that the UCB-C algorithm pulls the non-competitive arms only $\OO(1)$ times. The key strength of our approach is that the analysis can be easily extended to any \textsc{Algorithm}-C. 

\vspace{0.1cm}
\noindent \textbf{i) The probability of arm $k^*$ being $\hat{\Theta}_t$-Non-Competitive is small.} Observe that $\theta^* \in \hat{\Theta}_t$ implies that $k^*$ is $\hat{\Theta}_t$-competitive. Let $E_1(t)$ denote the event that the optimal arm $k^*$ is $\hat{\Theta}_t$-non-competitive at round $t$. As we obtain more and more samples, the probability of $\theta^*$ lying outside $\hat{\Theta}_t$ decreases with $t$. Using this, we show that (viz.~ Lemma 3 in the Appendix)
\begin{equation}
    \Pr(E_1(t))  
\leq 2Kt^{1-\alpha}.
\label{eq:Lemma3_result}
\end{equation}
This enable us to bound the expected number of pulls of a competitive arm as follows.
\begin{equation}
    \E{n_k(t)} \leq \sum_{t = 1}^{T} \Pr(E_1(t)) + \sum_{t = 0}^{T-1} \Pr(I_k(t) > I_{k^*}(t), k_{t+1} = k).
\label{eqn:lookatfirstterm}
\end{equation}
In view of \Cref{eq:Lemma3_result}, the first term in \Cref{eqn:lookatfirstterm} sums up to a constant for $\alpha > 2$. The term $I_k(t)$ represents the UCB Index if the last step in the algorithm is UCB, i.e., $I_k(t) = \hat{\mu}_k(t) + \sqrt{\frac{2\alpha \sigma^2 \log t}{n_k(t)}}$. The analysis of second term is exactly same as that for the UCB algorithm \citep{auer2002finite}. Due to this, the upper bound on expected number of pulls of competitive arms using UCB-C has the same pre-log constants as UCB. 

\noindent
\textbf{ii) The probability of a non-competitive arm being pulled jointly with the event that $n_{k^*}(t) > t/K$ is small.} Consider the joint event that a non-competitive arm with parameter $\epsilon_k$ is pulled at round $t+1$ and the number of pulls of optimal arm till round $t$ is at least $t/K$. In Lemma 4 in Appendix C we show that this event is unlikely. Intuitively, this is because when arm $k^*$ is pulled sufficiently many times, the confidence interval of mean of optimal arm is unlikely to contain any value outside $[\mu_{k^*}(\theta^*) - \epsilon_k, \mu_{k^*}(\theta^*) + \epsilon_k]$. 
Due to this, with high probability, arm $k$ is eliminated for round $t+1$ in step 2 of the algorithm itself. This leads to the result of Lemma 4 in the appendix, 
\begin{equation}
    \Pr\left(\arm_{\slot + 1} = \arm , n_{k^*}(t) > \frac{t}{K}\right) \leq 2\slot^{1-\alpha} \quad \forall{t > t_0}
\label{eqn:step2see}
\end{equation}

\vspace{0.1cm}
\noindent
\textbf{iii) The probability that a sub-optimal arm is pulled more than $\frac{t}{K}$ times till round $t$ is small.} In Lemma 6 in the Appendix, we show that 
\begin{equation}
    \Pr\left(n_k(t) > \frac{t}{K}\right) \leq 6K^2\left(\frac{t}{K}\right)^{2-\alpha} \quad \forall{t > Kt_0}.
\label{eqn:step3see}
\end{equation}
This result is specific to the last step used in \textsc{Algorithm}-C. To show \Cref{eqn:step3see} we first derive an intermediate result for UCB-C which states that $$\Pr(\arm_{\slot + 1} = \arm , \pulls_\arm(\slot) \geq s) \leq (2\numArms + 4) \slot^{1-\alpha} \quad \text{for }  s \geq \frac{\slot}{2 \numArms}.$$ Intuitively, if we have large number of samples of arm $k$, its UCB index is likely to be close to $\mu_k$, which is unlikely to be larger than the UCB index of optimal arm $k^*$ (which is around $\mu_{k^*}$ if $n_{k^*}$ is also large, or even higher if $n_{k^*}$ is \emph{small} due to the exploration term added in UCB index).  

The analysis of steps ii) and iii) are unique to our paper and help us obtain the $\OO(1)$ regret for non-competitive arms. Using these results, we can write the expected number of pulls for a non-competitive arm as
\begin{align}
\E{n_k(t)} &\leq Kt_0 + \sum_{t = Kt_0}^{T-1} \Pr\left(\arm_{\slot + 1} = \arm , n_{k^*}(t) = \max_{k \in \mathcal{K}} n_k(t)\right)  \nonumber \\
&+ \sum_{t = Kt_0}^{T-1} \sum_{k \in \mathcal{K}, k \neq k^*} \Pr(n_k(t) = \max_{k \in \mathcal{K}} n_k(t)).
\label{eqn:mainResultproof}
\end{align}
The second term in \Cref{eqn:mainResultproof} is bounded through step ii) (viz.~\Cref{eqn:step2see}) and the third term in \Cref{eqn:mainResultproof} is bounded for each sub-optimal arm through step iii) (viz.~\Cref{eqn:step3see}). Together, steps ii) and iii) imply that the expected number of pulls for a non-competitive arm is bounded.

\subsection{Discussion on Regret Bounds}
\label{subsec:sim}

\vspace{0.1cm}
\noindent
\textbf{Reduction in the effective number of arms.} The classical multi-armed bandit algorithms, which are agnostic to the structure of the problem, pull each of the $(K-1)$ sub-optimal arms $\OO(\log T)$ times. In contrast, our UCB-C algorithm pulls only a {\em subset} of the sub-optimal arms $\OO(\log T)$ times, with the rest (i.e., non-competitive arms) being pulled only $\OO(1)$ times. More precisely, our algorithms pull each of the $C(\theta^*)-1 \leq K-1$ arms that are competitive but sub-optimal $\OO(\log T)$ times. It is important to note that the upper bound on the pulls of these competitive arms by UCB-C has the same pre-log constants with that of the UCB, as shown in \Cref{thm:regret_bnd}. Consequently, the ability of UCB-C to reduce the pulls of non-competitive arms from $\OO(\log T)$ to $\OO(1)$ results directly in it achieving a smaller cumulative regret than its non-structured counterpart.

The number of competitive arms, i.e., $C(\theta^*)$, depends on the functions $\mu_1(\theta), \ldots, \mu_K(\theta)$ {\em as well as} the hidden parameter $\theta^*$. Depending on $\theta^*$, it is possible to have $C(\theta^*) = 1$, or $C(\theta^*)=K$, or any number in between. When $C(\theta^*) = 1$, all sub-optimal arms are non-competitive due to which our proposed algorithms achieve $\OO(1)$ regret. What makes our algorithms appealing is the fact that even though they do not explicitly try to predict the set (or, the number) of competitive arms, they {\em stop} pulling any non-competitive arm after finitely many steps.

\begin{figure}[t]
         \centering
         \includegraphics[width = 0.6\textwidth]{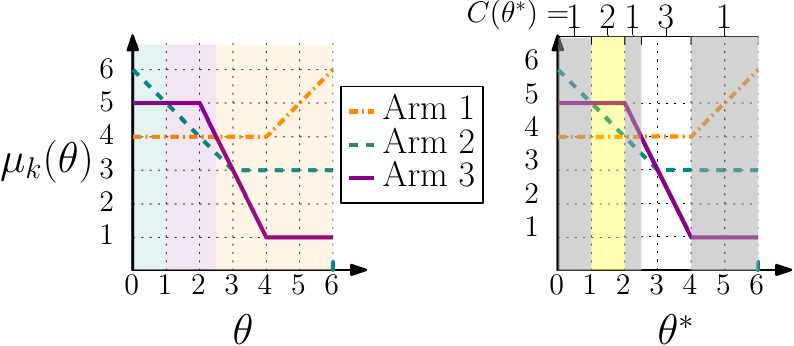}
         \caption{\sl (left) Arm 2 is optimal for $\theta^* \in [0,1]$, Arm 3 is optimal for $\theta^* \in [1,2.5]$ and Arm 1 is optimal for $\theta^* \in [2.5,6]$, (right) the number of competitive arms for different ranges of $\theta$ shaded in grey ($C(\theta) = 1$), yellow ($C(\theta) = 2)$ and white ($C(\theta) = 3$).}
         \label{fig:simEx1}
\end{figure}

\begin{figure}[t]
    \centering
    \includegraphics[width = 0.6\textwidth]{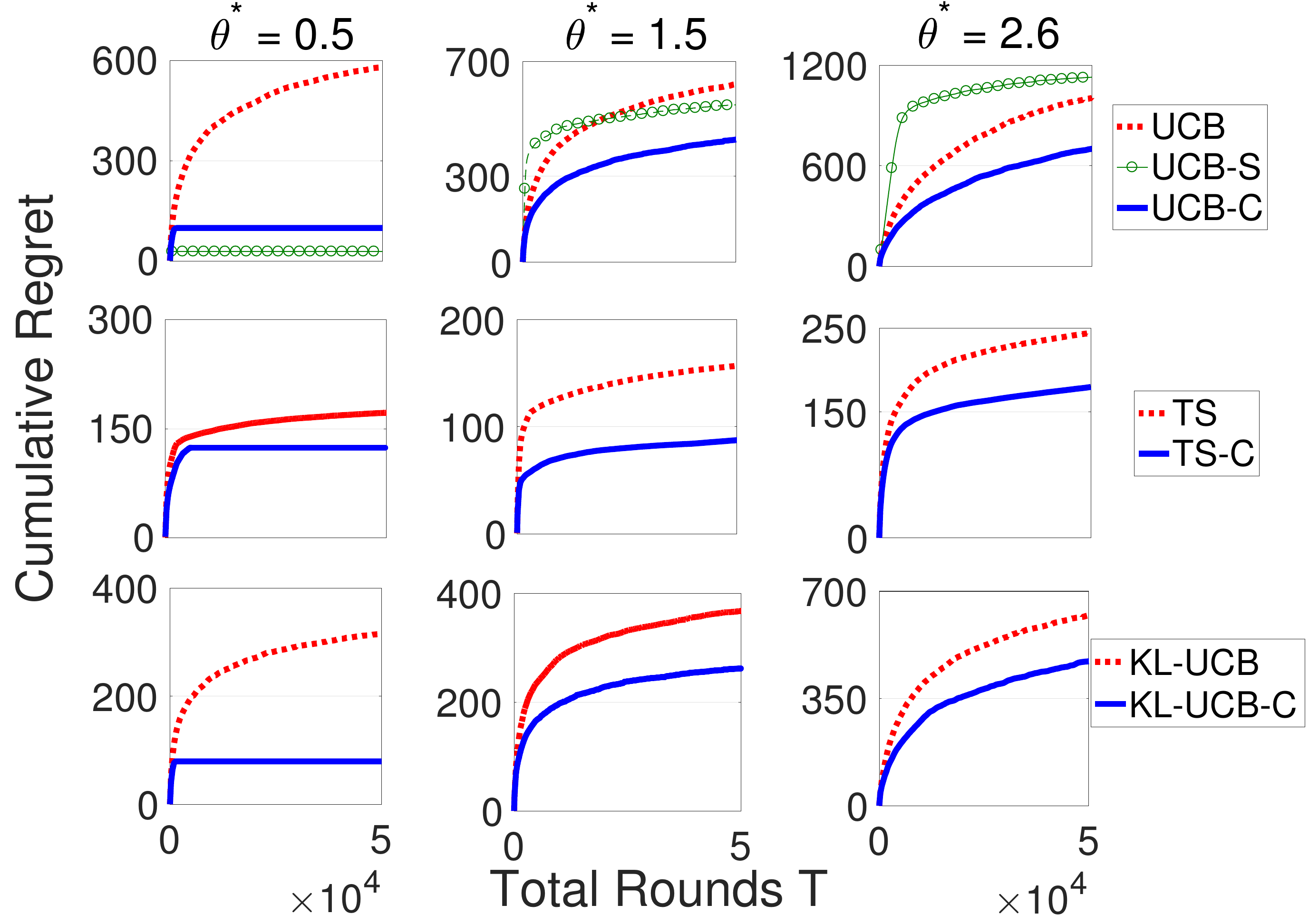}
    \caption{\sl Cumulative regret of \textsc{Algorithm-C} vs.~\textsc{Algorithm} (UCB in row 1, TS in row 2 and KL-UCB in row 3) for the setting in \Cref{fig:simEx1}. The number of competitive arms is $C(\theta^*) = 1$ in the first column, $C(\theta^*) = 2$ in second column and $C(\theta^*) = 3$ in third column. Unlike UCB-S which only extends UCB, our approach generalizes any classical bandit algorithm such as UCB, TS, and KL-UCB to the structured bandit setting.}
    \label{fig:Simulation1}
\end{figure}

\vspace{0.1cm}
\noindent
\textbf{Empirical performance of \textsc{Algorithm-C}.} In \Cref{fig:Simulation1} we compare the regret of \textsc{Algorithm-C} against the regret of \textsc{Algorithm} (UCB/TS/KL-UCB). We plot the cumulative regret attained under \textsc{Algorithm-C} vs.~\textsc{Algorithm} of the example shown in \Cref{fig:simEx1} for three different values of $\theta^*: 0.5,1.5$ and $2.6$. Refer to \Cref{fig:simEx1} to see that $C = 1$, $2$ and $3$ for $\theta^* = 0.5$,$1.5$ and $2.6$, respectively. Due to this, we see that \textsc{Algorithm-C} achieves bounded regret for $\theta^* = 0.5$, and reduced regret relative to \textsc{Algorithm} for $\theta^* = 1.5$ as only one arm is pulled $\OO(\log T)$ times. For $\theta^* = 2.6$, even though 
 $C = 3$ (i.e., all arms are competitive), \textsc{Algorithm-C} achieves empirically smaller regret than \textsc{Algorithm}. We  also see the advantage of using TS-C and KL-UCB-C over UCB-C in \Cref{fig:Simulation1} as Thompson Sampling and KL-UCB are known to outperform UCB empirically. For all the simulations, we set $\alpha = 3, \beta = 1$. Rewards are drawn from the distribution $\mathcal{N}(\mu_{k}(\theta^*),4)$, i.e, $\sigma = 2$. We average the regret over $100$ experiments. For a given experiment, all algorithms use the same reward realizations.

\begin{figure}[t]
    \centering
    \includegraphics[width = 0.57\textwidth]{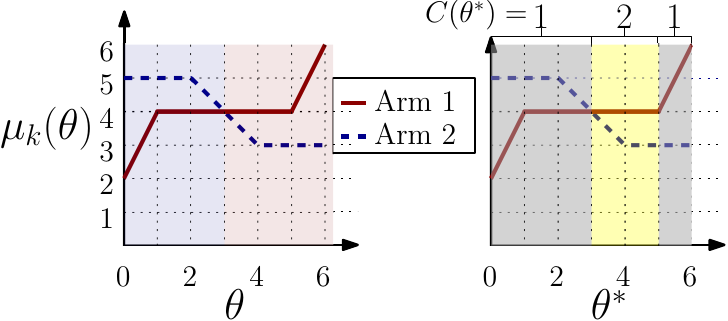}
    \caption{\sl Arm 2 is optimal for $\theta^* \in [0,3]$ and Arm 1 is optimal for $\theta^* \in [3,5]$. For $\theta \in [0,3] \cup [5,6]$, $C(\theta) = 1$ and $C(\theta) = 2$ for $\theta \in [3,5]$.}
    \label{fig:exSim2}
\end{figure}

\begin{figure}[t]
    \centering
    \includegraphics[width = 0.57\textwidth]{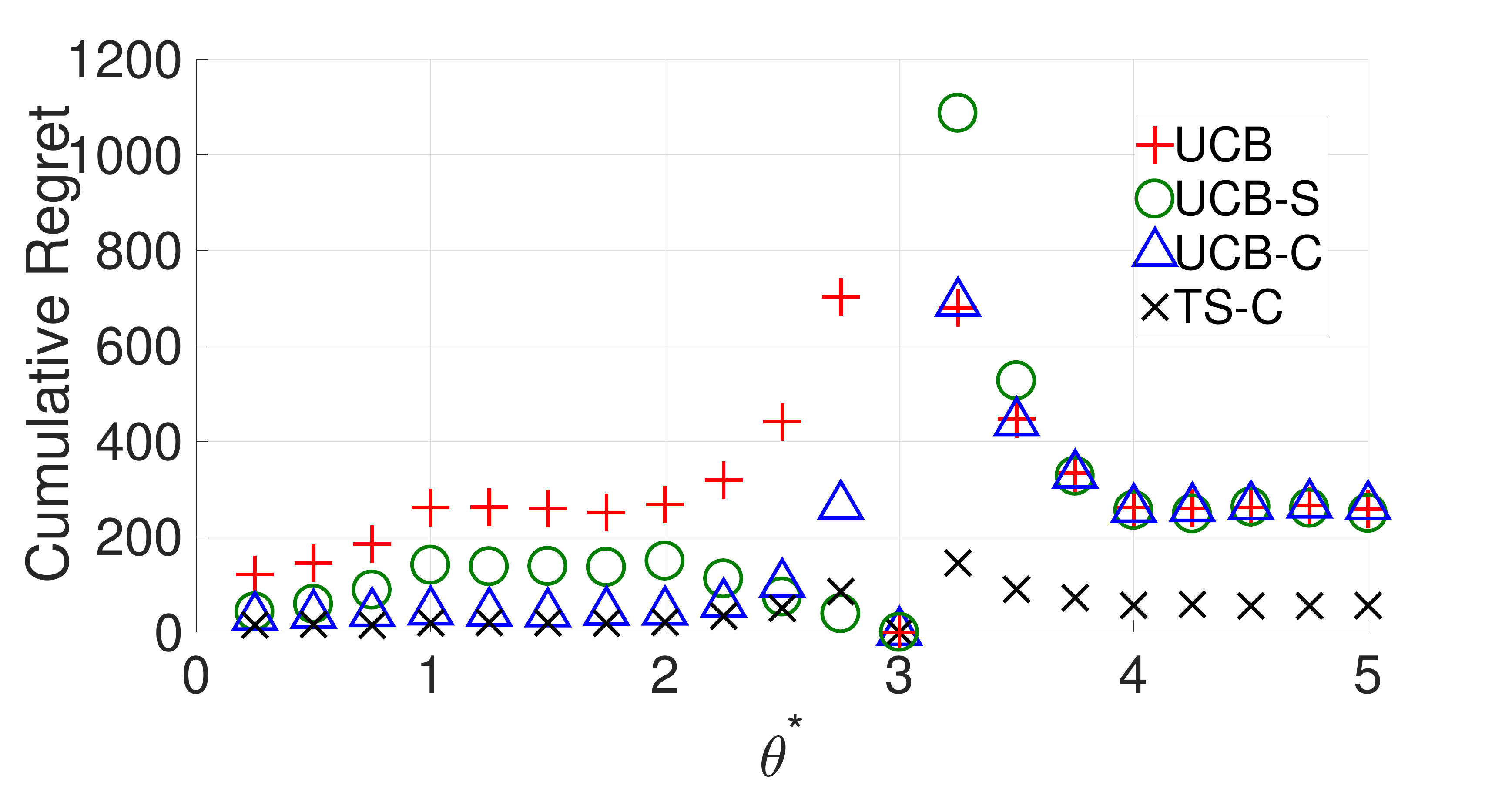}
    \caption{\sl Cumulative regret of UCB,UCB-S,UCB-C and TS-C versus $\theta^*$ for the example in \Cref{fig:exSim2} over $50000$ runs. UCB-S is sensitive to the value of $\theta^*$ and the reward functions as it is seen to achieve a small regret for $\theta^* = 2.75$, but obtains a worse regret than UCB for $\theta^* = 3.25$.}
    \label{fig:Sim2}
\end{figure}

\vspace{0.1cm}
\noindent
\textbf{Performance comparison with UCB-S.} In the first row of \Cref{fig:Simulation1}, we also plot the performance of the UCB-S algorithm proposed in \citep{lattimore2014bounded}, alongside UCB and UCB-C. The UCB-S algorithm constructs the confidence set $\hat{\Theta}_t$ just like UCB-C, and then in the next step selects the arm $k_{t+1} = \arg \max_{k \in \mathcal{K}} \sup_{\theta \in \hat{\Theta}_t} \mu_k(\theta)$. Informally, it finds the maximum possible mean reward $\mu_k(\theta)$ over  $\theta \in \hat{\Theta}$ for each arm $k$. As a result, UCB-S tends to favor pulling arms that have the largest mean reward for $\theta \in \Theta^{*(\epsilon)}$. This bias renders the performance of UCB-S to depend heavily on $\theta^*$. When $\theta^* = 0.5$,  UCB-S  has the smallest regret among the three algorithms compared in \Cref{fig:Simulation1}, but when $\theta^* = 2.6$  it gives even worse regret than UCB. A similar observation can be made in another simulation setting described below.

\Cref{fig:Sim2} compares UCB, UCB-S, UCB-C and TS-C for the functions shown in \Cref{fig:exSim2}. We plot the cumulative regret after 50000 rounds for different values of $\theta^* \in [0,5]$ and observe that TS-C performs the best for most $\theta^*$ values. As before, the  performance of UCB-S varies significantly with $\theta^*$. In particular, UCB-S has the smallest regret of all when $\theta^* = 2.75$, but achieves worse regret  even compared to UCB when $\theta^* = 3.25$. On the other hand, our UCB-C performs better than or at least as good as UCB for all   $\theta^*$. While UCB-S also achieves the regret bound of \Cref{thm:regret_bnd}, the ability to employ any \textsc{Algorithm} in the last step of \textsc{Algorithm-C} is a key advantage over UCB-S, as Thompson Sampling and KL-UCB can have significantly better empirical performance over UCB.

\begin{figure}[t]
    \centering
    \includegraphics[width = 0.5\textwidth]{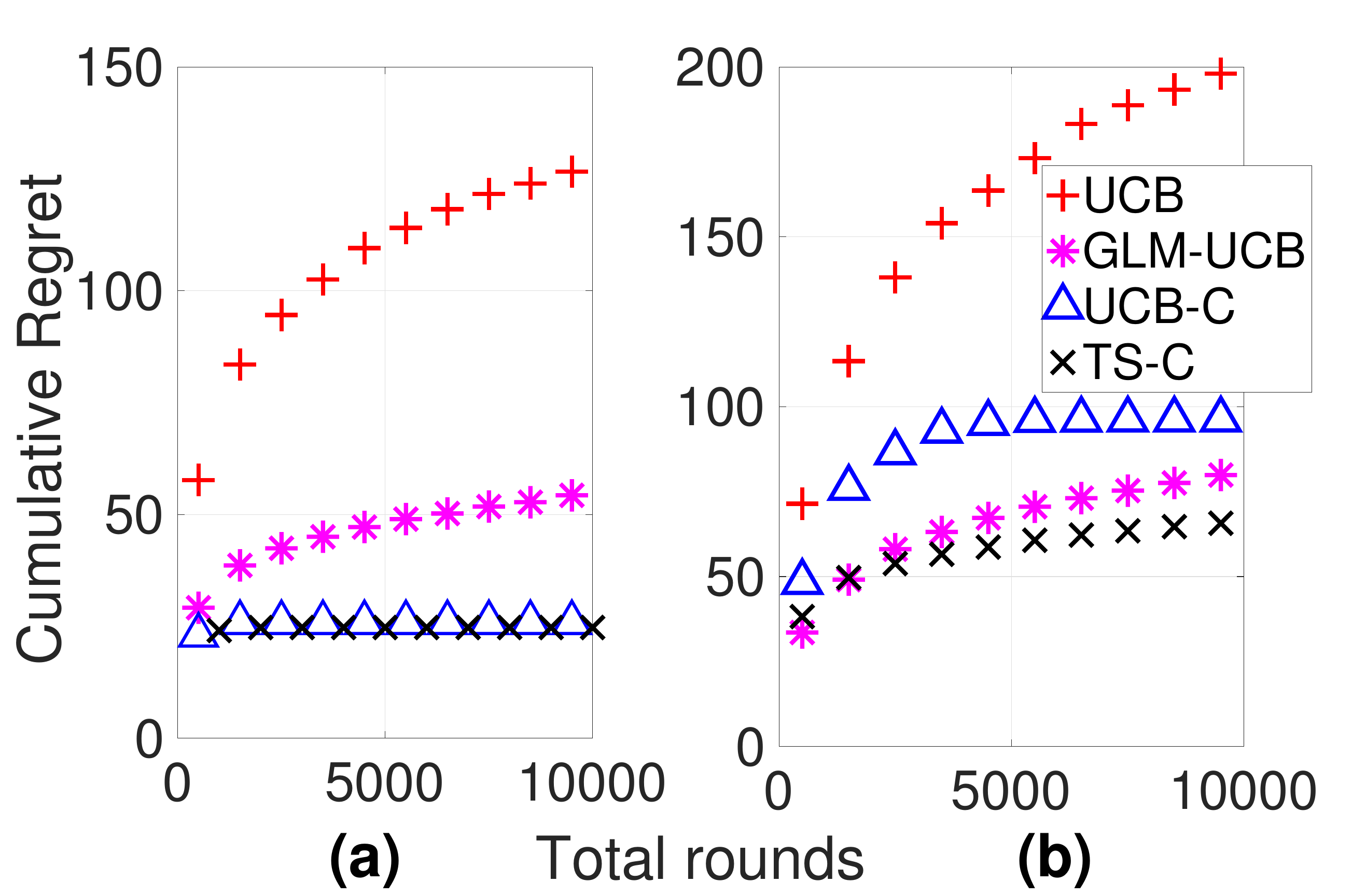}
    \caption{Cumulative regret of UCB, GLM-UCB, UCB-C and TS-C in the linear bandit setting, with $x_1 = (2,1), x_2 = (1,1.5)$ and $x_3 = (3,-1)$. Mean rewards are $(\vec{\theta}^*)^\intercal x_k$, with $\theta^* = (0.9, 0.9)$ in (a) and $\theta^* = (0.5, 0.5)$ in (b). While UCB-C and TS-C are designed for a much broader class of problems, they show competitive performance relative to GLM-UCB, which is a specialized algorithm for the linear bandit setting.}
    \label{fig:linBandit}
\end{figure}

\begin{figure}[t]
    \centering
    \includegraphics[width = 0.55\textwidth]{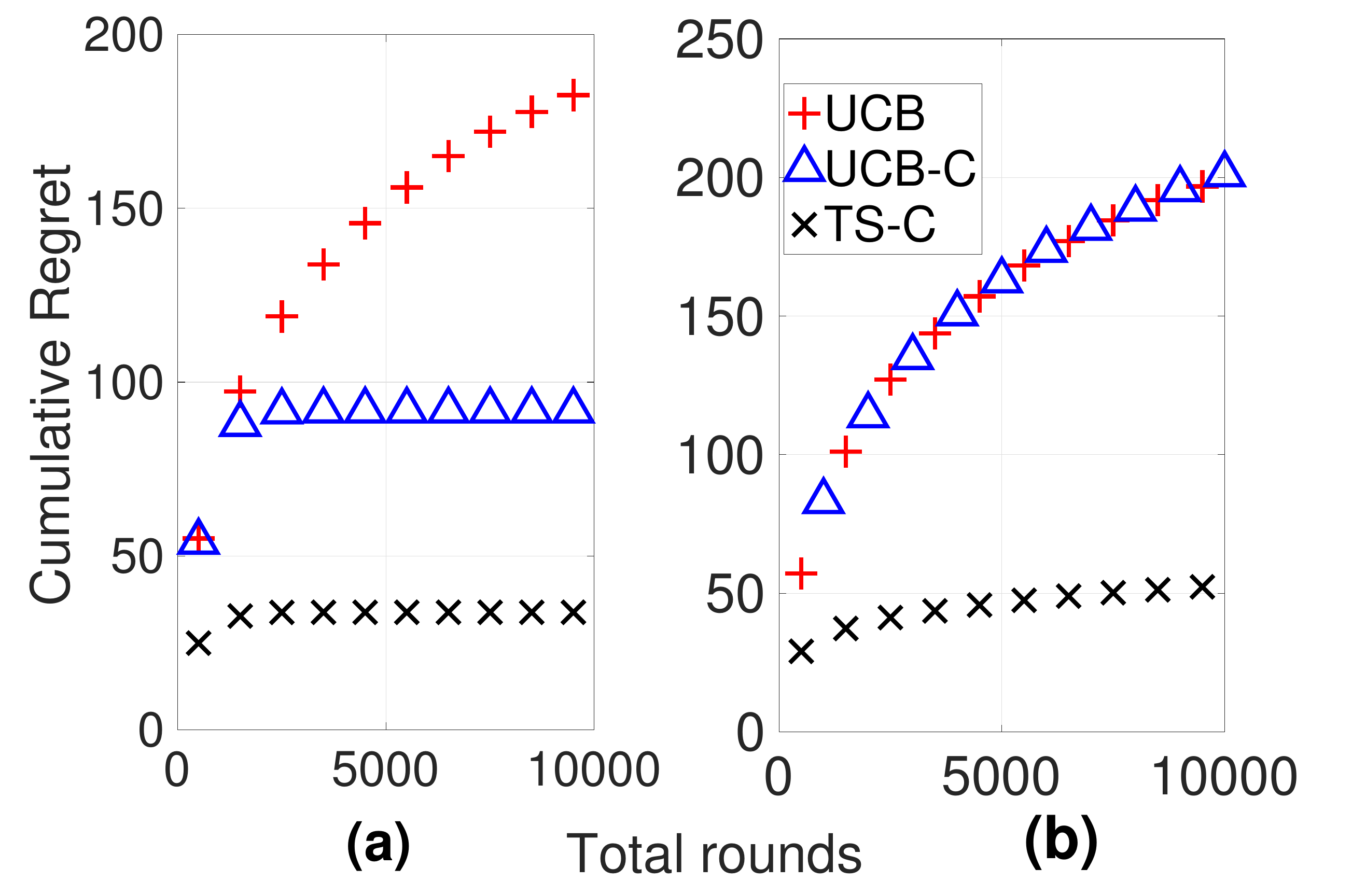}
    \caption{Cumulative regret for UCB,UCB-C and TS-C for the case in which $\theta \in [-1,1] \times [-1,1]$. The reward functions are $\mu_1(\vec{\theta}) = \theta_1 + \theta_2, \mu_2(\vec{\theta}) = \theta_1 - \theta_2$, and $\mu_3(\vec{\theta}) = \max(|\theta_1|,|\theta_2|)$. The true parameter $\vec{\theta}^*$ is $(0.9,0.2)$ in (a) and $(-0.2,0.1)$ in (b). The value of $C(\theta^*)$ is $1,3$ in (a) and (b) respectively.}
    \label{fig:multiTheta}
\end{figure}

\vspace{0.1cm} \noindent
\textbf{Comparison in linear bandit and multi-dimensional $\theta$ settings.} As highlighted in \Cref{sec:model}, our problem formulation allows $\theta$ to be multi-dimensional as well. \Cref{fig:linBandit} shows the performance of UCB-C and TS-C relative to GLM-UCB in a linear bandit setting. In a linear bandit setting, mean reward of arm $k$ is $\mu_k(\theta^*) = (\theta^*)^{\intercal}x_k $. Here $x_k$ is a vector associated with arm $k$, which is known to the player. The parameter $\theta^*$ is unknown to the player, and hence it fits in our structured bandit framework. It is important to see that while UCB-C and TS-C are designed for a much broader class of problems, they still show competitive performance relative to specialized algorithms (i.e., GLM-UCB) in the linear bandit setting (\Cref{fig:linBandit}). \Cref{fig:multiTheta} shows a setting in which $\theta$ is multi-dimensional, but the reward mappings are non-linear and hence the setting is not captured through a linear bandit framework. Our results in \Cref{fig:multiTheta} demonstrate that the UCB-C and TS-C algorithms work in such settings as well while providing significant improvements over UCB in certain cases.

\subsection{When do we get bounded regret?}
 When $C(\theta^*)=1$, all sub-optimal arms are pulled only $\OO(1)$ times, leading to a bounded regret. Cases with $C(\theta^*) = 1$ can arise quite often in practical settings. For example, when functions are continuous or $\Theta$ is countable, this occurs when the optimal arm $\bestarm$ is {\em invertible}, or has a unique maximum at $\mu_{k^*}(\theta^*)$, or any case where
the set  $\Theta^* = \{\theta: \mu_{\arm^*}(\theta) = \mu_{\arm^*}(\theta^*)  \}$ is a {\em singleton}.
These cases lead to  all sub-optimal arms  being non-competitive, whence UCB-C achieves bounded (i.e., $O(1)$) regret. There are more general scenarios where bounded regret is possible. To formally present such cases, we utilize a lower bound obtained in \citep{combes2017minimal}. 

\begin{prop}[Lower bound]
For any uniformly good algorithm \citep{lai1985asymptotically}, and for any $\theta \in \Theta$, we have: 
$$\lim \inf_{\totalPulls \rightarrow \infty} \frac{Reg(\totalPulls)}{\log \totalPulls} \geq L(\theta), \text{ where}$$
$$L(\theta) = 
\begin{cases}
0 \quad &\text{if } \tilde{C}(\theta^*) = 1, \\
> 0 \quad &\text{if } \tilde{C}(\theta^*) > 1.
\end{cases}
$$
An algorithm $\pi$ is uniformly good if $Reg^\pi(\totalPulls,\theta) = \oo(T^a)$ for all $a > 0$ and all $\theta \in \Theta$. Here $\tilde{C}(\theta^*)$ is the number of arms that are $\Theta^*$-Competitive, with $\Theta^*$ being the set $\{\theta : \mu_{k^*}(\theta) = \mu_{k^*}(\theta^*)\}$.
\label{prop:lowerBound}
\end{prop}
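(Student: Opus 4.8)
The plan is to derive this dichotomy as a specialization of the general change-of-measure (Graves--Lai type) lower bound of \citep{combes2017minimal}, adapted to our mean-reward structure and sub-Gaussian rewards. First I would recall that \citep{combes2017minimal} expresses the asymptotic lower bound as the value of a semi-infinite optimization problem: for any uniformly good policy,
\[
\liminf_{\totalPulls \to \infty} \frac{\regret(\totalPulls)}{\log \totalPulls} \;\geq\; L(\theta^*) \;=\; \inf_{\eta \in \mathbb{R}_{\geq 0}^{\numArms}} \Big\{ \textstyle\sum_{\arm \neq \bestarm} \eta_\arm \gap_\arm \;:\; \inf_{\lambda \in \Lambda(\theta^*)} \textstyle\sum_{\arm} \eta_\arm\, \mathrm{KL}_\arm(\theta^*,\lambda) \geq 1 \Big\},
\]
where $\Lambda(\theta^*)$ is the set of \emph{confusing} parameters --- those $\lambda \in \Theta$ under which a different arm is optimal yet the reward law of $\bestarm$ is unchanged --- and $\mathrm{KL}_\arm$ is the divergence between the reward laws of arm $\arm$ at $\theta^*$ and $\lambda$. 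Since we assume only sub-Gaussianity rather than known reward laws, I would instantiate the construction with Gaussian rewards $\mathcal{N}(\mu_\arm(\cdot),\sigma^2)$, which lie in our model class and for which $\mathrm{KL}_\arm(\theta^*,\lambda) = (\mu_\arm(\theta^*)-\mu_\arm(\lambda))^2/(2\sigma^2)$; any bound proved on this explicit family is a valid instance-specific lower bound in our setting.

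The crux is then to translate emptiness of $\Lambda(\theta^*)$ into the count $\tilde C(\theta^*)$. The ``reward law of $\bestarm$ unchanged'' condition is exactly $\mu_{\bestarm}(\lambda)=\mu_{\bestarm}(\theta^*)$, i.e.\ $\lambda \in \Theta^* = \{\theta:\mu_{\bestarm}(\theta)=\mu_{\bestarm}(\theta^*)\}$, so $\Lambda(\theta^*) = \{\lambda \in \Theta^* : \bestarm \text{ is not optimal under } \lambda\}$. I would then argue that $\tilde C(\theta^*)=1$ forces $\bestarm$ to be the optimizer throughout $\Theta^*$ (otherwise some other arm is optimal at a $\lambda \in \Theta^*$ and is $\Theta^*$-competitive), hence $\Lambda(\theta^*)=\emptyset$; conversely a genuine confusing instance makes a sub-optimal arm strictly optimal somewhere on $\Theta^*$, contributing to $\tilde C(\theta^*)>1$.

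With this equivalence the two cases fall out of the optimization problem. If $\tilde C(\theta^*)=1$ then $\Lambda(\theta^*)=\emptyset$, the inner infimum is over the empty set and equals $+\infty$, so the constraint is satisfied by $\eta=0$ and the objective is $0$; thus $L(\theta^*)=0$. If $\tilde C(\theta^*)>1$ there is a confusing $\lambda \in \Lambda(\theta^*)$; because $\mu_{\bestarm}(\lambda)=\mu_{\bestarm}(\theta^*)$ gives $\mathrm{KL}_{\bestarm}(\theta^*,\lambda)=0$, the constraint can only be met with positive weight $\eta_\arm>0$ on some sub-optimal arm $\arm \neq \bestarm$ having $\mathrm{KL}_\arm(\theta^*,\lambda)>0$, and since every such arm has $\gap_\arm>0$ the objective is bounded below by a strictly positive constant, giving $L(\theta^*)>0$. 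Equivalently I could run the Lai--Robbins argument directly: change measure from $\theta^*$ to the confusing $\lambda$, use uniform goodness to bound the likelihood ratio, and conclude that the arms distinguishing $\theta^*$ from $\lambda$ must be pulled $\Omega(\log \totalPulls)$ times.

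The main obstacle I anticipate is the boundary subtlety created by imposing neither continuity of $\mu_\arm(\cdot)$ nor knowledge of the reward laws, both of which \citep{combes2017minimal} assume. Concretely, $\Theta^*$-competitiveness of arm $\arm$ only guarantees $\mu_\arm(\lambda)=\max_\ell \mu_\ell(\lambda)$ for some $\lambda \in \Theta^*$, which may be a \emph{tie} with $\bestarm$ rather than a strict inequality, whereas the KL constraint only bites for a strict confusing instance ($\mu_\arm(\lambda)>\mu_{\bestarm}(\lambda)$). I would resolve this by manufacturing a strict confusing parameter whenever $\tilde C(\theta^*)>1$ --- perturbing $\lambda$ within $\Theta$, or invoking the $\epsilon \to 0$ competitiveness definition to extract a sequence $\lambda_\epsilon \in \Theta^{*(\epsilon)}$ along which arm $\arm$ is strictly optimal --- and by checking that the Gaussian instantiation keeps all constructed alternatives admissible, so that the reduction to \citep{combes2017minimal} is legitimate and the exact-measure condition $\mathrm{KL}_{\bestarm}(\theta^*,\lambda)=0$ is preserved.
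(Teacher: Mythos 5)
Your proposal is correct and follows essentially the same route as the paper: both reduce to Theorem 1 of \citep{combes2017minimal}, observe that the optimization problem has value $0$ exactly when the confusing set $\Lambda(\theta^*)\subseteq\Theta^*$ is empty, and identify that emptiness with $\tilde{C}(\theta^*)=1$. You are in fact somewhat more careful than the paper's one-paragraph argument --- the paper handles the unknown-reward-law issue by simply assuming the distributions are parameterized by their means (so $\mu_k(\theta)=\mu_k(\lambda)$ forces $D(\theta,\lambda,k)=0$) rather than via your explicit Gaussian instantiation, and it does not address the tie/strictness corner case you flag.
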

This suggests that bounded regret is possible only when $\tilde{C}(\theta) = 1$ and logarithmic regret is unavoidable in all other cases. The proof of this proposition follows from a bound derived in \citep{combes2017minimal} and it is given in Appendix B. 

There is a subtle difference between $C(\theta^*)$ and $\tilde{C}(\theta^*)$. This arises in corner case situations when a $\Theta^*$-Non-Competitive arm is competitive. Note that the set $\Theta^* = \{\theta:  \mu_{k^*}(\theta^*) = \mu_{k^*}(\theta) \}$ can be interpreted as the confidence set obtained when we pull the optimal arm $k^*$ infinitely many times. In practice, if we sample the optimal arm a \emph{large} number of times, we can only obtain the confidence set $\Theta^{*(\epsilon)} = \{\theta: |\mu_{k^*}(\theta^*) - \mu_{k^*}(\theta)| < \epsilon \}$ for some $\epsilon > 0$. Due to this, there is a difference between $\tilde{C}(\theta^*)$ and $C(\theta^*)$. Consider the case shown in \Cref{fig:cornerCompetitive} with $\theta^* = 3$. For $\theta^* = 3$, Arm 1 is optimal. In this case $\Theta^* = [2,4]$. For all values of $\theta \in \Theta^*$, $\mu_2(\theta) \leq \mu_1(\theta)$ and hence Arm 2 is $\Theta^*$-Non-Competitive. However, for any $\epsilon > 0$, Arm 2 is $\Theta^{*(\epsilon)}$-competitive and hence Competitive. Due to this, we have $\tilde{C}(3) = 1$ and $C(3) = 2$ in this case. 

If $\Theta$ is a countable set, a $\Theta^*$-Non-Competitive arm is always $\Theta^{*(\epsilon)}$-Non-Competitive, that is, $\tilde{C}(\theta^*) = C(\theta^*)$. This occurs because one can always choose $\epsilon = \min_{\{\theta \in \Theta \setminus \Theta^*\}} \{|\mu_{k^*}(\theta^*) - \mu_{k^*}(\theta)|\}$ so that a $\Theta^*$-Non-Competitive arm is also $\Theta^{*(\epsilon)}$-Non-Competitive. This shows that when $\Theta$ is a countable set (which is true for most practical situations where the hidden parameter $\theta$ is {\em discrete}), UCB-C achieves bounded regret \textit{whenever possible}, that is, whenever $\tilde{C}(\theta^*)=1$. While this property holds true for the case when $\Theta$ is a countable set, there can be more general cases where $C(\theta) = \tilde{C}(\theta)$. Our algorithms and regret analysis are valid regardless of $\Theta$ being countable or not.

\begin{figure}[t]
    \centering
    \includegraphics[width = 0.46\textwidth]{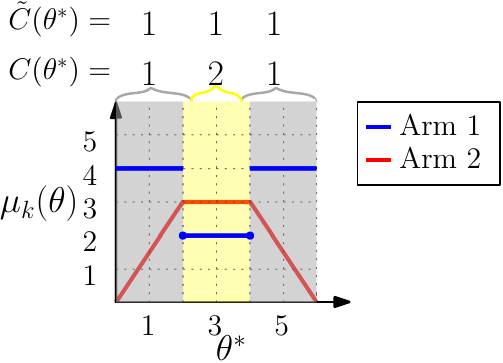}
    \caption{\sl For values of $\theta \in [2,4]$ Arm 2 is $\Theta^*$-Non-Competitive but it is still Competitive. As for any set slightly bigger than $\Theta$, i.e., $\Theta^{*(\epsilon)}$, it is $\Theta^{*(\epsilon)}$-Competitive. Hence this is one of the corner case situations where $C(\theta)$ and $\tilde{C}(\theta^*)$ are different.}
    \label{fig:cornerCompetitive}
    \vspace{-4mm}
\end{figure}

\section{Additional Exploration of Non-competitive But  Informative Arms}
The previous discussion shows that the UCB-C and TS-C algorithms enable substantial reductions in the effective number of arms and the expected cumulative regret. 
A strength of the proposed algorithms that can be a weakness in some cases is that they stop pulling non-competitive arms that are unlikely to be optimal after some finite number of steps. Although an arm may be non-competitive in terms of its reward yield, it can be useful in inferring the hidden parameter $\theta^*$, which in turn may help reduce the regret incurred in subsequent steps. For instance, consider the example shown in \Cref{fig:simyo2}. Here, Arm 3 is sub-optimal for all values of $\theta^* \in [0,6]$ and is never pulled by UCB-C, but it can help identify whether $\theta^* \geq 3$ or $\theta^* < 3$.
Motivated by this, we propose an add-on to Algorithm-C, named as the Informative Algorithm-C (\Cref{alg:Informative}), that takes the \emph{informativeness} of arms into account and performs additional exploration of the \textit{most informative arm} with a probability that decreases over time.

\begin{figure}[ht]
    \centering
    \includegraphics[width = 0.43\textwidth]{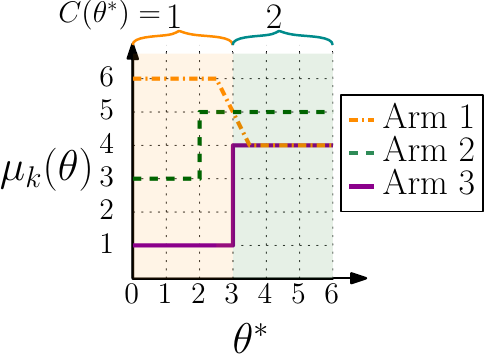}
    \caption{\sl In this example, Arm 3 has $\mu_3(\theta^*) = 1$ for $\theta^* < 3$ and $\mu_3(\theta^*) = 4$ for $\theta^* \geq 3$. See that Arm 3 is sub-optimal for all values of $\theta^*$, and hence is non-competitive for all $\theta^*$. However, a few pulls of Arm 3 can still be useful in getting some information on whether $\theta^* \geq 3$ or $\theta^* < 3$. 
    }
    \label{fig:simyo2}
\end{figure}

\subsection{Informativeness of an Arm}
Intuitively, an arm is informative if it helps us to obtain information about the hidden parameter $\theta^*$. At the end of round $t$, we know a confidence interval $\hat{\Theta}_t$ for the hidden parameter $\theta^*$. We aim to quantify the informativeness of an arm with respect to this confidence set $\hat{\Theta}_t$. For instance, if $\hat{\Theta}_t \in [2,4]$ in \Cref{fig:simyo2}, we see that the reward function of Arm 3 $\mu_3(\theta)$ has high variance and it suggests that the samples of Arm 3 could be helpful in knowing about $\theta^*$. On the other hand, samples of Arm 2 will not be useful in identifying $\theta^*$ if $\hat{\Theta}_t = [2,4]$. There can be several ways of defining the informativeness $I_k(\hat{\Theta}_t)$ of an arm with respect to set $\hat{\Theta}_t$. We consider the following two metrics in this paper.

\vspace{0.2cm} \noindent  \textbf{KL-Divergence.} Assuming that $\theta$ has a uniform distribution in $\hat{\Theta}_t$, we can define the informativeness $I_k(\hat{\Theta}_t)$ of an arm as the expected KL-Divergence between two samples of arm $k$, i.e., $I_k(\hat{\Theta}_t) = \mathbb{E}_{\theta_1, \theta_2} \left[ D_{KL}(f_{R_k}(R_k|\theta_1), f_{R_k}(R_k| \theta_2)) \right]$. Our intuition here is that larger expected KL-divergence for an arm indicates that samples from it have substantially different distributions under different $\theta^*$ values, which in turn indicates that those samples will be useful in inferring the true value of $\theta^*$.
Assuming that $\Pr(R_k|\theta)$ is a Gaussian distribution with mean $\mu_k(\theta)$ and variance $\sigma^2$, then the expected KL-Divergence can be simplified as
    \begin{align*}
        &\mathbb{E}_{\theta_1, \theta_2} \left[ D_{KL}(f_{R_k}(R_k|\theta_1), f_{R_k}(R_k| \theta_2)) \right]  \nonumber \\
        &= \mathbb{E}_{\theta_1, \theta_2} \left[ D_{KL}(\mathcal{N}(\mu_k(\theta_1), \sigma^2), \mathcal{N}(\mu_k(\theta_2), \sigma^2)) \right] \\
        &= \mathbb{E}_{\theta_1, \theta_2} \left[\frac{1}{2}(\mu_k(\theta_1) - \mu_k(\theta_2))^2 \right] \\
        &= \int_{\hat{\Theta}_t} \left(\mu_k(\theta) - \int_{\hat{\Theta}_t}\mu_k(\theta) U(\theta) \textrm{d}\theta \right)^2 U(\theta)\textrm{d}\theta = V_k(\hat{\Theta}_t), 
    \end{align*}
    where, $V_k(\hat{\Theta}_t)$ is the variance in the mean reward function $\mu_k(\theta)$, calculated when $\theta$ is uniformly distributed over the current confidence set $\hat{\Theta}_t$. Observe that the metric $I_k(\hat{\Theta}_t) = V_k(\hat{\Theta}_t)$ is easy to evaluate given the functions $\mu_k(\theta)$ and the confidence set obtained from Step 1.
    
\vspace{0.2cm} \noindent \textbf{Entropy.} Alternatively, $\mu_k(\theta)$ can be viewed as a derived random variable of $\theta$, where $\theta$ is uniformly distributed over the current confidence set $\hat{\Theta}_t$. The informativeness of arm $k$ can then be defined as $I_k(\hat{\Theta}_t) = H(\mu_k(\theta))$. When $\mu_k(\theta)$ is discrete this will be the Shannon entropy $H(\mu_k(\theta)) = \sum_{\theta \in \hat{\Theta}_t} -\Pr(\mu_k(\theta)) \log(\Pr(\mu_k(\theta)))$, while for continuous $\mu_k(\theta)$ it will be the {\em differential entropy} $H(\mu_k(\theta)) = \int_{\hat{\Theta}_t} -f_{\mu_k(\theta)} \log(f_{\mu_k(\theta)}) d(\mu_k(\theta))$ where $f_{\mu_k(\theta)}$ is the probability density function of the derived random variable $\mu_k(\theta)$. Observe that differential entropy takes into account the shape as well as the range of $\mu_k(\theta)$. For example, if two reward functions are linear in $\theta$, the one with a higher slope will have higher differential entropy, as we would desire from an informativeness metric. Evaluating the differential entropy in $\mu_k(\theta)$, i.e., , can be computationally challenging. 

Other than the two metrics described above, there might be alternative (and potentially more complicated) ways of quantifying the  informativeness of an arm. Another candidate would be the {\em information gain} metric proposed in \citep{russo2014learning}, which defines informativeness in terms of identifying the best arm, rather than inferring $\theta^*$. However, as already mentioned in \citep{russo2014learning} by the authors, information gain is computationally challenging to implement in practice outside of certain specific class of problems where prior distribution of $\theta$ is Beta or Gaussian.

\subsection{Proposed \textsc{Informative Algorithm}-C and its Expected Regret}
Given an informativeness metric $I_k(\hat{\Theta}_t)$, we define the most informative arm for the confidence set $\hat{\Theta}_t$ as $\arm_{\hat{\Theta}_t} = \arg \max_{\arm \in \mathcal{K}} I_k(\hat{\Theta}_t)$. At round $t$, Informative Algorithm-C (described in \Cref{alg:Informative}) picks the most informative arm $k_{\hat{\Theta}_t}$ with probability $\frac{\gamma}{t^{d}}$ where $d>1$, and otherwise uses UCB-C or TS-C to pull one of the competitive arms.Here, $\gamma$ and $d$ are hyperparameters of the Informative UCB-C algorithm. Larger $\gamma$ or small $d$ results in more exploration during the initial rounds. Setting the probability of pulling the most informative arm as $\frac{\gamma}{t^{d}}$ ensures that the algorithm pulls the informative arms more frequently at the beginning. This helps shrink $\hat{\Theta}_t$ faster. Setting $d>1$ ensures that informative but non-competitive arms are only pulled $\sum_{t = 1}^{\infty} \frac{\gamma}{t^{d}}= \OO(1)$ times in expectation. Thus, asymptotically the algorithm will behave exactly as the underlying Algorithm-C and the regret of Informative-Algorithm-C is at most an $\OO(1)$ constant worse than the Algorithm-C algorithm.

\begin{algorithm}[t]
\hrule 
\vspace{0.1in}
\begin{algorithmic}
\STATE1: Steps 1 to 5 as in \Cref{alg:formalAlgoUCBC}
\STATE2:  \textbf{Identify $k_{\hat{\Theta}_t}$, i.e., the most informative arm for set $\hat{\Theta}_t$}: 
\STATE $\arm_{\hat{\Theta}_t} = \arg \max_{\arm \in \mathcal{K}} I_k(\hat{\Theta}_t)$
\STATE3: \textbf{Play informative arm with probability $\frac{\gamma}{t^{d}}$, play UCB-C otherwise:} 
\STATE $$ \small{\arm_{\slot + 1} = \begin{cases} k_{\hat{\Theta}_t} &\text{w.p.} ~ \frac{\gamma}{t^{d}}, \\ \arg \max_{\arm \in \competitiveArms_\slot} \left( \hat{\mu}_\arm(\slot) + \sqrt{\frac{2 \alpha \sigma^2 \log \slot}{\pulls_\arm(\slot)}} \right) ~ &\text{w.p.} \quad 1 - \frac{\gamma}{t^{d}} \end{cases} }$$ 
\STATE4: Update empirical mean, $\hat{\mu}_\arm$ and $\pulls_\arm$ for arm $\arm_{t+1}$.
\end{algorithmic}
\vspace{0.1in}
\hrule
\caption{Informative UCB-C}
\label{alg:Informative}
\end{algorithm}

\subsection{Simulation results}
We implement two versions of Informative-Algorithm-C, namely \textsc{Algorithm}-C-KLdiv and \textsc{Algorithm}-C-Entropy, which use the KL-divergence and Entropy metrics respectively to identify the most informative arm $k_{\hat{\Theta}_t}$ at round $t$. \textsc{Algorithm}-C-KLdiv picks the arm with highest variance in $\hat{\Theta}_t$, i.e., $I_k(\hat{\Theta}_t) = \arg\max_k V_k(\hat{\Theta}_t)$. \textsc{Algorithm}-C-Entropy picks an arm whose mean reward function, $\mu_k(\theta)$, has largest shannon entropy for $\theta \in \hat{\Theta}_t$ (assuming $\theta$ to be a uniform random variable in $\hat{\Theta}_t$). As a baseline for assessing the effectiveness of the informativeness metrics, we also implement \textsc{Algorithm}-C-Random which selects $k_{\hat{\Theta}_t}$ by sampling one of the arms uniformly at random from the set of all arms $\mathcal{K}$ at round $t$.

\begin{figure}[t]
    \centering
    \includegraphics[width = 0.75\textwidth]{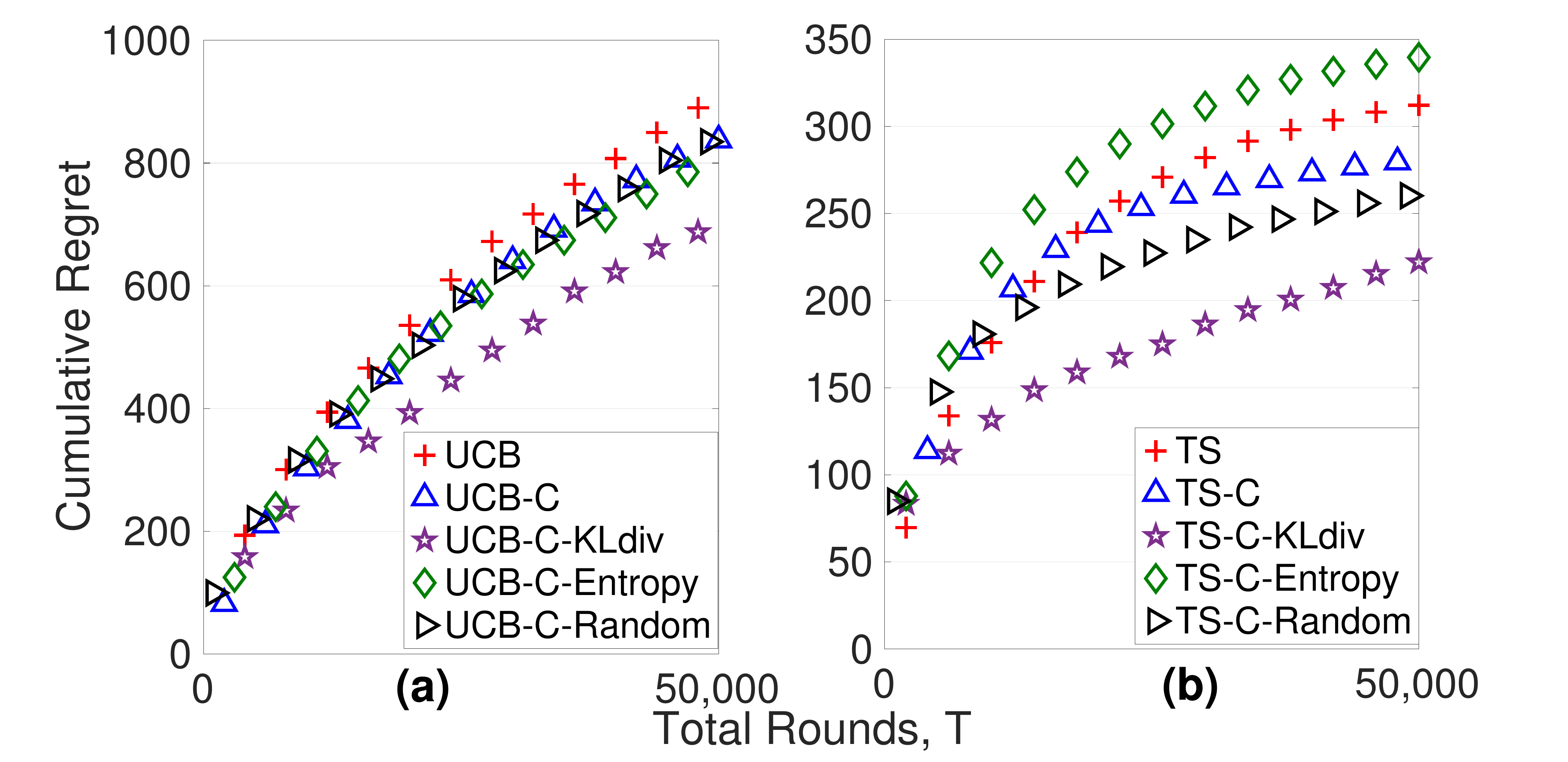}
    \caption{\sl Performance comparison of \textsc{Algorithm}, \textsc{Algorithm}-C and Informative \textsc{Algorithm}-C algorithms (with parameter $\gamma =30$, $d = 1.1$) for the example shown in \Cref{fig:simyo2} with $\theta^* = 3.1$. UCB-C, TS-C do not pull Arm 3 at all, but UCB-C-KLdiv, TS-C-KLdiv pull it in the initial rounds to determine whether $\theta^* > 3$ or not. As a result, UCB-C-KLdiv and TS-C-KLdiv shrink $\hat{\Theta}_t$ faster initially and have a better empirical performance than UCB-C and TS-C, while retaining similar regret guarantees of UCB-C and TS-C respectively.}
    \label{fig:simyo1}
\end{figure}

\Cref{fig:simyo1} shows the cumulative regret of the aforementioned  algorithms for the reward functions shown in \Cref{fig:simyo2}, where the hidden parameter $\theta^* = 3.1$. Among UCB-C, UCB-C-KLdiv, UCB-C-Entropy and UCB-C-Random, UCB-C-KLdiv has the smallest cumulative regret. This is because UCB-C-KLdiv identifies Arm 3 as the most informative arm, samples of which are helpful in identifying whether $\theta^* > 3$ or $\theta^* < 3$. Hence, occasional pulls of Arm 3 lead to fast shrinkage of the set $\hat{\Theta}_t$. In contrast to UCB-C-KLdiv, UCB-C-Entropy identifies Arm 2 as the most informative arm for $\hat{\Theta} = [0,6]$, due to which UCB-C-Entropy samples Arm 2 more often in the initial stages of the algorithm. As the information obtained from Arm 2 is relatively less useful in deciding whether $\theta^* > 3$ or not, we see that UCB-C-Entropy/TS-C-Entropy does not perform as well as UCB-C-KLdiv/TS-C-KLdiv in this scenario. UCB-C-Random picks the most informative arm by selecting an arm uniformly at random from the available set of arms. The additional exploration through random sampling is helpful, but the cumulative regret is larger than UCB-C-KLdiv as UCB-C-Random pulls Arm 3 fewer times relative to UCB-C-KLdiv. For this particular example, cumulative regret of UCB-S was 2500, whereas other UCB style algorithms achieve cumulative regret of 600-800 as shown in the \Cref{fig:simyo1}(a). This is due to the preference of UCB-S to pick Arm 1 in this example. We see similar trends among TS-C, TS-C-KLdiv,  TS-C-Entropy and TS-C-Random. The cumulative regret is smaller for Thompson sampling variants as Thompson sampling is known to outperform UCB empirically.

We would like to highlight that the additional exploration by Informative-Algorithm-C is helpful only in cases where non-competitive arms help significantly shrink the confidence set $\hat{\Theta}_t$. For the experimental setup presented in \Cref{sec:experiments} below, the reward functions are mostly flat as seen in Appendix G and thus, Informative Algorithm-C does not give a significant improvement over the corresponding Algorithm-C. Therefore for clarity of the plots, we do not present experiments results for Informative-C in other settings of this paper.

\section{Experiments with Movielens data}
\label{sec:experiments}
We now show the performance of UCB-C and TS-C on a real-world dataset. We use the {\sc Movielens} dataset \citep{movielenspaper} to demonstrate how UCB-C and TS-C can be deployed in practice and demonstrate their superiority over classical UCB and TS. Since movie recommendations is one of many applications of structured bandits, we do not compare with methods such as collaborative filtering that are specific to recommendation systems. Also, we do not compare with contextual bandits since the structured bandit setting has a different goal of making recommendations \emph{without accessing a user's contextual features}. 

The {\sc Movielens} dataset contains a total of 1M ratings made by $6040$ users for $3883$ movies. There are $106$ different user {\em types} (based on having distinct age and occupation features) and $18$ different genres of movies. The users have given ratings to the movies on a scale of $1$ to $5$. Each movie is associated with one (and in some cases, multiple) genres. For the experiments, of the possibly multiple genres for each movie, we choose one uniformly at random. The set of users that belong to a given type is referred to as  a {\em meta-user}; thus there are $106$ different meta-users. These $106$ different meta-users correspond to the different values that the hidden parameter $\theta$ can take in our setting. For example, one of the meta-users in the data-set represents college students whose age is between $18$ and $24$, and this corresponds to the case $\theta^*=25$. We split the dataset into two equal parts, training and test. This split is done at random, while ensuring that the training dataset has samples from all $106$ meta-users.

For a particular meta-user whose features are unknown (i.e., the true value of  $\theta$ is hidden), we need to sequentially choose one of the genres (i.e., one of the arms) and recommend a movie from that genre to the user. In doing so, our goal is to maximize the {\em total} rating given by this user to the movies we recommended. We use the training dataset ($50\%$ of the whole data) to learn the mean reward mappings from meta-users ($\theta$) to different genres (arms); these mappings are shown in Appendix G. The learned mappings indicate that the mean-reward mappings of meta-users for different genres are related to one another. For example, on average 56+ year old retired users may like documentaries more than children's movies. In our experiments, these dependencies are learned during the training. In practical settings of recommendations or advertising, these mappings can be learned from pilot surveys in which users participate with their consent.

\begin{figure}[t]
    \centering
    \includegraphics[width = 0.65\textwidth]{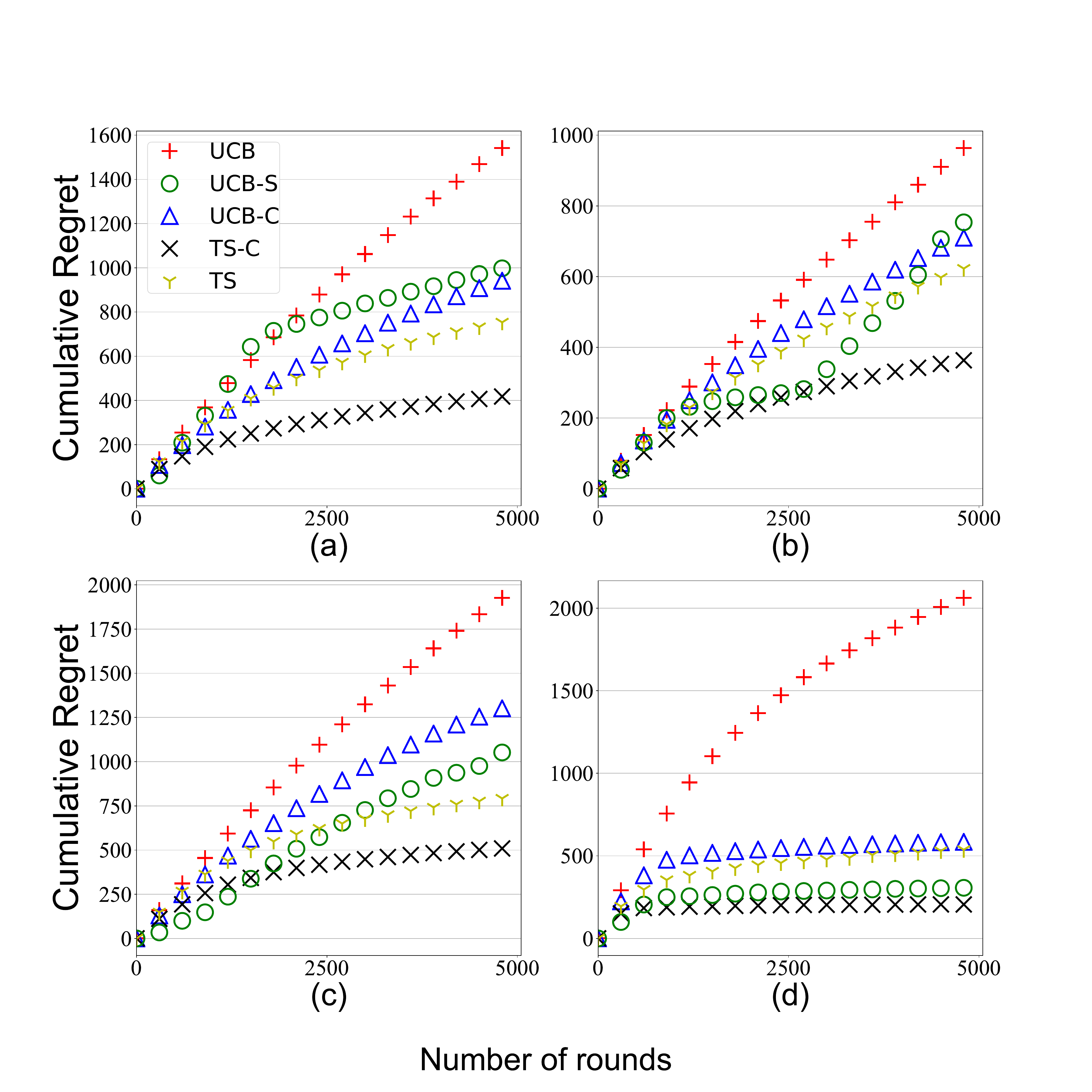}
    \caption{\sl Regret plots for UCB, UCB-S, UCB-C, TS and TS-C for (a) $\theta^* = 67$ (35-44 year old grad/college students), (b) $\theta^* = 87$ (45-49 year old clerical/admin), (c) $\theta^* = 25$ (18-24 year old college students) and (d) $\theta^* = 93$ (56+ Sales and Marketing employees). The value of $C(\theta^*)$ is $6$, $6$, $3$ and $1$ for (a), (b), (c) and (d) respectively -- in all cases $C(\theta^*)$ is much smaller than $K=18$.
}
    \label{fig:theta25}
\end{figure}

\begin{figure}[t]
    \centering
    \includegraphics[width = 0.7\textwidth]{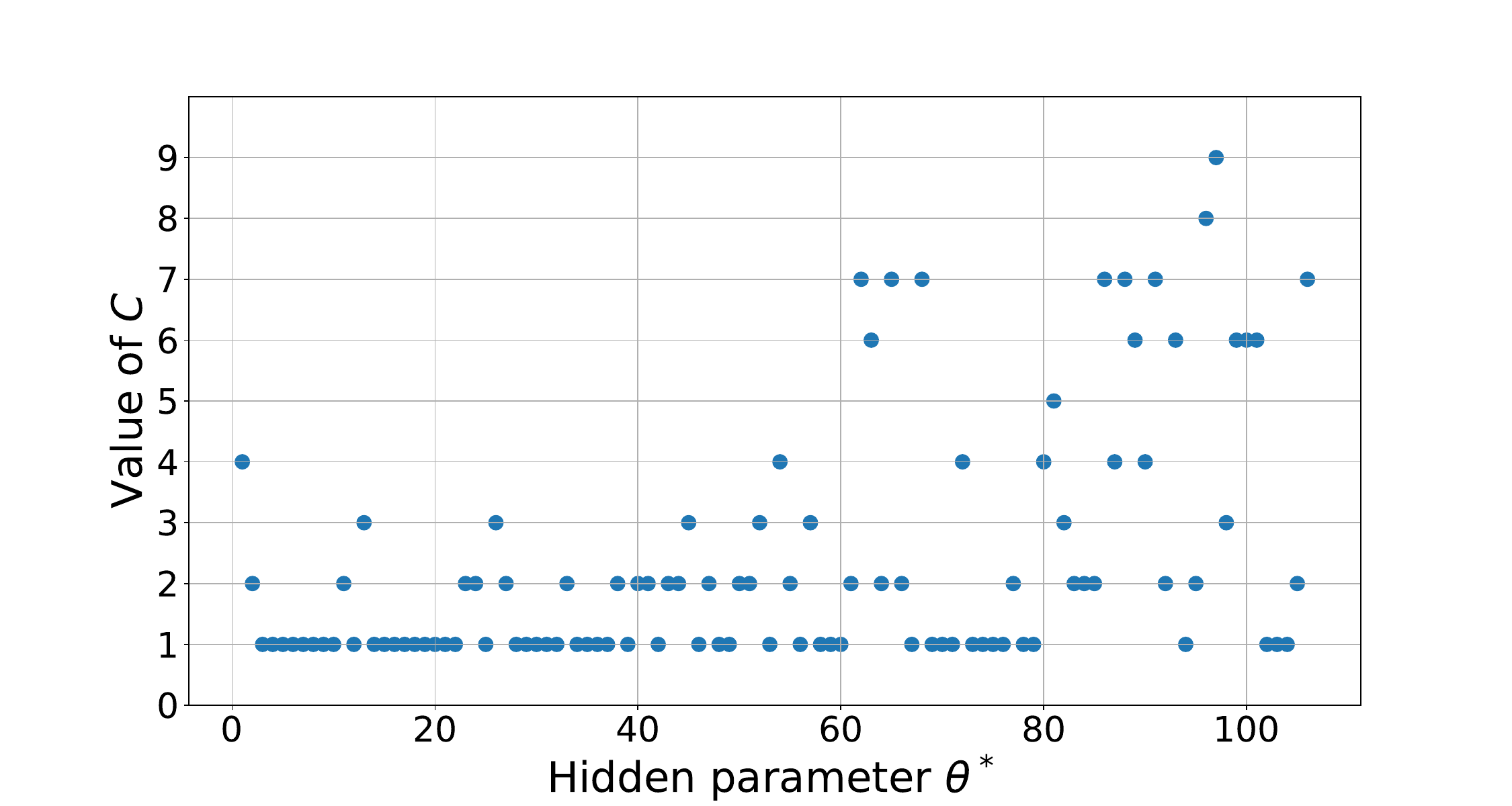}
    \caption{\sl The value of $C(\theta^*)$ varies with the unknown hidden parameter $\theta^*$ (i.e., the age and occupation of the anonymous user). We see that for all $\theta^*$, $C(\theta^*) < K$. While the total number of arms, $K = 18$, the value of $C(\theta^*)$ ranges between $1$ and $9$. This suggests that the $\textsc{Algorithm}$-C approach can lead to significant performance improvement for this problem.}\label{fig:valueofC}
\end{figure}

We test the algorithm for three different meta-users, i.e., for three different values of $\theta^*$. The movie rating samples for these meta-users are obtained from the test dataset, (the remaining $50\%$ of the data). 
\Cref{fig:theta25} shows that UCB-C and TS-C  achieve significantly lower regret than UCB, TS as only a few arms are pulled $\OO(\log T)$ times. This is because only $C(\theta^*) - 1$ of the sub-optimal arms are pulled $\OO(\log T)$ times by our UCB-C and TS-C algorithms. For our experimental setting, the value of $C$ depends on  $\theta^*$ (which is unknown to the algorithm). \Cref{fig:valueofC} shows how $C(\theta^*)$ varies with $\theta^*$, where it is seen  that $C(\theta^*)$ is significantly smaller than $K$ for all $\theta^*$. As a result, the performance improvements observed in 
 \Cref{fig:theta25} for
our UCB-C and TS-C algorithms will apply to other $\theta^*$ values as well. There are $\theta^*$ values for which UCB-C is better than UCB-S, and vice versa. But, TS-C always outperforms UCB-C and UCB-S in our experiments. We tried Informative UCB-C in this setting as well, but the results were similar to that of UCB-C because the arms in this setting are not too informative.

\section{Concluding Remarks}
In this work, we studied a structured bandit problem in which the mean rewards of different arms are related through a common hidden parameter. Our problem setting makes no assumptions on mean reward functions, due to which it subsumes several previously studied frameworks \cite{ata2015global,wang2018regional,mersereau2009structured}. We developed an approach that allows us to extend a classical bandit \textsc{Algorithm} to the structured bandit setting, which we refer to as \textsc{Algorithm-C}. We provide a regret analysis of UCB-C (structured bandit versions of UCB). A key insight from this analysis is that \textsc{Algorithm-C} pulls only $C(\theta^*)-1$ of the $K-1$ sub-optimal arms $\OO(\log T)$ times and all other arms, termed as \textit{non-competitive} arms, are pulled only $\OO(1)$ times. Through experiments on the {\sc Movielens} dataset, we demonstrated that UCB-C and TS-C give significant improvements in regret as compared to previously proposed approaches. Thus, the main implication of this paper is that it provides a unified approach to exploit the structured rewards to drastically reduce exploration in a principled manner.

For cases where non-competitive arms can provide information about $\theta$ that can shrink the confidence set $\hat{\Theta}_t$, we propose a variant of \textsc{Algorithm-C} called informative-\textsc{Algorithm-C} that takes the informativeness of arms into account without increasing unnecessary exploration. Linear bandit algorithms \cite{abbasi2011improved, filippi2010parametric, lattimore2016end} shrink the confidence set $\hat{\Theta}_t$ in a better manner by taking advantage of the linearity of the mean reward functions to estimate $\theta^*$ as the solution to least squares problem \cite{abbasi2011improved}. Moreover, linearity helps them to use self-normalized concentration bound for vector valued martingale, (Theorem 1 in \cite{abbasi2011improved}) to construct the confidence intervals.
Extending this approach to the general structured bandit setting is a non-trivial open question due to the absence of constraints on the nature of mean reward functions $\mu_k(\theta)$. The paper \cite{combes2017minimal} proposes a statistical hypothesis testing method for the case of known conditional reward distributions. Generalizing it to the setting considered in this paper is an open future direction. While we state our results for a scenario where mean reward functions are known, our algorithmic approach, analysis and results can also be extended to a setting where only lower and upper bounds on the mean reward function $\mu_k(\theta)$ are known. This setting is discussed in Appendix A. Another open direction in this field is to study the problem of structured best-arm identification where the goal is to conduct pure exploration and identify the best arm in the fewest number of rounds.

\bibliography{multi_armed_bandit}

\begin{thebibliography}{45}
\providecommand{\natexlab}[1]{#1}
\providecommand{\url}[1]{\texttt{#1}}
\expandafter\ifx\csname urlstyle\endcsname\relax
  \providecommand{\doi}[1]{doi: #1}\else
  \providecommand{\doi}{doi: \begingroup \urlstyle{rm}\Url}\fi

\bibitem[Abbasi-Yadkori et~al.(2011)Abbasi-Yadkori, P{\'a}l, and
  Szepesv{\'a}ri]{abbasi2011improved}
Yasin Abbasi-Yadkori, D{\'a}vid P{\'a}l, and Csaba Szepesv{\'a}ri.
\newblock Improved algorithms for linear stochastic bandits.
\newblock In \emph{Advances in Neural Information Processing Systems}, pages
  2312--2320, 2011.

\bibitem[Agrawal and Goyal(2012)]{agrawal2012analysis}
Shipra Agrawal and Navin Goyal.
\newblock Analysis of thompson sampling for the multi-armed bandit problem.
\newblock In \emph{Conference on Learning Theory}, pages 39--1, 2012.

\bibitem[Agrawal and Goyal(2013{\natexlab{a}})]{agrawal2013further}
Shipra Agrawal and Navin Goyal.
\newblock Further optimal regret bounds for thompson sampling.
\newblock In \emph{Artificial Intelligence and Statistics}, pages 99--107,
  2013{\natexlab{a}}.

\bibitem[Agrawal and Goyal(2013{\natexlab{b}})]{agrawal2013thompson}
Shipra Agrawal and Navin Goyal.
\newblock Thompson sampling for contextual bandits with linear payoffs.
\newblock In \emph{Proceedings of the International Conference on Machine
  Learning (ICML)}, 2013{\natexlab{b}}.
\newblock URL \url{http://dl.acm.org/citation.cfm?id=3042817.3043073}.

\bibitem[Atan et~al.(2015)Atan, Tekin, and van~der Schaar]{ata2015global}
Onur Atan, Cem Tekin, and Mihaela van~der Schaar.
\newblock Global multi-armed bandits with {H{\"o}lder} continuity.
\newblock In \emph{AISTATS}, 2015.

\bibitem[Audibert et~al.(2010)Audibert, Bubeck, and Munosr]{audibert2010best}
Jean-Yves Audibert, Sebastien Bubeck, and R\'emi Munosr.
\newblock Best arm identification in multi-armed bandits.
\newblock In \emph{Proceedings of the annual Conference On Learning Theory
  (COLT)}, pages 359--376, 2010.

\bibitem[Auer et~al.(2002)Auer, Cesa-Bianchi, and Fischer]{auer2002finite}
Peter Auer, Nicolo Cesa-Bianchi, and Paul Fischer.
\newblock Finite-time analysis of the multiarmed bandit problem.
\newblock \emph{Machine learning}, 47\penalty0 (2-3):\penalty0 235--256, 2002.

\bibitem[Bubeck et~al.(2009)Bubeck, Munos, and Stoltz]{bubeck2009pure}
S{\'e}bastien Bubeck, R{\'e}mi Munos, and Gilles Stoltz.
\newblock Pure exploration in multi-armed bandits problems.
\newblock In Ricard Gavald{\`a}, G{\'a}bor Lugosi, Thomas Zeugmann, and Sandra
  Zilles, editors, \emph{Algorithmic Learning Theory}, pages 23--37, Berlin,
  Heidelberg, 2009. Springer Berlin Heidelberg.

\bibitem[Bubeck et~al.(2012)Bubeck, Cesa-Bianchi, et~al.]{bubeck2012regret}
S{\'e}bastien Bubeck, Nicolo Cesa-Bianchi, et~al.
\newblock Regret analysis of stochastic and nonstochastic multi-armed bandit
  problems.
\newblock \emph{Foundations and Trends in Machine Learning}, 5\penalty0
  (1):\penalty0 1--122, 2012.

\bibitem[Chapelle and Li(2011)]{chapelle2011empirical}
Olivier Chapelle and Lihong Li.
\newblock An empirical evaluation of thompson sampling.
\newblock In J.~Shawe-Taylor, R.~S. Zemel, P.~L. Bartlett, F.~Pereira, and
  K.~Q. Weinberger, editors, \emph{Advances in Neural Information Processing
  Systems}, pages 2249--2257, 2011.

\bibitem[Combes et~al.(2017)Combes, Magureanu, and
  Prouti{\`e}re]{combes2017minimal}
Richard Combes, Stefan Magureanu, and Alexandre Prouti{\`e}re.
\newblock Minimal exploration in structured stochastic bandits.
\newblock In \emph{NIPS}, 2017.

\bibitem[Filippi et~al.(2010)Filippi, Cappe, Garivier, and
  Szepesv{\'a}ri]{filippi2010parametric}
Sarah Filippi, Olivier Cappe, Aur{\'e}lien Garivier, and Csaba Szepesv{\'a}ri.
\newblock Parametric bandits: The generalized linear case.
\newblock In \emph{Advances in Neural Information Processing Systems}, pages
  586--594, 2010.

\bibitem[Gabillon et~al.(2012)Gabillon, Ghavamzadeh, and
  Lazaric]{gabillon2012best}
Victor Gabillon, Mohammad Ghavamzadeh, and Alessandro Lazaric.
\newblock Best arm identification: A unified approach to fixed budget and fixed
  confidence.
\newblock In \emph{Advances in Neural Information Processing Systems}, pages
  3212--3220, 2012.

\bibitem[Garivier and Capp{\'e}(2011)]{garivier2011kl}
Aur{\'e}lien Garivier and Olivier Capp{\'e}.
\newblock The kl-ucb algorithm for bounded stochastic bandits and beyond.
\newblock In \emph{Proceedings of the 24th annual Conference On Learning
  Theory}, pages 359--376, 2011.

\bibitem[Garivier and Kaufmann(2016)]{garivier2016optimal}
Aur\'{e}lien Garivier and Emilie Kaufmann.
\newblock Optimal best arm identification with fixed confidence.
\newblock In \emph{Annual Conference on Learning Theory (COLT)}, volume~49 of
  \emph{Proceedings of Machine Learning Research}, pages 998--1027, Columbia
  University, New York, New York, USA, 23--26 Jun 2016. PMLR.
\newblock URL \url{http://proceedings.mlr.press/v49/garivier16a.html}.

\bibitem[Gopalan et~al.(2014)Gopalan, Mannor, and Mansour]{gopalan2014thompson}
Aditya Gopalan, Shie Mannor, and Yishay Mansour.
\newblock Thompson sampling for complex online problems.
\newblock In \emph{Proceedings of the 31st International Conference on Machine
  Learning}, volume~32 of \emph{Proceedings of Machine Learning Research},
  pages 100--108, Bejing, China, 22--24 Jun 2014. PMLR.
\newblock URL \url{http://proceedings.mlr.press/v32/gopalan14.html}.

\bibitem[Graves and Lai(1997)]{graves1997asymptotically}
Todd~L Graves and Tze~Leung Lai.
\newblock Asymptotically efficient adaptive choice of control laws incontrolled
  markov chains.
\newblock \emph{SIAM journal on control and optimization}, 35\penalty0
  (3):\penalty0 715--743, 1997.

\bibitem[Harper and Konstan(2015)]{movielenspaper}
F.~Maxwell Harper and Joseph~A. Konstan.
\newblock The movielens datasets: History and context.
\newblock \emph{ACM Transactions on Interactive Intelligent Systems (TiiS)}, 5,
  4, Article 19, 2015.

\bibitem[Heckel et~al.(2016)Heckel, Shah, Ramchandran, and
  Wainwright]{heckel2016active}
Reinhard Heckel, Nihar~B. Shah, Kannan Ramchandran, and Martin~J. Wainwright.
\newblock Active ranking from pairwise comparisons and the futility of
  parametric assumptions.
\newblock \emph{arXiv}, abs/1606.08842, 2016.
\newblock URL \url{http://arxiv.org/abs/1606.08842}.

\bibitem[Huang et~al.(2017)Huang, Ajallooeian, Szepesv\'{a}ri, and
  M\"{u}ller]{huang2017structured}
Ruitong Huang, Mohammad~M. Ajallooeian, Csaba Szepesv\'{a}ri, and Martin
  M\"{u}ller.
\newblock Structured best arm identification with fixed confidence.
\newblock In \emph{Proceedings of the International Conference on Algorithmic
  Learning Theory (ALT)}, volume~76 of \emph{Proceedings of Machine Learning
  Research}, pages 593--616, Kyoto University, Kyoto, Japan, October 2017.
\newblock URL \url{http://proceedings.mlr.press/v76/huang17a.html}.

\bibitem[Jamieson and Nowak(2014)]{jamieson2014best}
K.~Jamieson and R.~Nowak.
\newblock Best-arm identification algorithms for multi-armed bandits in the
  fixed confidence setting.
\newblock In \emph{Proceedings on the Annual Conference on Information Sciences
  and Systems (CISS)}, pages 1--6, March 2014.

\bibitem[Jamieson et~al.(2014)Jamieson, Malloy, Nowak, and
  Bubeck]{jamieson2014lil}
Kevin Jamieson, Matthew Malloy, Robert Nowak, and S{\'e}bastien Bubeck.
\newblock lil’ucb: An optimal exploration algorithm for multi-armed bandits.
\newblock In \emph{Conference on Learning Theory}, pages 423--439, 2014.

\bibitem[Jamieson et~al.(2015)Jamieson, Jain, Fernandez, Glattard, and
  Nowak]{jamieson2015next}
Kevin~G Jamieson, Lalit Jain, Chris Fernandez, Nicholas~J Glattard, and Rob
  Nowak.
\newblock Next: A system for real-world development, evaluation, and
  application of active learning.
\newblock In \emph{Advances in Neural Information Processing Systems}, pages
  2656--2664, 2015.

\bibitem[Kaufmann et~al.(2016)Kaufmann, Capp{\'e}, and
  Garivier]{kaufmann2016complexity}
Emilie Kaufmann, Olivier Capp{\'e}, and Aur{\'e}lien Garivier.
\newblock On the complexity of best-arm identification in multi-armed bandit
  models.
\newblock \emph{The Journal of Machine Learning Research}, 17\penalty0
  (1):\penalty0 1--42, 2016.

\bibitem[Lai and Robbins(1985)]{lai1985asymptotically}
Tze~Leung Lai and Herbert Robbins.
\newblock Asymptotically efficient adaptive allocation rules.
\newblock \emph{Advances in applied mathematics}, 6\penalty0 (1):\penalty0
  4--22, 1985.

\bibitem[Lattimore and Munos(2014)]{lattimore2014bounded}
Tor Lattimore and R{\'e}mi Munos.
\newblock Bounded regret for finite-armed structured bandits.
\newblock In \emph{Advances in Neural Information Processing Systems}, pages
  550--558, 2014.

\bibitem[Lattimore and Szepesvari(2016)]{lattimore2016end}
Tor Lattimore and Csaba Szepesvari.
\newblock The end of optimism? an asymptotic analysis of finite-armed linear
  bandits.
\newblock \emph{arXiv preprint arXiv:1610.04491}, 2016.

\bibitem[Li et~al.(2010)Li, Chu, Langford, and Schapire]{li2010contextual}
Lihong Li, Wei Chu, John Langford, and Robert~E Schapire.
\newblock A contextual-bandit approach to personalized news article
  recommendation.
\newblock In \emph{Proceedings of the 19th international conference on World
  wide web}, pages 661--670. ACM, 2010.

\bibitem[Li et~al.(2017)Li, Jamieson, DeSalvo, Rostamizadeh, and
  Talwalkar]{li2017hyperband}
Lisha Li, Kevin Jamieson, Giulia DeSalvo, Afshin Rostamizadeh, and Ameet
  Talwalkar.
\newblock Hyperband: A novel bandit-based approach to hyperparameter
  optimization.
\newblock \emph{Journal of Machine Learning Research}, 18\penalty0
  (1):\penalty0 6765--6816, January 2017.
\newblock ISSN 1532-4435.
\newblock URL \url{http://dl.acm.org/citation.cfm?id=3122009.3242042}.

\bibitem[Mannor and Tsitsiklis(2004)]{mannor2004sample}
Shie Mannor and John~N Tsitsiklis.
\newblock The sample complexity of exploration in the multi-armed bandit
  problem.
\newblock \emph{Journal of Machine Learning Research}, 5\penalty0
  (Jun):\penalty0 623--648, 2004.

\bibitem[Mersereau et~al.(2009)Mersereau, Rusmevichientong, and
  Tsitsiklis]{mersereau2009structured}
A.~J. Mersereau, P.~Rusmevichientong, and J.~N. Tsitsiklis.
\newblock A structured multi-armed bandit problem and the greedy policy.
\newblock \emph{IEEE Transactions on Automatic Control}, 54\penalty0
  (12):\penalty0 2787--2802, Dec 2009.
\newblock ISSN 0018-9286.
\newblock \doi{10.1109/TAC.2009.2031725}.

\bibitem[Nino-Mora(2009)]{mora2009stochastic}
J.~Nino-Mora.
\newblock \emph{Stochastic scheduling. In Encyclopedia of Optimization}, pages
  3818--3824.
\newblock Springer, New York, 2 edition, 2009.

\bibitem[Russo and Van~Roy(2014)]{russo2014learning}
Daniel Russo and Benjamin Van~Roy.
\newblock Learning to optimize via posterior sampling.
\newblock \emph{Mathematics of Operations Research}, 39\penalty0 (4):\penalty0
  1221--1243, 2014.

\bibitem[Russo et~al.(2017)Russo, Roy, Kazerouni, and
  Osband]{russo2017tutorial}
Daniel Russo, Benjamin~Van Roy, Abbas Kazerouni, and Ian Osband.
\newblock A tutorial on thompson sampling.
\newblock \emph{CoRR}, abs/1707.02038, 2017.
\newblock URL \url{http://arxiv.org/abs/1707.02038}.

\bibitem[Sen et~al.(2017)Sen, Shanmugam, Dimakis, and
  Shakkottai]{sen2017identifying}
Rajat Sen, Karthikeyan Shanmugam, Alexandros~G Dimakis, and Sanjay Shakkottai.
\newblock Identifying best interventions through online importance sampling.
\newblock \emph{stat}, 1050:\penalty0 9, 2017.

\bibitem[Sen et~al.(2018)Sen, Shanmugam, and Shakkottai]{sen2018contextual}
Rajat Sen, Karthikeyan Shanmugam, and Sanjay Shakkottai.
\newblock Contextual bandits with stochastic experts.
\newblock In \emph{Proceedings of the Twenty-First International Conference on
  Artificial Intelligence and Statistics}, volume~84 of \emph{Proceedings of
  Machine Learning Research}, pages 852--861, 2018.
\newblock URL \url{http://proceedings.mlr.press/v84/sen18a.html}.

\bibitem[Shen et~al.(2018)Shen, Zhou, Tekin, and van~der
  Schaar]{shen2018generalized}
Cong Shen, Ruida Zhou, Cem Tekin, and Mihaela van~der Schaar.
\newblock Generalized global bandit and its application in cellular coverage
  optimization.
\newblock \emph{IEEE Journal of Selected Topics in Signal Processing},
  12\penalty0 (1):\penalty0 218--232, 2018.

\bibitem[Soare et~al.(2014)Soare, Lazaric, and Munos]{soare2014best}
Marta Soare, Alessandro Lazaric, and Remi Munos.
\newblock Best-arm identification in linear bandits.
\newblock In \emph{Advances in Neural Information Processing Systems (NIPS)},
  pages 828--836. 2014.
\newblock URL
  \url{http://papers.nips.cc/paper/5460-best-arm-identification-in-linear-bandits.pdf}.

\bibitem[T{\'a}nczos et~al.(2017)T{\'a}nczos, Nowak, and
  Mankoff]{tanczos2017kl}
Ervin T{\'a}nczos, Robert Nowak, and Bob Mankoff.
\newblock A kl-lucb algorithm for large-scale crowdsourcing.
\newblock In \emph{Advances in Neural Information Processing Systems}, pages
  5894--5903, 2017.

\bibitem[Tao et~al.(2018)Tao, Blanco, and Zhou]{tao2018best}
Chao Tao, Sa{\'u}l Blanco, and Yuan Zhou.
\newblock Best arm identification in linear bandits with linear dimension
  dependency.
\newblock In \emph{Proceedings of the International Conference on Machine
  Learning (ICML)}, volume~80 of \emph{Proceedings of Machine Learning
  Research}, pages 4877--4886, July 2018.
\newblock URL \url{http://proceedings.mlr.press/v80/tao18a.html}.

\bibitem[Tekin and Turgay(2017)]{tekin2017multi}
Cem Tekin and Eralp Turgay.
\newblock Multi-objective contextual multi-armed bandit problem with a dominant
  objective.
\newblock \emph{arXiv preprint arXiv:1708.05655}, 2017.

\bibitem[Thompson(1933)]{thompson1933likelihood}
W.~R. Thompson.
\newblock {On the Likelihood that One Unknown Probability Exceeds Another in
  View of the Evidence of Two Samples}.
\newblock \emph{Biometrika}, 25\penalty0 (3-4):\penalty0 285--294, December
  1933.

\bibitem[Villar et~al.(2015)Villar, Bowden, and Wason]{villar2015multi}
Sof{\'\i}a~S Villar, Jack Bowden, and James Wason.
\newblock Multi-armed bandit models for the optimal design of clinical trials:
  benefits and challenges.
\newblock \emph{Statistical science: a review journal of the Institute of
  Mathematical Statistics}, 30\penalty0 (2):\penalty0 199, 2015.

\bibitem[Wang et~al.(2018)Wang, Zhou, and Shen]{wang2018regional}
Zhiyang Wang, Ruida Zhou, and Cong Shen.
\newblock Regional multi-armed bandits.
\newblock In \emph{AISTATS}, 2018.

\bibitem[White(2012)]{white2012bandit}
John White.
\newblock \emph{Bandit algorithms for website optimization}.
\newblock " O'Reilly Media, Inc.", 2012.

\end{thebibliography}

\newpage
\onecolumn
\appendix

\section{Noisy mean reward functions}
\label{sec:noisyfun}
Throughout the work we assume that the mean reward functions $\mu_k(\theta)$ are known \emph{exactly}. Consider a scenario where only partial information is known about the mean reward functions $\mu_k(\theta)$. We show in this section that our proposed algorithmic approach is flexible enough to accommodate for such scenarios. The flexibility in our framework can be maintained as long as we have some upper and lower bound on mean reward functions. Suppose the mean reward functions are \emph{noisy}, but it is known that $\mu_k^{(l)}(\theta) \leq \mu_k(\theta) \leq \mu_k^{(u)}(\theta)$. In such a case, we can alter the definition of $\hat{\Theta}_t$ as follows, 
$$\hat{\Theta}_t = \hat{\Theta}^{(u)}_t \cap \hat{\Theta}^{(l)}_t,$$ 
where $\hat{\Theta}^{(u)}_t$ and $\hat{\Theta}^{(l)}_t$ are defined as follows,
$$\hat{\Theta}^{(u)}_t = \Bigg\{ \theta: \forall{k \in \mathcal{K}}, \quad  \mu_k^{(l)}(\theta) \leq \hat{\mu}_k(t) + \sqrt{\frac{2 \alpha \sigma^2 \log t}{n_k(t)}} \Bigg\},$$
$$\hat{\Theta}^{(l)}_t = \Bigg\{ \theta: \forall{k \in \mathcal{K}}, \quad \mu_k^{(u)}(\theta) \geq \hat{\mu}_k(t) - \sqrt{\frac{2 \alpha \sigma^2 \log t}{n_k(t)}} \Bigg\}.$$

Subsequently, we call an arm $k$ to be $\hat{\Theta}_t-$Non-Competitive 
if $\mu^{(u)}_\arm(\theta) < \max_{\ell \in \setofArms} \mu^{(l)}_\ell(\theta)$ for all $\theta \in \hat{\Theta}_t$. On doing so, our algorithmic approach can be extended to this framework as well. Additionally, we can re-define the notion of non-competitive and competitive arms in this setting as follows. 
\begin{defn}[Non-Competitive and Competitive arms in the noisy setting]
For any $\epsilon > 0$, let
\begin{align}
\Theta^{*(\epsilon)} = \{\theta: \mu_\bestarm^{(l)}(\theta^*) - \epsilon < \mu^{(u)}_\bestarm(\theta), ~~ \mu^{(l)}_\bestarm(\theta) < \mu_\bestarm^{(u)}(\theta^*) + \epsilon \}. \nonumber
\end{align}
An arm $k$ is said to be non-competitive if there exists an $\epsilon > 0$ such that $\mu^{(u)}_\arm(\theta) < \max_{\ell \in \setofArms} \mu^{(l)}_\ell(\theta)$ for all $\theta \in \Theta^{*(\epsilon)}$.  
Otherwise, the arm is said to be competitive; i.e., if for all $\epsilon > 0$, $\exists \theta \in \Theta^{*(\epsilon)}$ such 
that
$\mu^{(u)}_\arm(\theta) \geq \max_{\ell \in \setofArms} \mu^{(l)}_\ell(\theta)$.
The number of competitive arms is denoted by $C(\theta^*)$. 
\end{defn}
After doing so, the analysis can be easily extended to prove the desired regret bounds in this setting. In particular, one can show that $\Pr(\theta^* \notin \hat{\Theta}_t) \leq 2Kt^{1 - \alpha}$, and subsequently all other results of our paper will follow through. As there is lesser information in this newer framework, one can expect the number of competitive arms to be greater than the number of competitive arms in the setting where $\mu_k^{(l)}(\theta) = \mu_k^{(u)}(\theta)$. As the algorithm, results and analysis can extend beyond the framework described in the paper, it makes our proposed approach even more promising.

\section{Lower bound: Proof for Proposition 1}
\noindent
\textbf{Proof for Proposition 1}
We use the following result of \citep{combes2017minimal} to state  Proposition 1.
\begin{thm}[Lower bound, Theorem 1 in \cite{combes2017minimal}.]
For any uniformly good algorithm \citep{lai1985asymptotically}, and for any $\theta \in \Theta$, we have:
$$\lim \inf_{\totalPulls \rightarrow \infty} \frac{Reg(\totalPulls)}{\log \totalPulls} \geq L(\theta),$$
where $L(\theta)$ is the solution of the optimization problem: 
\begin{align}
    &\min_{\eta(\arm) \geq 0, \arm \in \setofArms} \sum_{\arm \in \setofArms} \eta(\arm)\left(\max_{\ell \in \setofArms}\mu_\ell(\theta) - \mu_\arm(\theta)\right) \nonumber\\
    &\text{subject to} \sum_{\arm \in \setofArms} \eta(\arm) D(\theta,\lambda,\arm) \geq 1, \forall \lambda \in \Lambda(\Theta), \label{optProblem}\\
    where \quad &\Lambda(\theta) = \{\lambda \in \Theta^* : \bestarm \neq \arg \max_{\arm \in \setofArms} \mu_\arm(\lambda) \}. \nonumber
\end{align}
Here, $D(\theta,\lambda,\arm)$ is the KL-Divergence between distributions $f_R(R_{\arm}|\theta,\arm)$ and $f_R(R_{\arm}|\lambda,\arm)$. An algorithm, $\pi$, is uniformly good if $Reg^\pi(\totalPulls,\theta) = \oo(T^a)$ for all $a > 0$ and all $\theta \in \Theta$.
\label{thm:lowerBound}
\end{thm}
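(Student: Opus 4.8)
The plan is to establish the logarithmic lower bound directly through a change-of-measure argument for uniformly good algorithms, in the spirit of Lai--Robbins and Graves--Lai, and then read off the zero-versus-positive dichotomy from the value $L(\theta)$ of the resulting program; \Cref{thm:lowerBound} already phrases the bound as a semi-infinite linear program, but re-deriving it makes transparent both the mechanism producing the $\log\totalPulls$ rate and the role of $\tilde{C}(\theta^*)$. Fix the true model $\theta^*$, and for each confusing alternative $\lambda\in\Lambda(\theta^*)=\{\lambda\in\Theta^*:\bestarm\neq\arg\max_\arm\mu_\arm(\lambda)\}$ let $a(\lambda)=\arg\max_\arm\mu_\arm(\lambda)\neq\bestarm$ be the arm that becomes optimal under $\lambda$. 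The central tool is the transportation (data-processing) inequality: for any statistic $Z\in[0,1]$ measurable with respect to the history up to round $\totalPulls$,
\[
\sum_{\arm\in\setofArms}\mathbb{E}_{\theta^*}\!\left[n_\arm(\totalPulls)\right]D(\theta^*,\lambda,\arm)\;\geq\;\mathrm{kl}\!\left(\mathbb{E}_{\theta^*}[Z],\,\mathbb{E}_{\lambda}[Z]\right),
\]
where $\mathrm{kl}(p,q)=p\log\tfrac{p}{q}+(1-p)\log\tfrac{1-p}{1-q}$ and $D(\theta^*,\lambda,\arm)$ is as in \Cref{thm:lowerBound}.

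Next I would take $Z=n_{a(\lambda)}(\totalPulls)/\totalPulls$ and invoke uniform goodness. Under $\theta^*$ the arm $a(\lambda)$ is sub-optimal, so $\mathbb{E}_{\theta^*}[n_{a(\lambda)}(\totalPulls)]=\oo(\totalPulls^a)$ for every $a>0$ and $\mathbb{E}_{\theta^*}[Z]\to0$ faster than any polynomial; under $\lambda$ the arm $a(\lambda)$ is optimal, every other arm is pulled $\oo(\totalPulls^a)$ times, and $1-\mathbb{E}_\lambda[Z]=\oo(\totalPulls^{a-1})$. Substituting into the binary relative entropy gives $\mathrm{kl}(\mathbb{E}_{\theta^*}[Z],\mathbb{E}_\lambda[Z])\geq(1-\oo(1))\log\totalPulls$, the bound being dominated by the $(1-p)\log\tfrac{1-p}{1-q}$ term. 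Dividing by $\log\totalPulls$, passing to a subsequence along which each $\mathbb{E}_{\theta^*}[n_\arm(\totalPulls)]/\log\totalPulls\to\eta(\arm)$, and using $D(\theta^*,\lambda,\bestarm)=0$ (established below) so that the $\bestarm$ term drops out, shows that $\{\eta(\arm)\}_{\arm\neq\bestarm}$ is feasible for the program, i.e. $\sum_\arm\eta(\arm)D(\theta^*,\lambda,\arm)\geq1$ for every $\lambda\in\Lambda(\theta^*)$. Since $\regret(\totalPulls)=\sum_\arm\mathbb{E}_{\theta^*}[n_\arm(\totalPulls)]\gap_\arm$, the same subsequence yields $\liminf_\totalPulls\regret(\totalPulls)/\log\totalPulls\geq\sum_\arm\eta(\arm)\gap_\arm\geq L(\theta^*)$.

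Finally I would extract the dichotomy. If $\tilde{C}(\theta^*)=1$, no arm other than $\bestarm$ is optimal for any $\lambda\in\Theta^*$, so $\Lambda(\theta^*)=\emptyset$, the program carries no constraints, $\eta\equiv0$ is optimal, and $L(\theta^*)=0$. If $\tilde{C}(\theta^*)>1$, then $\Lambda(\theta^*)\neq\emptyset$; for $\lambda\in\Lambda(\theta^*)\subseteq\Theta^*$ we have $\mu_{\bestarm}(\lambda)=\mu_{\bestarm}(\theta^*)$, so the reward law of $\bestarm$ coincides under $\theta^*$ and $\lambda$ and $D(\theta^*,\lambda,\bestarm)=0$. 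The constraint must therefore be met using arms $\arm\neq\bestarm$, each having $\gap_\arm>0$; writing $\gap_{\min}=\min_{\arm\neq\bestarm}\gap_\arm$ and $D_{\max}=\max_{\arm\neq\bestarm}D(\theta^*,\lambda,\arm)$ gives $\sum_{\arm\neq\bestarm}\eta(\arm)\gap_\arm\geq\gap_{\min}/D_{\max}>0$, whence $L(\theta^*)>0$.

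The step I expect to be hardest is exactly $\tilde{C}(\theta^*)>1\Rightarrow L(\theta^*)>0$, where one must bound the program's value \emph{away from} zero rather than merely show it is nonzero. This rests on two points: that $D(\theta^*,\lambda,\bestarm)=0$ for $\lambda\in\Theta^*$, which requires the reward law of $\bestarm$ to be determined by its mean (true for mean-parametrized families such as fixed-variance Gaussians, the setting used in the simulations); and a uniform-in-$\lambda$ positive lower bound over the semi-infinite family $\Lambda(\theta^*)$, obtained from finiteness of the KL terms $D(\theta^*,\lambda,\arm)$. A secondary technical point is the passage from a $\liminf$ of sums to a sum of limits, handled by the subsequence extraction above together with checking that $\eta(\arm)<\infty$ for every sub-optimal $\arm$, which follows because $\sum_{\arm\neq\bestarm}\eta(\arm)\gap_\arm$ equals the finite limiting value of the normalized regret along that subsequence.
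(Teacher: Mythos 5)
The first thing to note is that the paper never proves this statement at all: \Cref{thm:lowerBound} is imported verbatim as Theorem 1 of \citep{combes2017minimal}, and Appendix A only uses it to deduce the dichotomy of \Cref{prop:lowerBound} ($L(\theta)=0$ exactly when $\Lambda(\theta)=\emptyset$, i.e.\ $\tilde{C}(\theta^*)=1$), adding the remark that rewards are mean-parameterized so that $\mu_k(\theta)=\mu_k(\lambda)$ forces $D(\theta,\lambda,k)=0$. What you give instead is a self-contained re-derivation by the classical change-of-measure route (transportation inequality plus uniform goodness, in the style of Lai--Robbins, Graves--Lai, and Kaufmann--Capp\'e--Garivier), and your outline is correct: the choice $Z=n_{a(\lambda)}(\totalPulls)/\totalPulls$, the estimate $\mathrm{kl}(\mathbb{E}_{\theta^*}[Z],\mathbb{E}_\lambda[Z])\geq(1-\oo(1))\log \totalPulls$, and the subsequence extraction (after disposing of the trivial case $\liminf \regret(\totalPulls)/\log \totalPulls=\infty$) yield feasibility of the normalized pull counts and hence the LP bound; your quantitative step $L(\theta^*)\geq \Delta_{\min}/\max_{k\neq\bestarm}D(\theta^*,\lambda_0,k)>0$ actually proves \Cref{prop:lowerBound} more explicitly than the paper does. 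What the paper's approach buys is economy (nothing to verify beyond the citation); what yours buys is transparency about where the $\log\totalPulls$ rate and the constraint set come from. Three caveats: (i) dropping the $\bestarm$ term is only legitimate under the mean-parameterization assumption, since $\mathbb{E}_{\theta^*}[n_{\bestarm}(\totalPulls)]/\log\totalPulls$ may diverge and, absent $D(\theta^*,\lambda,\bestarm)=0$, the LP could even be satisfied by loading $\eta(\bestarm)$ whose objective coefficient vanishes --- so state this as a standing hypothesis, as the paper does; (ii) controlling $1-\mathbb{E}_\lambda[Z]$ via uniform goodness needs the optimizer under each alternative $\lambda$ to be unique (positive gaps under $\lambda$), a tie corner case you and the paper both gloss over; (iii) your worry about a ``uniform-in-$\lambda$'' bound is unnecessary --- a single fixed $\lambda_0\in\Lambda(\theta^*)$ with finite KL terms suffices for the positivity claim.
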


We see that the solution to the optimization problem \eqref{optProblem} is $L(\theta) = 0$ only when the set $\Lambda(\theta)$ is empty. The set $\Lambda(\theta)$ being empty corresponds to a case where all sub-optimal arms are $\Theta^*$-non-competitive. This implies that sub-logarithmic regret is possible only when $\tilde{C}(\theta^*) = 1$, i.e there is only one $\Theta^*$-competitive arm, which is the optimal arm, and all other arms are non-competitive. It is assumed that reward distribution of an arm $k$ is parameterized by the mean $\mu_k$ of arm $k$; this ensures that if $\mu_k(\theta) = \mu_k(\lambda)$ then we have  $D(\theta,\lambda,k) = 0$.

\section{Results valid for any \textsc{Algorithm}-C}
We now present results that depend only on Step 1 and Step 2 of our algorithm and hence are valid for any \textsc{Algorithm}-C, where $\textsc{Algorithm}$ can be UCB, Thompson sampling or any other method designed for classical multi-armed bandits with independent arms.

\begin{fact}[Hoeffding's inequality]
Let $Z_1, Z_2, \ldots Z_T$ be i.i.d. random variables, where $Z_i$ is $\sigma^2$ sub-gaussian with mean $\mu$, then 
$$\Pr(\mid \hat{\mu} - \mu \mid) \geq \epsilon) \leq 2 \exp \left(-\frac{\epsilon^2 T}{2 \sigma^2}\right),$$
Here $\hat{\mu}$ is the empirical mean of the $Z_1, Z_2 \ldots Z_T$. 
\label{hoeffding}
\end{fact}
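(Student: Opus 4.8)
The plan is to prove this via the standard Chernoff (exponential-moment) method, exploiting the assumed sub-Gaussianity of the $Z_i$. First I would write $\hat{\mu} - \mu = \frac{1}{T}\sum_{i=1}^{T}(Z_i - \mu)$ and split the two-sided event with a union bound, $\Pr(|\hat{\mu}-\mu| \geq \epsilon) \leq \Pr(\hat{\mu} - \mu \geq \epsilon) + \Pr(\mu - \hat{\mu} \geq \epsilon)$, so that it suffices to control one tail and recover the other by symmetry.

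For the upper tail, I would apply the exponential Markov inequality: for any $s>0$,
\[
\Pr(\hat{\mu} - \mu \geq \epsilon) \leq e^{-s\epsilon}\,\E{\exp\!\big(s(\hat{\mu}-\mu)\big)}.
\]
The key step is to bound the moment generating function of the centered average. By independence of the $Z_i$, the MGF factorizes, and applying the sub-Gaussian hypothesis $\E{\exp(u(Z_i-\mu))} \leq \exp(\sigma^2 u^2/2)$ with $u = s/T$ to each of the $T$ factors yields $\E{\exp(s(\hat{\mu}-\mu))} \leq \exp(\sigma^2 s^2/(2T))$. In effect this shows the sample mean is itself sub-Gaussian with variance proxy $\sigma^2/T$, which is the crux of the argument.

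Substituting back gives $\Pr(\hat{\mu}-\mu \geq \epsilon) \leq \exp(-s\epsilon + \sigma^2 s^2/(2T))$, and I would optimize the free parameter by minimizing the exponent over $s>0$. The minimizer is $s^\star = \epsilon T/\sigma^2$, producing the one-sided tail bound $\exp(-\epsilon^2 T/(2\sigma^2))$. Since $-(Z_i-\mu)$ is sub-Gaussian with the same variance proxy, the identical argument controls the lower tail, and summing the two one-sided bounds delivers the stated factor of $2$.

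There is no genuine obstacle here, as this is a textbook concentration result; the only point meriting care is the scaling bookkeeping, namely that dividing each centered variable by $T$ replaces the per-term variance proxy $\sigma^2$ by $\sigma^2/T^2$, which accumulates to $\sigma^2/T$ for the average after summing $T$ independent terms. I would also note in passing that the displayed inequality as typeset carries a stray closing parenthesis and should read $\Pr(|\hat{\mu}-\mu| \geq \epsilon)$.
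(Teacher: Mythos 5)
Your proof is correct. The paper states this Hoeffding bound as a \emph{Fact} and gives no proof of its own, treating it as a standard textbook result; your Chernoff/MGF derivation --- union bound over the two tails, exponential Markov inequality, factorization of the moment generating function by independence, the sub-Gaussian bound with $u = s/T$ giving variance proxy $\sigma^2/T$ for the sample mean, and optimization at $s^\star = \epsilon T/\sigma^2$ --- is exactly the canonical argument one would cite for it, and the bookkeeping checks out. Your remark about the stray closing parenthesis in the displayed probability is also accurate.
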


\begin{lem}[Standard result used in bandit literature (Used in Theorem 2.1 of \cite{auer2002finite})]
If $\hat{\mu}_{k,n_k(t)}$ denotes the empirical mean of arm $k$ by pulling arm $k$ $n_k(t)$ times through any algorithm and $\mu_k$ denotes the mean reward of arm $k$, then we have 
\begin{align}
\Pr(|\hat{\mu}_{k,n_k(t)} - \mu_k| \geq \epsilon, \tau_2 \geq &n_k(t) \geq \tau_1 ) \leq \sum_{s = \tau_1}^{\tau_2}2\exp \left(- \frac{ s \epsilon^2}{2 \sigma^2}\right) \nonumber
\end{align}
\label{lem:UnionBoundTrickInt}
\end{lem}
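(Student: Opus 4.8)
The plan is to reduce the statement, which concerns the empirical mean $\hat{\mu}_{k,n_k(t)}$ evaluated at the \emph{random, data-dependent} count $n_k(t)$, to a finite collection of concentration statements at \emph{fixed} sample sizes, each handled by Hoeffding's inequality (\Cref{hoeffding}). The obstacle that this lemma is built to circumvent—and which I expect to be the one point requiring care—is that one cannot apply Hoeffding directly to $\hat{\mu}_{k,n_k(t)}$: the number of pulls $n_k(t)$ is chosen adaptively by the algorithm and is therefore correlated with the very samples whose average we wish to control. The standard ``union bound over sample sizes'' trick decouples the random index from the randomness of the rewards, which is exactly why the bound holds for \emph{any} algorithm.

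First I would fix notation by letting $\hat{\mu}_{k,s}$ denote the empirical mean of the \emph{first} $s$ pulls of arm $k$. Under the usual (``tape'') model in which each arm carries a pre-drawn i.i.d.\ reward sequence, $\hat{\mu}_{k,s}$ is the average of the first $s$ entries of arm $k$'s sequence and hence is a deterministic function of those $s$ rewards alone, independent of the algorithm's choices. On the event $\{n_k(t)=s\}$ we have $\hat{\mu}_{k,n_k(t)}=\hat{\mu}_{k,s}$ by definition, so the event of interest decomposes as
\begin{align*}
\left\{ |\hat{\mu}_{k,n_k(t)} - \mu_k| \geq \epsilon,\ \tau_2 \geq n_k(t) \geq \tau_1 \right\} = \bigcup_{s = \tau_1}^{\tau_2} \left\{ |\hat{\mu}_{k,s} - \mu_k| \geq \epsilon,\ n_k(t) = s \right\} \subseteq \bigcup_{s = \tau_1}^{\tau_2} \left\{ |\hat{\mu}_{k,s} - \mu_k| \geq \epsilon \right\}.
\end{align*}
The crucial move is the final inclusion: dropping the constraint $\{n_k(t)=s\}$ discards precisely the problematic dependence between the count and the samples, leaving clean fixed-$s$ deviation events.

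Applying a union bound to the right-hand side then gives
\begin{align*}
\Pr\left( |\hat{\mu}_{k,n_k(t)} - \mu_k| \geq \epsilon,\ \tau_2 \geq n_k(t) \geq \tau_1 \right) \leq \sum_{s = \tau_1}^{\tau_2} \Pr\left( |\hat{\mu}_{k,s} - \mu_k| \geq \epsilon \right).
\end{align*}
Since, conditioned on $\theta^*$, the rewards from arm $k$ are i.i.d.\ and $\sigma^2$-sub-Gaussian with mean $\mu_k$, each summand is exactly of the form handled by \Cref{hoeffding} with sample size $s$, yielding $\Pr(|\hat{\mu}_{k,s} - \mu_k| \geq \epsilon) \leq 2\exp(-s\epsilon^2/(2\sigma^2))$. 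Substituting this into each term produces the claimed inequality. Beyond the event-decomposition bookkeeping above—especially being explicit that $\hat{\mu}_{k,s}$ is defined independently of the algorithm so that Hoeffding applies term by term—the remaining steps are a routine union bound and substitution.
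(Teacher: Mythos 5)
Your proposal is correct and follows essentially the same route as the paper's proof: introduce the pre-drawn reward sequence for arm $k$, decompose the event over the possible values of $n_k(t)$, drop the constraint on $n_k(t)$ to decouple the random count from the samples, and finish with a union bound plus Hoeffding at each fixed sample size. The only cosmetic difference is that you treat the two-sided deviation directly (yielding the factor $2$ cleanly), while the paper writes out the one-sided case; the argument is otherwise identical.
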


\begin{proof}
Let $Z_1, Z_2, ... Z_t$ be the reward samples of arm $k$ drawn separately. If the algorithm chooses to play arm $k$ for $m^{th}$ time, then it observes reward $Z_m$. Then the probability of observing the event $(\hat{\mu}_{k,n_k(t)} - \mu_k \geq \epsilon, \tau_2 \geq n_k(t) \geq \tau_1)$ can be upper bounded as follows,
\begin{align}
    &\Pr\left(\hat{\mu}_{k,n_k(t)} - \mu_k \geq \epsilon, \tau_2 \geq n_k(t) \geq \tau_1 \right) = \nonumber \\
    & \Pr\left( \left( \frac{\sum_{i=1}^{n_k(t)}Z_i}{n_k(t)} - \mu_k \geq \epsilon \right), \tau_2 \geq n_k(t) \geq \tau_1 \right) \nonumber \\
    &\leq \Pr\left( \left(\bigcup_{m = \tau_1}^{\tau_2} \Bigg\{ \frac{\sum_{i=1}^{m}Z_i}{m} - \mu_k \geq \epsilon \Bigg\} \right), \tau_2 \geq n_k(t) \geq \tau_1 \right) \label{upperBoundTrick} 
\end{align}
\begin{align}
    &\leq \Pr \left(\bigcup_{m = \tau_1}^{\tau_2} \Bigg\{ \frac{\sum_{i=1}^{m}Z_i}{m} - \mu_k \geq \epsilon \Bigg\} \right) \nonumber \\
    &\leq \sum_{s = \tau_1}^{\tau_2} \Pr \left(\frac{\sum_{i=1}^{s}Z_i}{s} - \mu_k \geq \epsilon\right) \nonumber \\
    &\leq \sum_{s = \tau_1}^{\tau_2}\exp \left( - \frac{ s \epsilon^2}{2 \sigma^2}\right).
\end{align}
\end{proof}

\begin{lem}

The probability that the difference between the true mean of arm $k$ and its empirical mean after $t$ time slots is more than $\sqrt{\frac{2 \alpha \sigma^2 \log t}{n_k(t)}}$ is upper bounded by $2t^{1-\alpha}$, i.e.,
$$\Pr\left(|\mu_\arm(\theta^*) - \estimateMean_\arm| \geq \sqrt{\frac{2\alpha \sigma^2 \log \slot}{\pulls_\arm(\slot)}}\right) \leq 2\slot^{1-\alpha}. $$ 
\label{lem:meanOutside}
\end{lem}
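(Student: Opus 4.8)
The plan is to handle the fact that the number of pulls $\pulls_\arm(\slot)$ is itself a random variable, determined \emph{adaptively} by the algorithm from past observations, so that Hoeffding's inequality (\Cref{hoeffding}) cannot be applied in a single shot with a ``fixed'' sample size. The standard device—exactly the union-bound (``peeling'') trick already used in the proof of \Cref{lem:UnionBoundTrickInt}—is to imagine the reward samples $Z_1, Z_2, \ldots, Z_\slot$ of arm $\arm$ drawn in advance, and then take a union bound over the possible realized values of $\pulls_\arm(\slot)$, which lie in $\{1, 2, \ldots, \slot\}$.

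Concretely, I would first partition the event of interest according to the realized value $\pulls_\arm(\slot) = s$. On $\{\pulls_\arm(\slot) = s\}$ the empirical mean $\estimateMean_\arm$ equals the average $\estimateMean_{\arm,s}$ of the first $s$ (pre-drawn, hence i.i.d.) samples of arm $\arm$, and the confidence radius specializes to $\sqrt{2\alpha\sigma^2\log\slot/s}$. This yields
\begin{align*}
\Pr\left(|\mu_\arm(\theta^*) - \estimateMean_\arm| \geq \sqrt{\frac{2\alpha\sigma^2\log\slot}{\pulls_\arm(\slot)}}\right) \leq \sum_{s=1}^{\slot} \Pr\left(|\mu_\arm(\theta^*) - \estimateMean_{\arm,s}| \geq \sqrt{\frac{2\alpha\sigma^2\log\slot}{s}}\right),
\end{align*}
where $\estimateMean_{\arm,s}$ denotes the empirical mean from a \emph{fixed} number $s$ of pulls.

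Next, for each fixed $s$ I would apply \Cref{hoeffding} with $\epsilon = \sqrt{2\alpha\sigma^2\log\slot/s}$. The key cancellation is that the $1/\sqrt{s}$ scaling of the confidence radius is precisely tuned so that the $s$ in the exponent of Hoeffding's bound is absorbed: $\exp(-\epsilon^2 s/(2\sigma^2)) = \exp(-\alpha\log\slot) = \slot^{-\alpha}$, independently of $s$. Thus each summand is at most $2\slot^{-\alpha}$, and summing the $\slot$ terms gives $2\slot \cdot \slot^{-\alpha} = 2\slot^{1-\alpha}$, which is the claimed bound.

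The only genuine obstacle is the very first step: because the algorithm decides when to pull arm $\arm$ based on past data, $\estimateMean_\arm$ is not the average of a fixed i.i.d.\ sample, so a naive single application of Hoeffding would be invalid. The peeling union bound is what decouples the random stopping index $\pulls_\arm(\slot)$ from the concentration estimate, at the modest price of an extra factor of $\slot$—which is harmless here precisely because the per-term bound $\slot^{-\alpha}$ already carries an exponent large enough to keep the sum of order $\slot^{1-\alpha}$. Everything after the peeling is routine algebra.
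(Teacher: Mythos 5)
Your proposal is correct and follows essentially the same route as the paper's proof: a union bound (peeling) over the possible values of $\pulls_\arm(\slot) \in \{1,\ldots,\slot\}$, followed by Hoeffding's inequality for each fixed sample size $s$ with $\epsilon = \sqrt{2\alpha\sigma^2\log\slot/s}$, so that each term is $2\slot^{-\alpha}$ and the sum is $2\slot^{1-\alpha}$. Your remark about why the adaptivity of $\pulls_\arm(\slot)$ forces this decoupling is exactly the point of the paper's appeal to \Cref{lem:UnionBoundTrickInt}.
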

\begin{proof}
See that, 
\begin{align}
    \Pr\left(|\mu_\arm(\theta^*) - \estimateMean_{\arm,\pulls_\arm(\slot)}| \geq \sqrt{\frac{2\alpha \sigma^2 \log \slot}{\pulls_\arm(\slot)}}\right) &\leq \sum_{m = 1}^{\slot} \Pr \left(|\mu_\arm(\theta^*) - \estimateMean_{\arm,m} | \geq \sqrt{\frac{2 \alpha \sigma^2 \log \slot}{m}}\right) \label{unionTrick1} \\
    &\leq \sum_{m = 1}^{\slot} 2\slot^{-\alpha} \label{invokeHoeffding}\\
    &= 2\slot^{1-\alpha}. \nonumber
\end{align}
We have \eqref{unionTrick1} from union bound and is a standard trick to deal with the random variable $\pulls_\arm(\slot)$ as it can take values from $1$ to $\slot$ (\Cref{lem:UnionBoundTrickInt}). The true mean of arm $\arm$ is $\mu_\arm(\theta^*)$. Therefore, if $\hat{\mu}_{\arm,m}$ denotes the empirical mean of arm $\arm$ taken over $m$ pulls of arm $\arm$ then, \eqref{invokeHoeffding} follows from \Cref{hoeffding} with $\epsilon$ in \Cref{hoeffding} being equal to $\sqrt{\frac{2\alpha \sigma^2 \log \slot}{\pulls_\arm(\slot)}}$.
\end{proof}

\begin{lem}
Define $E_1(t)$ to be the event that arm $\bestarm$ is $\Theta_t$-non-competitive for the round $t+1$, then,
$$\Pr(E_1(t)) \leq 2 \numArms \slot^{1-\alpha}.$$
\label{lem:elimination3}
\end{lem}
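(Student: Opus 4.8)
The plan is to show that the event $E_1(t)$ --- that the optimal arm $\bestarm$ is declared $\hat{\Theta}_\slot$-non-competitive --- is contained in the rare event that the true parameter $\theta^*$ falls outside the confidence set $\hat{\Theta}_\slot$, and then to bound the probability of that latter event by a union bound over the arms, invoking \Cref{lem:meanOutside}.

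First I would establish the containment $E_1(t) \subseteq \{\theta^* \notin \hat{\Theta}_\slot\}$. The key observation is that $\bestarm = \arg\max_{\arm \in \setofArms} \mu_\arm(\theta^*)$ by definition of the optimal arm, so $\mu_{\bestarm}(\theta^*) = \max_{\ell \in \setofArms} \mu_\ell(\theta^*)$. Hence if $\theta^* \in \hat{\Theta}_\slot$, then at $\theta = \theta^*$ arm $\bestarm$ attains the maximum mean reward, which by \Cref{defn:competitive} makes $\bestarm$ a $\hat{\Theta}_\slot$-competitive arm. Taking the contrapositive, whenever $\bestarm$ is $\hat{\Theta}_\slot$-non-competitive (i.e.\ $E_1(t)$ occurs) we must have $\theta^* \notin \hat{\Theta}_\slot$, and therefore $\Pr(E_1(t)) \leq \Pr(\theta^* \notin \hat{\Theta}_\slot)$.

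Next I would unpack the definition of the confidence set. By construction of $\hat{\Theta}_\slot$, the parameter $\theta^*$ fails to lie in $\hat{\Theta}_\slot$ precisely when there is at least one arm $\arm$ whose empirical mean violates its confidence bound, i.e.\ $|\mu_\arm(\theta^*) - \estimateMean_\arm(\slot)| \geq \sqrt{2\alpha\sigma^2\log\slot/\pulls_\arm(\slot)}$. A union bound over the $\numArms$ arms then gives
$$
\Pr(\theta^* \notin \hat{\Theta}_\slot) \leq \sum_{\arm \in \setofArms} \Pr\!\left(|\mu_\arm(\theta^*) - \estimateMean_\arm(\slot)| \geq \sqrt{\tfrac{2\alpha\sigma^2\log\slot}{\pulls_\arm(\slot)}}\right),
$$
and each summand is at most $2\slot^{1-\alpha}$ by \Cref{lem:meanOutside}. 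Since there are $\numArms$ arms, this yields $\Pr(E_1(t)) \leq 2\numArms \slot^{1-\alpha}$, as claimed.

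I expect the only genuinely non-routine step to be the containment argument of the first step --- recognizing that $\hat{\Theta}_\slot$-non-competitiveness of the optimal arm forces $\theta^*$ out of the confidence set, which hinges on the fact that $\bestarm$ is by definition a maximizer exactly at $\theta^*$. Once that structural observation is in place, the remaining bound is a standard union bound that directly reuses the per-arm concentration already established in \Cref{lem:meanOutside}, so no new concentration inequality is required.
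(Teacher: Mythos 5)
Your proposal is correct and follows essentially the same route as the paper's proof: the containment $E_1(t) \subseteq \{\theta^* \notin \hat{\Theta}_t\}$ via the observation that $\bestarm$ maximizes the mean reward at $\theta^*$, followed by a union bound over arms and the per-arm concentration bound. The only cosmetic difference is that you cite \Cref{lem:meanOutside} for the per-arm bound, whereas the paper re-derives it inline via the union-over-$\pulls_\arm(\slot)$ trick and Hoeffding's inequality; the content is identical.
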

\begin{proof}
Observe that
\begin{align}
    \Pr(E_1(t)) \leq \Pr(\theta^* \notin \hat{\Theta}_t) &= \Pr\left(\bigcup_{\arm \in \setofArms}| \mu_\arm (\theta^*) - \hat{\mu}_{\arm,\pulls_\arm(\slot)} | \geq \sqrt{\frac{2\alpha \sigma^2 \log t}{\pulls_\arm(\slot)}}\right) \label{equalityByDefn3} 
\end{align}
\begin{align}
    &\leq \sum_{\arm = 1}^{\numArms} \Pr\left(| \mu_\arm (\theta^*) - \hat{\mu}_{\arm,\pulls_\arm(\slot)} | \geq \sqrt{\frac{2\alpha \sigma^2 \log t}{\pulls_\arm(\slot)}}\right) \label{unionB} \\
    &\leq \sum_{\arm = 1}^{\numArms} \sum_{m = 1}^{\slot} \Pr\left(| \mu_\arm (\theta^*) - \hat{\mu}_{\arm,m} | \geq \sqrt{\frac{2\alpha \sigma^2 \log t}{m}}\right) \label{unionTrick3}\\
    &\leq \numArms \sum_{m=1}^{\slot}2 \slot^{-\alpha} \label{fromFirstLemma} \\
    &= 2 \numArms \slot^{1-\alpha}. \nonumber
\end{align}
In order for Arm $k^*$ to be $\hat{\Theta}_t$-non-competitive, we need $\mu_{k^*}(\theta) < \max_{k \in \mathcal{K}} \mu_k(\theta) \quad \forall \theta \in \hat{\Theta}_t$. See that $\theta^* \in \hat{\Theta}_t$ implies that arm $k^*$ is $\hat{\Theta}_t$-Competitive. Therefore, $\theta^* \notin \hat{\Theta}_t$ is a necessary condition for Arm $k^*$ to be $\hat{\Theta}_t$-Non-Competitive. Due to this, we have $\Pr(E_1(t)) \leq \Pr(\theta^* \notin \hat{\Theta}_t)$ in \eqref{equalityByDefn3}. We are using $\hat{\mu}_{\arm,m}$ to denote the empirical mean of rewards from arm $\arm$ obtained from it's $m$ pulls. Here \eqref{equalityByDefn3} follows from definition of confidence set and \eqref{unionB} follows from union bound. We have \eqref{unionTrick3} from union bound and is a standard trick to deal with the random variable $\pulls_\arm(\slot)$ as it can take values from $1$ to $\slot$ (\Cref{lem:UnionBoundTrickInt}). The inequality \eqref{fromFirstLemma} follows from Hoeffding's lemma.
\end{proof}

\begin{lem}
Consider a suboptimal arm $\arm \neq \bestarm$, which is $\Theta^{*(\epsilon_\arm)}$-non-competitive. If $\epsilon_\arm \geq \sqrt{\frac{8 \alpha \sigma^2 \numArms \log \slot_0}{\slot_0}}$ for some constant $\slot_0 > 0$, then, 
$$\Pr(\arm_{\slot + 1} = \arm , \bestarm = \maxArm) \leq 2\slot^{1-\alpha},$$
where $k^{\text{max}} = \arg \max_{k \in \mathcal{K}} n_k(t)$.
\label{lem:maxIsBest}
\end{lem}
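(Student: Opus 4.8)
The plan is to exhibit a single high-probability ``good'' event on which the joint event $\{\arm_{\slot+1} = \arm,\ \bestarm = \maxArm\}$ is \emph{impossible}, so that its probability is controlled by the (small) failure probability of that event. First I would use the conditioning $\bestarm = \maxArm$ only to extract a lower bound on the number of pulls of the optimal arm: since $\sum_{\ell \in \setofArms} \pulls_\ell(\slot) = \slot$, the most-pulled arm satisfies $\pulls_{\bestarm}(\slot) = \max_{\ell} \pulls_\ell(\slot) \geq \slot/\numArms$. This is the only way $\maxArm$ enters the argument.

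Next, define the good concentration event for the optimal arm,
$$G = \left\{ |\mu_{\bestarm}(\theta^*) - \estimateMean_{\bestarm}(\slot)| < \sqrt{\frac{2\alpha\sigma^2 \log \slot}{\pulls_{\bestarm}(\slot)}} \right\},$$
whose complement has probability at most $2\slot^{1-\alpha}$ by \Cref{lem:meanOutside} (applied to arm $\bestarm$). The heart of the proof is to show that on $G$, together with $\pulls_{\bestarm}(\slot) \geq \slot/\numArms$ and $\slot \geq \slot_0$, the confidence set is trapped inside $\Theta^{*(\epsilon_\arm)}$, that is, $\hat{\Theta}_\slot \subseteq \Theta^{*(\epsilon_\arm)}$. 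Indeed, for any $\theta \in \hat{\Theta}_\slot$ the definition of the confidence set gives $|\mu_{\bestarm}(\theta) - \estimateMean_{\bestarm}(\slot)| < \sqrt{2\alpha\sigma^2 \log \slot / \pulls_{\bestarm}(\slot)}$, and combining this with $G$ through the triangle inequality yields
$$|\mu_{\bestarm}(\theta) - \mu_{\bestarm}(\theta^*)| < 2\sqrt{\frac{2\alpha\sigma^2 \log \slot}{\pulls_{\bestarm}(\slot)}} \leq \sqrt{\frac{8\alpha\sigma^2 \numArms \log \slot}{\slot}},$$
where the last inequality uses $\pulls_{\bestarm}(\slot) \geq \slot/\numArms$. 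Since $\sqrt{(\log \slot)/\slot}$ is decreasing for $\slot \geq \slot_0$ and $\slot_0$ is chosen so that $\epsilon_\arm \geq \sqrt{8\alpha\sigma^2\numArms \log \slot_0 / \slot_0}$, this deviation is at most $\epsilon_\arm$, so $\theta \in \Theta^{*(\epsilon_\arm)}$.

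Finally, because arm $\arm$ is $\Theta^{*(\epsilon_\arm)}$-non-competitive we have $\mu_\arm(\theta) < \max_{\ell \in \setofArms}\mu_\ell(\theta)$ for every $\theta \in \Theta^{*(\epsilon_\arm)} \supseteq \hat{\Theta}_\slot$; hence arm $\arm$ is $\hat{\Theta}_\slot$-non-competitive and is excluded from $\competitiveArms_\slot$ in Step 2, so it cannot be pulled at round $\slot+1$. Thus $\{\arm_{\slot+1}=\arm,\ \bestarm=\maxArm\} \subseteq G^c$ (for $\slot \geq \slot_0$), giving $\Pr(\arm_{\slot+1}=\arm,\ \bestarm=\maxArm) \leq \Pr(G^c) \leq 2\slot^{1-\alpha}$. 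I expect the containment step $\hat{\Theta}_\slot \subseteq \Theta^{*(\epsilon_\arm)}$ to be the main obstacle: one has to stack two sources of error — the width of the confidence band and the estimation error of $\estimateMean_{\bestarm}$ — and verify, using the decreasing nature of $\sqrt{(\log \slot)/\slot}$ and the defining property of $\slot_0$, that their sum stays below $\epsilon_\arm$.
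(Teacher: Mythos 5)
Your proposal is correct and follows essentially the same route as the paper's proof: both use $\bestarm=\maxArm$ only to extract $\pulls_{\bestarm}(\slot)\geq\slot/\numArms$, and both show that on a concentration event for $\estimateMean_{\bestarm}$ the confidence set satisfies $\hat{\Theta}_\slot\subseteq\Theta^{*(\epsilon_\arm)}$, which forces arm $\arm$ out of $\competitiveArms_\slot$. The only difference is bookkeeping: the paper splits the budget as $\epsilon_\arm/2$ for the estimation error of $\estimateMean_{\bestarm}$ plus $\epsilon_\arm/2$ for the confidence width and re-derives the Hoeffding bound $2\slot\exp\left(-\epsilon_\arm^2\slot/(8\numArms\sigma^2)\right)\leq 2\slot^{1-\alpha}$, whereas you invoke \Cref{lem:meanOutside} at the confidence-width scale and spend the whole $\epsilon_\arm$ on twice that width --- both give the same $2\slot^{1-\alpha}$ (and both quietly skip the corner case where $\hat{\Theta}_\slot$ is empty and the algorithm resets $\competitiveArms_\slot$ to all of $\setofArms$).
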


\begin{proof}
We now bound this probability as, 
\begin{align}
    &\Pr(\arm_{\slot + 1} = \arm , \bestarm = \maxArm) \nonumber \\
    &= \Pr(\arm \in \competitiveArms_\slot, \Index_\arm = \max_{\ell \in \competitiveArms} \Index_\ell , \bestarm = \maxArm) \nonumber \\
    &\leq \Pr(\arm \in \competitiveArms_\slot , \bestarm = \maxArm) \nonumber \\
    &\leq \Pr(|\hat{\mu}_\bestarm - \mu_\bestarm(\theta^*)| > \frac{\epsilon_\arm}{2} , \bestarm = \maxArm) \label{competitiveCondition}\\
    &\leq 2 \slot \exp \left(-\frac{\epsilon^2 \slot}{8 \numArms \sigma^2}\right) \label{usedHoeffding}\\
    &\leq 2 \slot^{1-\alpha} \quad \forall \slot > \slot_0.\label{defntnot}
\end{align}
See that $|\hat{\mu}_\maxArm - \mu_\maxArm | < \frac{\epsilon_\arm}{2} \Rightarrow |\mu_\maxArm(\theta) - \mu_\maxArm(\theta^*)| < \epsilon$ for $\theta \in \tilde{\Theta}_t$. This holds as $\sqrt{\frac{2 \alpha \sigma^2 \log \slot_0}{\slot_0}} \leq \frac{\epsilon_\arm}{2}$ and if $\theta \in {\tilde{\Theta}_t}$, then $|\mu_\maxArm(\theta) - \hat{\mu}_\maxArm| \leq \sqrt{\frac{2 \alpha \sigma^2 \log \slot_0}{\slot_0}} \leq \frac{\epsilon_\arm}{2}$. Therefore in order for arm $\arm$ to be $\Theta_t$-competitive, we need at least $|\hat{\mu}_\bestarm - \mu_\bestarm(\theta^*)| > \epsilon_\arm/2$, which leads to \eqref{competitiveCondition} as arm $\arm$ is $\epsilon_\arm$ non-competitive. Inequality \eqref{usedHoeffding} follows from Hoeffding's inequality. The term $\slot$ before the exponent in \eqref{usedHoeffding} arises as the random variable $\pulls_\bestarm$ can take values from $\frac{\slot}{\numArms}$ to $\slot$ (\Cref{lem:UnionBoundTrickInt}). 
\end{proof}

\section{Unified Regret Analysis}
In this section, we show a sketch of how regret analysis of \textsc{Algorithm-C} can be performed, where \textsc{Algorithm} is a bandit algorithm that works for classical multi-armed bandit problem. In later section, we provide rigorous proof for the regret analysis of the UCB-C algorithm.

\noindent
\textbf{Bound on expected number of pulls for Non-Competitive arms}

For non-competitive arms, we show that the expected number of pulls is $\OO(1)$. Intuition behind the proof of the result is that non-competitive arms can be identified as sub-optimal based on \textit{enough} pulls of the optimal arm, due to which pulling a sub-optimal arm $\OO(\log T)$ becomes unnecessary and it ends up being pulled only $\OO(1)$ times. We now provide a mathematical proof sketch of this idea: 

We bound $\E{\pulls_\arm(\slot)}$ as 
\begin{align}
\E{\pulls_\arm(\totalPulls)} &= \E{\sum_{\slot = 1}^{\totalPulls}\indicator_{\{\arm_\slot = \arm\}}}\\
&= \sum_{\slot = 0}^{\totalPulls-1} \Pr(\arm_{\slot+1} = \arm) \nonumber \\
&= \sum_{\slot = 1}^{\numArms \slot_0} \Pr(\arm_\slot = \arm) + \sum_{\slot = \numArms \slot_0}^{\totalPulls-1} \Pr(\arm_{\slot+1} = \arm) \nonumber \\
&\leq \numArms \slot_0 +  \sum_{\slot = \numArms \slot_0}^{\totalPulls-1} \Pr(\arm_{\slot+1} = \arm , \pulls_{\arm^*}(\slot) = \max_{\arm'} \pulls_{\arm'}(\slot))  + \nonumber\\
&\quad \sum_{\slot = \numArms \slot_0}^{\totalPulls-1} \sum_{\arm' \neq \arm^*} \bigg( \Pr(\pulls_{\arm'}(\slot) = \max_{\arm''} \pulls_{\arm''}(\slot)) \times \Pr(\arm_{\slot+1} = \arm |  \pulls_{\arm'}(\slot) = \max_{\arm''} \pulls_{\arm''}(\slot))\bigg) \nonumber \\
&\leq \numArms \slot_0 + \sum_{\slot = \numArms \slot_0}^{\totalPulls-1} \Pr(\arm_{\slot+1} = \arm , \pulls_{\arm^*}(\slot) = \max_{\arm'} \pulls_{\arm'}(\slot)) + \nonumber \\
& \sum_{\slot = \numArms \slot_0}^{\totalPulls-1} \sum_{\arm' \neq \arm^*} \Pr(\pulls_{\arm'}(\slot) = \max_{\arm''} \pulls_{\arm''}(\slot)) \nonumber \\
&\leq \numArms \slot_0 + \sum_{\slot = \numArms \slot_0}^{\totalPulls - 1} 2t^{1-\alpha}  +\sum_{\slot = \numArms \slot_0}^{\totalPulls-1} \sum_{\arm' \neq \arm^*} \Pr(\pulls_{\arm'}(\slot) = \max_{\arm''} \pulls_{\arm''}(\slot)) \label{usedCommonLemma}
\end{align}
\begin{align}
&\leq \numArms \slot_0 + \sum_{\slot = \numArms \slot_0}^{\totalPulls - 1} 2t^{1-\alpha} +  \sum_{\slot = \numArms \slot_0}^{\totalPulls} \sum_{\arm' \neq \arm^*} \Pr\left(\pulls_{\arm'}(\slot) \geq \frac{\slot}{\numArms}\right) \quad \forall t > t_0 \nonumber
\end{align}

Here \eqref{usedCommonLemma} follows from plugging the result of \Cref{lem:maxIsBest}.

In order to prove that \textsc{Algorithm-C} achieves bounded regret, we need to show that $\Pr\left(n_{k'}(t) \geq \frac{t}{K}\right) \leq \frac{\eta}{t^{1+\epsilon}}, \quad \text{for } k' \neq k^*,$ for some $\eta, \epsilon > 0$. Intuitively, this means that the probability of selecting a sub-optimal arm more than $\frac{t}{K}$ times decays with the number of rounds. This property should intuitively hold true for any good performing bandit algorithm. We prove this rigorously for \textsc{UCB-C}.

\noindent
\textbf{Bound on expected number of pulls for Competitive arms}

For any suboptimal arm $\arm \neq \arm^*$,
\begin{align}
    \E{\pulls_\arm(\totalPulls)} &\leq \sum_{\slot = 1}^{\totalPulls} \Pr(\arm_\slot = \arm) \nonumber \\
    &= \sum_{\slot = 1}^{\totalPulls} \Pr((k_t = k, E_1(\slot)) \cup (E_1^c(\slot), k_t = k)) \label{stepE15} \\
    &\leq \sum_{\slot = 1}^{\totalPulls} \Pr(E_1(\slot)) + \sum_{\slot = 1}^{\totalPulls}  \Pr(E_1^c(\slot), k_t = k) \nonumber \\
    &= \sum_{\slot = 1}^{\totalPulls} 2\numArms\slot^{1-\alpha} + \sum_{\slot = 1}^{\totalPulls}  \Pr(E_1^c(\slot), k_t = k) \label{eliminatedArmProb1} 
\end{align}

Here \Cref{eliminatedArmProb1} follows from the result of \Cref{lem:elimination3} with $E_1(t)$ being the event that arm $\bestarm$ is $\Theta_t$-non-competitive for the round $t+1$.

See that the event $E^{(c)}_1(t) \cap k_{t} = k$ corresponds to the event that a sub-optimal arm $k$ and optimal arm $k^*$ are both present in the competitive set, and the \textsc{Algorithm} selects the sub-optimal arm $k$. The analysis of this term is exactly equivalent to selecting a sub-optimal arm over optimal arm by \textsc{Algorithm}. This leads to a $\OO(\log T)$ bound on the expected pulls of the competitive arms as classical bandit algorithms pull each arm $\OO(\log T)$ times.

\section{Proof for the UCB-C Algorithm}

\begin{lem}
If $\gap_{min} \geq 4 \sqrt{\frac{\numArms \alpha \sigma^2 \log \slot_0}{\slot_0}}$ for some constant $\slot_0 > 0,$ then, 
$$\Pr(\arm_{\slot + 1} = \arm , \pulls_\arm(\slot) \geq s) \leq (2\numArms + 4) \slot^{1-\alpha} \quad \text{for }  s \geq \frac{\slot}{2 \numArms},$$ 
$\forall \slot > \slot_0,$
where $\arm \neq \bestarm$ is a suboptimal arm.
\label{lem:EnoughPullsAlready3}
\end{lem}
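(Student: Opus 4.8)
The plan is to split the target event according to whether the optimal arm $\bestarm$ survives into the competitive set at round $\slot$, and then to run the textbook UCB index comparison on the surviving event. Concretely, I would write
\begin{align*}
\Pr(\arm_{\slot+1}=\arm,\ \pulls_\arm(\slot)\geq s) &\leq \Pr(\bestarm \notin \competitiveArms_\slot) \\
&\quad + \Pr(\arm_{\slot+1}=\arm,\ \pulls_\arm(\slot)\geq s,\ \bestarm \in \competitiveArms_\slot).
\end{align*}
The first term is exactly $\Pr(E_1(\slot))$: whenever $\hat{\Theta}_\slot$ is nonempty, $\bestarm \notin \competitiveArms_\slot$ means $\bestarm$ is $\hat{\Theta}_\slot$-non-competitive, and when $\hat{\Theta}_\slot$ is empty we set $\competitiveArms_\slot = \setofArms$ so $\bestarm$ is automatically competitive. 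Hence by \Cref{lem:elimination3} this term is at most $2\numArms\slot^{1-\alpha}$.

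For the second term I would use that on $\{\bestarm \in \competitiveArms_\slot\}$ the selection rule $\arm_{\slot+1} = \arg\max_{\ell \in \competitiveArms_\slot} \Index_\ell(\slot)$ forces $\Index_\arm(\slot) \geq \Index_\bestarm(\slot)$ whenever $\arm_{\slot+1}=\arm$ (both arms compete and $\arm$ wins), so it suffices to bound $\Pr(\Index_\arm(\slot) \geq \Index_\bestarm(\slot),\ \pulls_\arm(\slot)\geq s)$. Here I invoke the standard decomposition: $\Index_\arm(\slot) \geq \Index_\bestarm(\slot)$ implies at least one of (a) $\estimateMean_\bestarm(\slot) \leq \mu_\bestarm(\theta^*) - \sqrt{2\alpha\sigma^2\log\slot/\pulls_\bestarm(\slot)}$, (b) $\estimateMean_\arm(\slot) \geq \mu_\arm(\theta^*) + \sqrt{2\alpha\sigma^2\log\slot/\pulls_\arm(\slot)}$, or (c) $\gap_\arm < 2\sqrt{2\alpha\sigma^2\log\slot/\pulls_\arm(\slot)}$, since if all three fail then $\Index_\bestarm(\slot) > \mu_\bestarm(\theta^*) \geq \mu_\arm(\theta^*) + 2\sqrt{2\alpha\sigma^2\log\slot/\pulls_\arm(\slot)} > \Index_\arm(\slot)$, a contradiction.

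The crux of the argument, and the only place the two hypotheses enter, is ruling out (c). Under $\pulls_\arm(\slot)\geq s \geq \slot/(2\numArms)$ the confidence radius satisfies $\sqrt{2\alpha\sigma^2\log\slot/\pulls_\arm(\slot)} \leq 2\sqrt{\numArms\alpha\sigma^2\log\slot/\slot}$, so the right-hand side of (c) is at most $4\sqrt{\numArms\alpha\sigma^2\log\slot/\slot}$. Since $\log\slot/\slot$ is decreasing for $\slot \geq \slot_0$ (taking $\slot_0 \geq 3$), this is at most $4\sqrt{\numArms\alpha\sigma^2\log\slot_0/\slot_0} \leq \gap_{\min} \leq \gap_\arm$ for every $\slot > \slot_0$, which contradicts (c). I expect this monotonicity-and-constants bookkeeping to be the main obstacle: one must check the precise factor coming from $s \geq \slot/(2\numArms)$ lines up with the $4\sqrt{\numArms\alpha\sigma^2\log\slot_0/\slot_0}$ threshold and that the decreasing-$\log\slot/\slot$ step is valid on the whole range $\slot > \slot_0$.

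With (c) excluded, the event $\{\Index_\arm(\slot)\geq\Index_\bestarm(\slot),\ \pulls_\arm(\slot)\geq s\}$ is contained in $(a)\cup(b)$. Each of these is a one-sided deviation of an empirical mean from its true mean by the self-normalizing radius with a random pull count ranging over $\{1,\dots,\slot\}$, and the imposed constraint $\pulls_\arm(\slot)\geq s$ only restricts the summands, so applying the union-over-pulls trick of \Cref{lem:UnionBoundTrickInt} together with \Cref{lem:meanOutside} bounds $\Pr(a)$ and $\Pr(b)$ by $2\slot^{1-\alpha}$ each. Thus $\Pr(\Index_\arm(\slot)\geq\Index_\bestarm(\slot),\ \pulls_\arm(\slot)\geq s) \leq 4\slot^{1-\alpha}$, and adding the $2\numArms\slot^{1-\alpha}$ from the first term gives the claimed $(2\numArms+4)\slot^{1-\alpha}$ for all $\slot > \slot_0$.
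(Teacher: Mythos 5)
Your proposal is correct and follows essentially the same route as the paper: split off the event that $\bestarm$ is \empCompetitiveNoT-eliminated (bounded by \Cref{lem:elimination3} as $2\numArms\slot^{1-\alpha}$), then reduce to the index comparison $\Index_\arm(\slot)\geq\Index_\bestarm(\slot)$ and use $s\geq\slot/(2\numArms)$ together with the $\slot_0$ hypothesis to make $\gap_\arm$ dominate twice the confidence radius, yielding $4\slot^{1-\alpha}$ for that piece. The only (cosmetic) difference is that you organize the index comparison as the symmetric three-event split with (c) excluded deterministically, whereas the paper conditions on $\{\mu_{\bestarm}\leq\Index_\bestarm(\slot)\}$ and absorbs the gap condition into the Hoeffding exponent; both give the same constant $(2\numArms+4)$.
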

\begin{proof}
The probability that arm $\arm$ is pulled at step $t+1$, given it has been pulled $s$ times can be bounded as follows: 
\begin{align}
    \Pr(\arm_{\slot+1} = \arm , \pulls_\arm(\slot) \geq s) &= \Pr(\Index_\arm(\slot) = \max_{\arm' \in \competitiveArms_\slot} \Index_{\arm'}(\slot) , \pulls_\arm(\slot) \geq s) \nonumber \\
    &\leq \Pr(E_1(\slot) \cup (E_1^c(\slot), \Index_\arm(\slot) > \Index_\bestarm(\slot)) , \pulls_\arm(\slot) \geq s) \nonumber \\
    &\leq \Pr(E_1(\slot) , \pulls_\arm(\slot) \geq s) + \Pr(E_1^c(\slot), \Index_\arm(\slot) > \Index_\bestarm(\slot) , \pulls_\arm \geq s) \label{unionBound1}\\
    &\leq 2 \numArms \slot^{1-\alpha} + \Pr\left(\Index_\arm(\slot) > \Index_\bestarm(\slot) , \pulls_\arm(\slot) \geq s\right)  \label{pastLemma}
\end{align}
Here, \eqref{unionBound1} follows from union bound and \eqref{pastLemma} follows from  \Cref{lem:elimination3}. 
We now bound the second term as, 
\begin{align}
    \Pr(\Index_\arm(\slot) > \Index_{\bestarm}(\slot) , \pulls_\arm(\slot) \geq s) &= \Pr\left(\Index_\arm(\slot) > \Index_{\bestarm}(\slot) , \pulls_\arm(\slot) \geq s, \mu_{\bestarm} \leq \Index_{\bestarm}(\slot) \right) + \nonumber \\
    &\quad \Pr\left(\Index_\arm(\slot) > \Index_{\bestarm}(\slot) , \pulls_\arm(\slot) \geq s | \mu_{\bestarm}> \Index_{\bestarm}(\slot) \right) \times \Pr\left(\mu_{\bestarm} > \Index_{\bestarm}(\slot) \right) \label{conditionTerm} \\
    &\leq  \Pr\left(\Index_\arm(\slot) > \Index_{\bestarm}(\slot) , \pulls_\arm(\slot) \geq s, \mu_{\bestarm}\leq \Index_{\bestarm}(\slot) \right) + \Pr\left(\mu_{\bestarm}> \Index_{\bestarm}(\slot) \right) \label{droppingTerms}\\
    &\leq \Pr\left(\Index_\arm(\slot) > \Index_{\bestarm}(\slot) , \pulls_\arm(\slot) \geq s, \mu_{\bestarm}\leq \Index_{\bestarm}(\slot) \right) + 2\slot^{1-\alpha} \label{usingHoeffdingAgain}\\
    &= \Pr\left(\Index_\arm(\slot) > \mu_{\bestarm} ,  \pulls_\arm(\slot) \geq s\right) + 2\slot^{1-\alpha} \nonumber \\
    &= \Pr\left(\hat{\mu}_\arm + \sqrt{\frac{2 \alpha \sigma^2 \log \slot}{\pulls_\arm(\slot)}} > \mu_{\bestarm} , \pulls_\arm(\slot) \geq s \right) + 2\slot^{1-\alpha} \nonumber \\
    &= \Pr\left( \hat{\mu}_\arm - \mu_\arm(\theta^*) > \gap_\arm - \sqrt{\frac{2 \alpha \sigma^2 \log \slot}{\pulls_\arm(\slot)}} , \pulls_\arm(\slot) \geq s\right) + 2\slot^{1-\alpha} \nonumber \\
    &\leq 2\slot \exp\left(-\frac{s}{2\sigma^2} \left(\gap_\arm - \sqrt{\frac{2 \alpha \sigma^2 \log \slot}{s}}\right)^2\right) + 2\slot^{1-\alpha} \label{eqn:chernoffagain}\\
    &= 2\slot^{1-\alpha}\exp\left(- \frac{s}{2\sigma^2} \left(\gap_\arm^2 - 2 \gap_\arm \sqrt{\frac{2 \alpha \sigma^2 \log \slot}{s}}\right)\right) + 2\slot^{1-\alpha} \nonumber \\
    &= 4 \slot^{1-\alpha} \quad \text{ for all  } \slot > \slot_0. \label{finalCondn}
\end{align}
Equation \eqref{conditionTerm} follows from the fact that $P(A) = P(A|B)P(B) + P(A|B^c)P(B^c)$. Inequality \eqref{droppingTerms} arrives from dropping $P(B)$ and $P(A|B^c)$ in the previous expression. We have \eqref{usingHoeffdingAgain} from \Cref{lem:meanOutside} and the fact that $\Index_\arm(\slot) = \hat{\mu}_\arm + \sqrt{\frac{2 \alpha \sigma^2 \log t}{\pulls_\arm(\slot)}}.$ Inequality \eqref{eqn:chernoffagain} follows from the Hoeffding's inequality and the term $\slot$ before the exponent in \eqref{eqn:chernoffagain} arises as the random variable,$\pulls_\arm(\slot)$, can take values between $s$ and $t$ (\Cref{lem:UnionBoundTrickInt}). Equation \eqref{finalCondn} results from the definition of $\slot_0$ and the fact that $s > \frac{\slot}{2\numArms}$.

Plugging the result of \eqref{finalCondn} in the expression \eqref{pastLemma} completes the proof of \Cref{lem:EnoughPullsAlready3}.

\end{proof}

\begin{lem}
Let $t_0$ be the minimum integer satisfying $\gap_{min} \geq 4\sqrt{\frac{K\alpha\sigma^2 \log \slot_0}{\slot_0}}$ then $\forall \slot > \numArms \slot_0$, and $\forall{k \neq k^*}$, we have, 
$$\Pr\left(\pulls_\arm(\slot) > \frac{\slot}{\numArms}\right) \leq 6 \numArms^2 \left(\frac{\slot}{\numArms}\right)^{2-\alpha}.$$
\label{lem:cantBeMax3}
\end{lem}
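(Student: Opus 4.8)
The plan is to reduce the event $\{\pulls_\arm(\slot) > \slot/\numArms\}$ to a union of ``bad pull'' events, each of which is already controlled by \Cref{lem:EnoughPullsAlready3}. The key observation is purely deterministic. Suppose arm $\arm$ is pulled more than $\slot/\numArms$ times during the first $\slot$ rounds, and let $\tau+1\le\slot$ be the \emph{last} round at which arm $\arm$ is selected. Then $\pulls_\arm(\tau)=\pulls_\arm(\slot)-1>\slot/\numArms-1$. Since $\slot>\numArms\slot_0\ge 2\numArms$ (as $\slot_0\ge 2$), this forces $\pulls_\arm(\tau)>\slot/(2\numArms)\ge \tau/(2\numArms)$, using $\tau\le\slot$. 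Moreover $\tau\ge \pulls_\arm(\tau)>\slot/\numArms-1$, so the last-pull round must satisfy $\tau\ge\lfloor \slot/\numArms\rfloor$. I would record this as the containment
\[
\left\{\pulls_\arm(\slot) > \tfrac{\slot}{\numArms}\right\} \subseteq \bigcup_{\tau = \lfloor \slot/\numArms\rfloor}^{\slot-1} \left\{\arm_{\tau+1} = \arm,\ \pulls_\arm(\tau) \ge \tfrac{\tau}{2\numArms}\right\}.
\]

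With the containment in hand, I would apply the union bound and invoke \Cref{lem:EnoughPullsAlready3} term by term. This is legitimate because every index $\tau$ in the range exceeds $\slot_0$ — which is precisely what the hypothesis $\slot>\numArms\slot_0$ (hence $\slot/\numArms>\slot_0$) buys us — and because the precondition $s\ge\tau/(2\numArms)$ of that lemma is met with $s=\pulls_\arm(\tau)$, as established above. Each term is thus at most $(2\numArms+4)\tau^{1-\alpha}$. Since $\alpha>1$ makes $\tau\mapsto\tau^{1-\alpha}$ decreasing, every summand is bounded by its left-endpoint value, and there are at most $\slot$ terms, giving
\[
\Pr\!\left(\pulls_\arm(\slot) > \tfrac{\slot}{\numArms}\right) \le \sum_{\tau = \lfloor \slot/\numArms\rfloor}^{\slot-1} (2\numArms+4)\,\tau^{1-\alpha} \le \slot\,(2\numArms+4)\Big(\tfrac{\slot}{\numArms}\Big)^{1-\alpha} = (2\numArms^2+4\numArms)\Big(\tfrac{\slot}{\numArms}\Big)^{2-\alpha},
\]
where I used $\slot=\numArms\cdot(\slot/\numArms)$. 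Finally $2\numArms^2+4\numArms\le 6\numArms^2$ for every $\numArms\ge 1$, which delivers the claimed bound.

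The main obstacle, and the only genuinely delicate part, is the deterministic reduction in the first step: one must select the \emph{right} round so that simultaneously (i) the accumulated count $\pulls_\arm(\tau)$ is at least $\tau/(2\numArms)$, matching the hypothesis of \Cref{lem:EnoughPullsAlready3}, and (ii) the index $\tau$ is at least $\approx\slot/\numArms$, which is what makes the left-endpoint term scale as $(\slot/\numArms)^{1-\alpha}$ and, after multiplying by the $\OO(\slot)$ terms, produces the target exponent $2-\alpha$. Choosing the last pull achieves both at once. If one instead only guaranteed $\tau\gtrsim\slot/(2\numArms)$, the endpoint value would pick up a factor $2^{\alpha-1}$ and the constant would no longer remain $6\numArms^2$ uniformly in $\alpha$; keeping the effective threshold at $\slot/\numArms$ (the floor being harmless thanks to the slack between $2\numArms^2+4\numArms$ and $6\numArms^2$) is the bookkeeping that must be done carefully. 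All of the probabilistic content is outsourced to \Cref{lem:EnoughPullsAlready3}, so no new concentration argument is needed here.
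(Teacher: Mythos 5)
Your proof is correct and takes essentially the same route as the paper: the paper telescopes $\Pr\left(n_k(\tau)\ge \frac{t}{K}\right)-\Pr\left(n_k(\tau-1)\ge \frac{t}{K}\right)$ over $\tau$ from $t/K$ to $t$, which is just the first-crossing version of your last-pull union bound, and both arguments reduce each term to \Cref{lem:EnoughPullsAlready3} with $s\gtrsim \tau/(2K)$ and then sum at most $t$ terms of size $(2K+4)(t/K)^{1-\alpha}$. The floor/endpoint slack you flag at the left end of the summation range is treated no more carefully in the paper's own computation, so nothing further is needed.
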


\begin{proof}
We expand $\Pr\left(\pulls_\arm(\slot) > \frac{\slot}{\numArms}\right)$ as, 
\begin{align}
    \Pr\left(\pulls_\arm(\slot) \geq \frac{\slot}{\numArms}\right) &= 
    \quad \bigg(\Pr\left( \pulls_{\arm}(\slot) \geq \frac{\slot}{\numArms} \mid \pulls_\arm(\slot - 1) \geq \frac{\slot}{\numArms} \right) \times \Pr\left( \pulls_\arm(\slot - 1) \geq \frac{\slot}{\numArms} \right)\bigg) + \nonumber \\
    &\quad \bigg(\Pr\left(\arm_\slot = \arm \mid \pulls_\arm(\slot - 1) = \frac{\slot}{\numArms} - 1\right) \times  \Pr \left(\pulls_\arm (\slot - 1) = \frac{\pulls}{\numArms} - 1\right) \bigg) \nonumber \\
    &\leq \Pr\left(\pulls_\arm(\slot - 1) \geq \frac{\slot}{\numArms}\right) + \Pr\left(\arm_\slot = \arm , \pulls_\arm(\slot - 1) = \frac{\slot}{\numArms} - 1\right) \nonumber \\
    &\leq \Pr\left(\pulls_\arm(\slot - 1) \geq \frac{\slot}{\numArms}\right) + 6 \numArms (\slot - 1)^{1-\alpha} \quad \forall (\slot - 1) > \slot_0. \label{fromPrevLemma1}
\end{align}
Here \eqref{fromPrevLemma1} follows from \Cref{lem:EnoughPullsAlready3}. 

This gives us that $\forall (\slot - 1) > \slot_0$, we have,  $$\Pr\left(\pulls_\arm(\slot) \geq \frac{\slot}{\numArms}\right) - \Pr\left(\pulls_\arm(\slot - 1) \geq \frac{\slot}{\numArms}\right) \leq 6\numArms(\slot - 1)^{1-\alpha}.$$
Now consider the summation
\begin{equation}
  \sum_{\tau = \frac{\slot}{\numArms}}^{\slot} \Pr\left(\pulls_\arm(\tau) \geq \frac{\slot}{\numArms}\right) - \Pr\left(\pulls_\arm(\tau - 1) \geq \frac{\slot}{\numArms}\right) \leq \sum_{\tau = \frac{\slot}{\numArms}}^{\slot}6 \numArms (\tau - 1)^{1-\alpha}.  \nonumber
\end{equation}
This gives us, 
\begin{equation}
    \Pr\left(\pulls_\arm(\slot) \geq \frac{\slot}{\numArms}\right) - \Pr\left(\pulls_\arm\left(\frac{\slot}{\numArms} - 1\right)\geq \frac{\slot}{\numArms}\right) \leq \sum_{\tau = \frac{\slot}{\numArms}}^{\slot}6 \numArms (\tau - 1)^{1-\alpha}. \nonumber
\end{equation}
Since $\Pr\left(\pulls_\arm\left(\frac{\slot}{\numArms} - 1\right)\geq \frac{\slot}{\numArms}\right)  = 0$, we have, 
\begin{align}
    \Pr\left(\pulls_\arm(\slot) \geq \frac{\slot}{\numArms}\right) &\leq \sum_{\tau = \frac{\slot}{\numArms}}^{\slot}6 \numArms (\tau - 1)^{1-\alpha} \nonumber \\
    &\leq 6\numArms^2 \left(\frac{\slot}{\numArms}\right)^{2 - \alpha} \quad \forall \slot > \numArms \slot_0. \nonumber
\end{align}

\end{proof}

\textbf{Proof of Theorem 2}
We bound $\E{\pulls_\arm(\slot)}$ as 
\begin{align}
\E{\pulls_\arm(\totalPulls)} &= \E{\sum_{\slot = 1}^{\totalPulls}\indicator_{\{\arm_\slot = \arm\}}} \nonumber \\
&= \sum_{\slot = 0}^{\totalPulls-1} \Pr(\arm_{\slot+1} = \arm) \nonumber \\
&= \sum_{\slot = 1}^{\numArms \slot_0} \Pr(\arm_\slot = \arm) + \sum_{\slot = \numArms \slot_0}^{\totalPulls-1} \Pr(\arm_{\slot+1} = \arm) \nonumber \\
&\leq \numArms \slot_0 +  \sum_{\slot = \numArms \slot_0}^{\totalPulls-1} \Pr(\arm_{\slot+1} = \arm , \pulls_{\arm^*}(\slot) = \max_{\arm'} \pulls_{\arm'}(\slot))  + \nonumber\\
&\quad \sum_{\slot = \numArms \slot_0}^{\totalPulls-1} \sum_{\arm' \neq \arm^*} \bigg( \Pr(\pulls_{\arm'}(\slot) = \max_{\arm''} \pulls_{\arm''}(\slot)) \times \Pr(\arm_{\slot+1} = \arm |  \pulls_{\arm'}(\slot) = \max_{\arm''} \pulls_{\arm''}(\slot))\bigg) \nonumber \\
&\leq \numArms \slot_0 + \sum_{\slot = \numArms \slot_0}^{\totalPulls-1} \Pr(\arm_{\slot+1} = \arm , \pulls_{\arm^*}(\slot) = \max_{\arm'} \pulls_{\arm'}(\slot)) + \nonumber\\
&\sum_{\slot = \numArms \slot_0}^{\totalPulls-1} \sum_{\arm' \neq \arm^*} \Pr(\pulls_{\arm'}(\slot) = \max_{\arm''} \pulls_{\arm''}(\slot)) \nonumber \\
&\leq \numArms \slot_0 + \sum_{\slot = \numArms \slot_0}^{\totalPulls - 1} 2\slot^{1-\alpha} + \sum_{\slot = \numArms \slot_0}^{\totalPulls} \sum_{\arm' \neq \arm^*} \Pr\left(\pulls_{\arm'}(\slot) \geq \frac{\slot}{\numArms}\right) \label{usingSomeLemma1}\\
&\leq  \numArms \slot_0  + \sum_{\slot = 1}^{\totalPulls} 2 \numArms \slot^{1-\alpha} + \numArms^2 (\numArms - 1) \sum_{\slot = \numArms \slot_0}^{\totalPulls} 6 \left(\frac{\slot}{\numArms}\right)^{2-\alpha}. \label{usingSomeOtherLemma1}
\end{align}
Here, \eqref{usingSomeLemma1} follows from \Cref{lem:maxIsBest} and \eqref{usingSomeOtherLemma1} follows from \Cref{lem:cantBeMax3}.

\textbf{Proof of Theorem 3}
For any sub-optimal arm $\arm \neq \arm^*$,
\begin{align}
    \E{\pulls_\arm(\totalPulls)} &\leq \sum_{\slot = 1}^{\totalPulls} \Pr(\arm_\slot = \arm) \nonumber \\
    &= \sum_{\slot = 1}^{\totalPulls} \Pr((k_t = k, E_1(\slot)) \cup (E_1^c(\slot), k_t = k)) \label{stepE11}\\
    &\leq \sum_{\slot = 1}^{\totalPulls} \Pr(E_1(\slot)) + \sum_{\slot = 1}^{\totalPulls}  \Pr(E_1^c(\slot), k_t = k) \nonumber \\
    & \leq \sum_{\slot = 1}^{\totalPulls} \Pr(E_1(\slot)) + \sum_{\slot = 1}^{\totalPulls}  \Pr(E_1^c(\slot), k_{t} = k, I_k(t-1) > I_{k^*}(t-1)) \nonumber 
\end{align}
\begin{align}
    &\leq \sum_{\slot = 1}^{\totalPulls} \Pr(E_1(\slot)) + \sum_{\slot = 0}^{\totalPulls-1} \Pr(\Index_\arm(\slot)> \Index_{\arm^*}(\slot), k_{t+1} = k) \nonumber \\
    &= \sum_{\slot = 1}^{\totalPulls} 2\numArms\slot^{1-\alpha} + \sum_{\slot = 0}^{\totalPulls-1} \Pr\left(\Index_\arm(\slot) > \Index_{\arm^*}(\slot), k_{t+1} = k \right) \label{eliminatedArmProb4} \\
    &\leq 8 \alpha \sigma^2 \frac{\log (\totalPulls)}{\gap_\arm^2} + \frac{2\alpha}{\alpha - 2} + \sum_{\slot = 1}^{\totalPulls} 2 \numArms \slot^{1-\alpha}. \label{fromAuer1}
\end{align}
Here, \eqref{eliminatedArmProb4} follows from \Cref{lem:elimination3}. We have \eqref{fromAuer1} from the analysis of UCB for the classical bandit problem. This is because the term $\sum_{\slot = 0}^{\totalPulls-1} \Pr\left(\Index_\arm(\slot) > \Index_{\arm^*}(\slot), k_{t+1} = k \right)$ counts the number of times $I_{k}(t) > I_{k^*}(t)$ and $k_{t+1} = k$, which is the exact same term counted in the analysis of the UCB algorithm \citep{auer2002finite} to bound the expected number of pulls of arm $k$. In particular, we can see from analysis of Theorem 1 in \citep{auer2002finite} (equivalently analysis of Theorem 2.1 in \cite{bubeck2012regret}) that $\Pr\left(I_{k}(t) > I_{k^*}(t), k_{t+1} = k, n_k(t) \geq \frac{8\alpha \sigma^2 \log T}{\Delta_k^2}\right) \leq 2t^{1-\alpha}$ and the event $\indicator\{k_{t+1} = k, I_{k}(t) > I_{k^*}(t), n_k(t) \leq 8\alpha\sigma^2\frac{\log T}{\Delta_k^2} \}$ can happen only at-most $8\alpha\sigma^2\frac{\log T}{\Delta_k^2}$ times. \citep{auer2002finite} does this analysis for $[0,1]$ bounded random variables (i.e., $\sigma = 1/2$) and $\alpha = 4$ in their proof of Theorem 1. The analysis is replicated in the proof of Theorem 2.1 in \citep{bubeck2012regret} for a general $\alpha$.
For further details we refer the reader to the analysis of UCB done in Theorem 2.1 of \citep{bubeck2012regret}.

\section{Regret analysis for the TS-C Algorithm}
We now present results for TS-C in the scenario where $K = 2$ and Thompson sampling is employed with Beta priors \citep{agrawal2013further}. In order to prove results for TS-C, we assume that rewards are either $0$ or $1$. The Thompson sampling algorithm with beta prior, maintains a posterior distribution on mean of arm $k$ as $Beta\left(n_k(t) \times \hat{\mu}_k(t) + 1, n_k(t) \times (1 - \hat{\mu}_k(t)) + 1 \right)$. Subsequently, it generates a sample $S_{k}(t) \sim Beta\left(n_k(t) \times \hat{\mu}_k(t) + 1, n_k(t) \times (1 - \hat{\mu}_k(t)) + 1 \right)$ for each arm $k$ and selects the arm $k_{t+1} = \argmax_{k \in \mathcal{K}} S_{k}(t)$. The TS-C algorithm with Beta prior uses this Thompson sampling procedure in its last step, i.e., $k_{t+1} = \argmax_{k \in \mathcal{C}_t} S_{k}(t)$, where $\mathcal{C}_t$ is the set of $\hat{\Theta}_t$-Competitive arms at round $t$. We show that in a 2-armed bandit problem, the regret is $\OO(1)$ if the sub-optimal arm $k$ is non-competitive and is $\OO(\log T)$ otherwise.

For the purpose of regret analysis of TS-C, we define two thresholds, a lower threshold $L_k(\theta^*)$, and an upper threshold $U_k(\theta^*)$ for arm $k\neq k^*$,
\begin{align}
U_k(\theta^*) = \mu_k(\theta^*) + \frac{\Delta_k}{3}, \hspace*{3em} L_k(\theta^*) = \mu_{k^*}(\theta^*) - \frac{\Delta_k}{3}. \label{eq:threshold}
\end{align}

Let $E^{\mu}_{i}(t)$ and $E^{S}_{i}(t)$ be the events that,
\begin{align}
E^{\mu}_k(t) &= \{\hat{\mu}_k(t) \leq U_k(\theta^*) \} \nonumber\\
E^{S}_k(t) &= \{S_k(t) \leq L_k(\theta^*) \}  \label{eq:events}.
\end{align}

To analyse the regret of TS-C, we first show that the number of times arm $k$ is pulled jointly with the event that $n_k(t-1) \geq \frac{t}{2}$ is bounded above by an $\OO(1)$ constant, which is independent of the total number of rounds $T$.

\begin{lem}
\label{lem:notPullsuboptimalifEnough}
If $\Delta_{k} \geq 3 \sqrt{\frac{ \alpha \log t_{0}}{t_{0}}}$ for some constant $t_{0}>0$, then,
\begin{align*}
    \sum_{t = 2t_0}^{T} \Pr\left(k_{t}=k, n_{k}(t-1) \geq \frac{t}{2}\right) = \OO(1)
\end{align*}
where $k \neq k^{*}$ is a sub-optimal arm.
\end{lem}

\begin{proof}
We start by bounding the probability of the pull of $k$-th arm at round $t$ as follows,
\begin{align}
\Pr\left(k_{t}=k, n_{k}(t-1) \geq \frac{t}{2}\right) \overset{(a)}{\leq} & \Pr\left(E_{1}(t), k_{t}=k, n_{k}(t-1) \geq \frac{t}{2}\right) +  \nonumber \\
& \Pr\left(\overline{E_{1}(t)}, k_{t}=k, n_{k}(t-1) \geq \frac{t}{2}\right) \nonumber\\
\overset{(b)}{\leq} & 2Kt^{1-\alpha} +  \Pr\left(\overline{E_{1}(t)}, k_{t}=k, n_{k}(t-1) \geq \frac{t}{2}\right)\nonumber\\
\overset{(c)}{\leq} & 2Kt^{1-\alpha} +  \underbrace{\Pr\left(k_{t} = k, E^{\mu}_k(t), E^{S}_k(t),n_{k}(t-1) \geq \frac{t}{2}\right)}_{\textbf{term A}} + \nonumber\\
 &\underbrace{\Pr\left(k_{t} = k, E^{\mu}_k(t), \overline{E^{S}_k(t)},n_{k}(t-1) \geq \frac{t}{2}\right)}_{\textbf{term B}}+ \nonumber \\
 &\underbrace{\Pr\left(k_{t} = k, \overline{E^{\mu}_k(t)},n_{k}(t-1) \geq \frac{t}{2}\right)}_{\textbf{term C}} 
\label{eq:cric-s} \\
\end{align}
where $(b)$, comes from \Cref{lem:elimination3}. Now we treat each term in \eqref{eq:cric-s} individually. To bound term A, we note that $\Pr\left(k_{t} = k, E^{\mu}_k(t), E^{S}_k(t),n_{k}(t-1) \geq \frac{t}{2}\right) \leq \Pr\left(k_{t} = k, E^{\mu}_k(t), E^{S}_k(t)\right)$. From the analysis in \citep{agrawal2013further} (equation 6), we see that 
$\sum_{t = 1}^{T}\Pr\left(k_{t} = k, E^{\mu}_k(t), E^{S}_k(t)\right) = \OO(1)$ as it is shown through Lemma 2 in \citep{agrawal2013further} that, \\
$\sum_{t = 1}^{T}\Pr\left(k_{t} = k, E^{\mu}_k(t), E^{S}_k(t)\right) \leq \frac{224}{\Delta_k^2} + \sum_{j = 0}^{T} \Theta\left(e^{-\frac{\Delta_k^2j}{18}} + \frac{1}{e^{\frac{\Delta_k^2 j}{36}} - 1} + \frac{9}{(j + 1)\Delta_k^2}e^{-D_k j} \right)$. \\
Here, $D_k = L_k(\theta^*) \log \frac{L_k(\theta^*)}{\mu_{k^*}(\theta^*)} + (1 - L_k(\theta^*)) \log \frac{1 - L_k(\theta^*)}{1 - \mu_{k^*}(\theta^*)}$. 
Due to this, \\ $\sum_{t = 2t_0}^{T} \Pr\left(k_{t} = k, E^{\mu}_k(t), E^{S}_k(t),n_{k}(t-1) \geq \frac{t}{2}\right) = \OO(1)$.
\vspace{2mm}

\noindent
We now bound the sum of term B from $t = 1$ to $T$ by noting that \\
$\Pr\left(k_{t} = k, E^{\mu}_k(t), \overline{E^{S}_k(t)},n_{k}(t-1) \geq \frac{t}{2}\right) \leq \Pr\left(k_{t} = k, \overline{E^{S}_k(t)}\right) $. Additionally, from Lemma 3 in \citep{agrawal2013further}, we get that
$\sum_{t = 1}^{T} \Pr\left(k_{t} = k, \overline{E^{S}_k(t)}\right) \leq \frac{1}{d(U_k(\theta^*),\mu_k(\theta^*))} + 1$, where $d(x,y) = x \log \frac{x}{y} + (1 - x) \log \frac{1 - x}{1 - y}$. As a result, we see that 
$\sum_{t = 1}^{T} \Pr\left(k_{t} = k, E^{\mu}_k(t), \overline{E^{S}_k(t)},n_{k}(t-1) \geq \frac{t}{2}\right) = \OO(1)$.

\vspace{2mm}

\noindent
Finally, for the last term C we can show that,
\begin{align}
 (C) &= \Pr\left(k_{t} = k, \overline{E^{\mu}_k(t)},n_{k}(t-1) \geq \frac{t}{2}\right) \nonumber \\
 & \leq \Pr\left(\overline{E^{\mu}_k(t)},n_{k}(t-1) \geq \frac{t}{2}\right) \nonumber \\
 &= \Pr\left(\hat{\mu}_k - \mu_k > \frac{\Delta_k}{3}, n_k(t-1) \geq \frac{t}{2}\right) \nonumber \\
 &\leq t \exp \left(- \frac{t\Delta_k^2}{9} \right) \label{eq:unbd_and_hoeffding} \\
& \leq t^{1-\alpha} \nonumber
\end{align}
Here \Cref{eq:unbd_and_hoeffding} follows from hoeffding's inequality and the union bound trick to handle random variable $n_k(t-1)$. After plugging these results in \eqref{eq:cric-s}, we get that 

\begin{align}
    \sum_{t = 2t_0}^{T} \Pr\left(k_{t}=k, n_{k}(t-1) \geq \frac{t}{2}\right) &\leq \sum_{t = 2t_0}^{T} 2Kt^{1 - \alpha} + \sum_{t = 2t_0}^{T} \Pr\left(k_{t} = k, E^{\mu}_k(t), E^{S}_k(t),n_{k}(t-1) \geq \frac{t}{2}\right) + \nonumber\\
 & \sum_{t = 2t_0}^{T} \Pr\left(k_{t} = k, E^{\mu}_k(t), \overline{E^{S}_k(t)},n_{k}(t-1) \geq \frac{t}{2}\right)+ \nonumber \\
 & \sum_{t = 2t_0}^{T} \Pr\left(k_{t} = k, \overline{E^{\mu}_k(t)},n_{k}(t-1) \geq \frac{t}{2}\right) \\
 &\leq \sum_{t = 2t_0}^{T} 2Kt^{1 - \alpha} + \OO(1) + \OO(1) + \sum_{t = 2t_0}^{T} t^{1-\alpha} \\
 &= \OO(1)
\end{align}

\end{proof}

We now show that the expected number of pulls by TS-C for a non-competitive arm is bounded above by an $\OO(1)$ constant.\\
\noindent
\textbf{Expected number of pulls by TS-C for a non-competitive arm.}
We bound $\E{\pulls_\arm(\slot)}$ as 
\begin{align}
\E{\pulls_\arm(\totalPulls)} &= \E{\sum_{\slot = 1}^{\totalPulls}\indicator_{\{\arm_\slot = \arm\}}} \nonumber \\
&= \sum_{\slot = 0}^{\totalPulls-1} \Pr(\arm_{\slot+1} = \arm) \nonumber \\
&= \sum_{\slot = 1}^{2\slot_0} \Pr(\arm_\slot = \arm) + \sum_{\slot = 2 \slot_0}^{\totalPulls-1} \Pr(\arm_{\slot+1} = \arm) \nonumber \\
&\leq 2 \slot_0 +  \sum_{\slot = 2 \slot_0}^{\totalPulls-1} \Pr\left(\arm_{\slot+1} = \arm , \pulls_{\arm^*}(\slot) \geq \frac{t}{2}\right)  + \sum_{\slot = 2 \slot_0}^{\totalPulls-1} \Pr\left(\arm_{\slot+1} = \arm , \pulls_{\arm}(\slot) \geq \frac{t}{2}\right) \\
&\leq 2 \slot_0 + \sum_{\slot = 2 \slot_0}^{\totalPulls - 1} 2\slot^{1-\alpha} + \sum_{\slot = 2 \slot_0}^{\totalPulls-1} \Pr\left(\arm_{\slot+1} = \arm , \pulls_{\arm}(\slot) \geq \frac{t}{2}\right) \label{usingSomeLemma}\\
&= \OO(1) \label{eq:lastStepTS}
\end{align}
Here, \eqref{usingSomeLemma} follows from \Cref{lem:maxIsBest} and \eqref{eq:lastStepTS} follows from \Cref{lem:notPullsuboptimalifEnough} and the fact that the sum of $2t^{1-\alpha}$ is bounded for $\alpha > 1$ and $\slot_0  = \inf \bigg\{\tau > 0: \Delta_{\text{min}},\epsilon_k \geq 3 \sqrt{\frac{ \alpha \log \tau}{\tau}} \bigg\}.$

We now show that when the sub-optimal arm $k$ is competitive, the expected pulls of arm $k$ is $\OO(\log T)$.\\

\noindent
\textbf{Expected number of pulls by TS-C for a competitive arm $k \neq k^*$.}:
For any sub-optimal arm $\arm \neq \arm^*$,
\begin{align}
    \E{\pulls_\arm(\totalPulls)} &\leq \sum_{\slot = 1}^{\totalPulls} \Pr(\arm_\slot = \arm) \nonumber \\
    &= \sum_{\slot = 1}^{\totalPulls} \Pr((k_t = k, E_1(\slot)) \cup (E_1^c(\slot), k_t = k)) \label{stepE12} \\
    &\leq \sum_{\slot = 1}^{\totalPulls} \Pr(E_1(\slot)) + \sum_{\slot = 1}^{\totalPulls}  \Pr(E_1^c(\slot), k_t = k) \nonumber \\
    & \leq \sum_{\slot = 1}^{\totalPulls} \Pr(E_1(\slot)) + \sum_{\slot = 1}^{\totalPulls}  \Pr(E_1^c(\slot), k_{t} = k, S_k(t-1) > S_{k^*}(t-1)) \nonumber 
\end{align}
\begin{align}
    &\leq \sum_{\slot = 1}^{\totalPulls} \Pr(E_1(\slot)) + \sum_{\slot = 0}^{\totalPulls-1} \Pr(S_\arm(\slot)> S_{\arm^*}(\slot), k_{t+1} = k) \nonumber \\
    &= \sum_{\slot = 1}^{\totalPulls} 2\numArms\slot^{1-\alpha} + \sum_{\slot = 0}^{\totalPulls-1} \Pr\left(S_\arm(\slot) > S_{\arm^*}(\slot), k_{t+1} = k \right) \label{eliminatedArmProb} \\
    &\leq \frac{9\log(T)}{\Delta_k^2} + \OO(1) + \sum_{\slot = 1}^{\totalPulls} 2 \numArms \slot^{1-\alpha}. \label{fromAuer} \\
    &= \OO(\log T).
\end{align}
Here, \eqref{eliminatedArmProb} follows from \Cref{lem:elimination3}. We have \eqref{fromAuer} from the analysis of Thompson Sampling for the classical bandit problem in \citep{agrawal2013further}. This arises as the term $\Pr\left(S_\arm(\slot) > S_{\arm^*}(\slot), k_{t+1} = k \right)$ counts the number of times $S_k(t) > S_{k^*}(t)$ and $k_{t+1} = k$. This is precisely the term analysed in Theorem 3 of \citep{agrawal2013further} to bound the expected pulls of sub-optimal arms by TS. 
In particular, \citep{agrawal2013further} analyzes the expected number of pull of sub-optimal arm (termed as $\E{k_i(T)}$ in their paper) by evaluating $\sum_{t = 0}^{T-1} \Pr(S_k(t) > S_{k^*}(t), k_{t+1} = k)$ and it is shown in their Section 2.1 (proof of Theorem 1 of \citep{agrawal2013further}) that $\sum_{t = 0}^{T-1} \Pr(S_k(t) > S_{k^*}(t), k_{t+1} = k) \leq \OO(1) + \frac{\log(T)}{d(x_i, y_i)}$. The term $x_i$ is equivalent to $U_k(\theta^*)$ and $y_i$ is equal to $L_k(\theta^*)$ in our notations. Moreover $d(U_k(\theta^*), L_k(\theta^*)) \leq \frac{\Delta_k^2}{9}$, giving us the desired result of \eqref{fromAuer}.

\section{Reward functions for the experiments}

As mentioned in Section 7, we use a part of the Movielens dataset ($50\%$) as the training dataset, on which the mean reward mappings from meta-users to different genres are learned. \Cref{fig:experimentFunction} represents the learned mappings on the training dataset. 

\begin{figure}
    \centering
    \includegraphics[width = 0.96\textwidth]{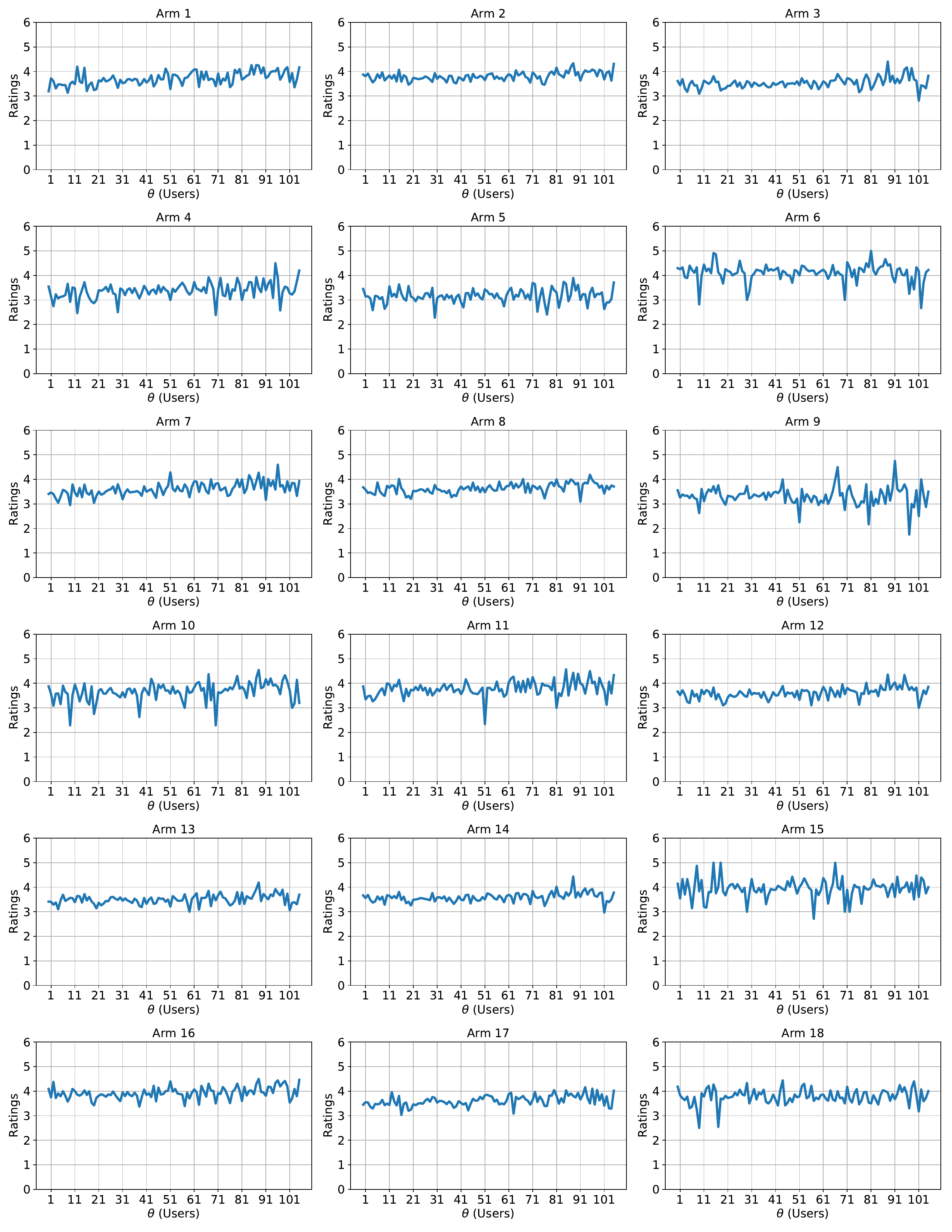}
    \caption{\sl Learned reward mappings from 106 meta-users to each of the movie genres, i.e., the $\mu_k(\theta)$ in the problem setup, with $\theta$ representing different meta-users and $k$(arm) representing different movie genres.}
    \label{fig:experimentFunction}
    \vspace{-0.3cm}
\end{figure}

\end{document}